\def\UrlSpecials{\do\~{\kern -.15em\lower .7ex\hbox{~}\kern .04em}} \catcode`~=13 
\newcommand{\nn}{\nonumber}
\newcommand{\calD}{\mathcal{D}}
\newcommand{\calK}{\mathcal{K}}
\newcommand{\calM}{\mathcal{M}}
\newcommand{\calN}{\mathcal{N}}
\newcommand{\calP}{\mathcal{P}}
\newcommand{\calS}{\mathcal{S}}
\newcommand{\calX}{\mathcal{X}}
\newcommand{\ba}{\mathbf{a}}
\newcommand{\bA}{\mathbf{A}}
\newcommand{\bb}{\mathbf{b}}
\newcommand{\bB}{\mathbf{B}}
\newcommand{\bg}{\mathbf{g}}
\newcommand{\bI}{\mathbf{I}}
\newcommand{\bs}{\mathbf{s}}
\newcommand{\bt}{\mathbf{t}}
\newcommand{\bu}{\mathbf{u}}
\newcommand{\bv}{\mathbf{v}}
\newcommand{\bV}{\mathbf{V}}
\newcommand{\bW}{\mathbf{W}}
\newcommand{\bx}{\mathbf{x}}
\newcommand{\bX}{\mathbf{X}}
\newcommand{\by}{\mathbf{y}}
\newcommand{\bz}{\mathbf{z}}
\newcommand{\rmd}{\mathrm{d}}
\newcommand{\rmH}{\mathrm{H}}
\newcommand{\bbE}{\mathbb{E}}
\newcommand{\bbP}{\mathbb{P}}
\newcommand{\bbR}{\mathbb{R}}
\DeclareMathAlphabet{\mathbsf}{OT1}{cmss}{bx}{n}
\DeclareMathAlphabet{\mathssf}{OT1}{cmss}{m}{sl}
\DeclareSymbolFont{bsfletters}{OT1}{cmss}{bx}{n}  
\DeclareSymbolFont{ssfletters}{OT1}{cmss}{m}{n}
\DeclareMathSymbol{\bsfGamma}{0}{bsfletters}{'000}
\DeclareMathSymbol{\ssfGamma}{0}{ssfletters}{'000}
\DeclareMathSymbol{\bsfDelta}{0}{bsfletters}{'001}
\DeclareMathSymbol{\ssfDelta}{0}{ssfletters}{'001}
\DeclareMathSymbol{\bsfTheta}{0}{bsfletters}{'002}
\DeclareMathSymbol{\ssfTheta}{0}{ssfletters}{'002}
\DeclareMathSymbol{\bsfLambda}{0}{bsfletters}{'003}
\DeclareMathSymbol{\ssfLambda}{0}{ssfletters}{'003}
\DeclareMathSymbol{\bsfXi}{0}{bsfletters}{'004}
\DeclareMathSymbol{\ssfXi}{0}{ssfletters}{'004}
\DeclareMathSymbol{\bsfPi}{0}{bsfletters}{'005}
\DeclareMathSymbol{\ssfPi}{0}{ssfletters}{'005}
\DeclareMathSymbol{\bsfSigma}{0}{bsfletters}{'006}
\DeclareMathSymbol{\ssfSigma}{0}{ssfletters}{'006}
\DeclareMathSymbol{\bsfUpsilon}{0}{bsfletters}{'007}
\DeclareMathSymbol{\ssfUpsilon}{0}{ssfletters}{'007}
\DeclareMathSymbol{\bsfPhi}{0}{bsfletters}{'010}
\DeclareMathSymbol{\ssfPhi}{0}{ssfletters}{'010}
\DeclareMathSymbol{\bsfPsi}{0}{bsfletters}{'011}
\DeclareMathSymbol{\ssfPsi}{0}{ssfletters}{'011}
\DeclareMathSymbol{\bsfOmega}{0}{bsfletters}{'012}
\DeclareMathSymbol{\ssfOmega}{0}{ssfletters}{'012}
\newcommand{\balpha}{\bm{\alpha}}
\newcommand{\btheta}{\bm{\theta}}
\newcommand{\bSigma	}{\bm{\Sigma}}
\newcommand{\floor}[1]{\lfloor{#1}\rfloor}
\newcommand{\bzero}{\mathbf{0}}
\theoremstyle{plain}
\newtheorem{theorem}{Theorem} 
\newtheorem{assumption}{Assumption} 
\newtheorem{lemma}{Lemma}
\newtheorem{corollary}{Corollary}
\newtheorem{definition}{Definition} 
\newtheorem{remark}{Remark}
\newcommand{\qednew}{\nobreak \ifvmode \relax \else
      \ifdim\lastskip<1.5em \hskip-\lastskip
      \hskip1.5em plus0em minus0.5em \fi \nobreak
      \vrule height0.75em width0.5em depth0.25em\fi}
\newcommand\sign{\mathrm{sign}}
\newcommand{\funcP}{P_{\rm LEP}}
\newcommand{\funcM}{M_{\rm LEP}}
\begin{document}
\title{The Generalized Lasso with Nonlinear \\ Observations and Generative Priors}

\author{%
Zhaoqiang Liu \\
    National University of Singapore\\
  \texttt{dcslizha@nus.edu.sg} \\
  \And
  Jonathan Scarlett \\
  National University of Singapore\\
  \texttt{scarlett@comp.nus.edu.sg} \\\
}

\maketitle

\begin{abstract}
    In this paper, we study the problem of signal estimation from noisy non-linear measurements when the unknown $n$-dimensional signal is in the range of an $L$-Lipschitz continuous generative model with bounded $k$-dimensional inputs. We make the assumption of sub-Gaussian measurements, which is satisfied by a wide range of measurement models, such as linear, logistic, 1-bit, and other quantized models.  In addition, we consider the impact of adversarial corruptions on these measurements.  Our analysis is based on a generalized Lasso approach (Plan and Vershynin, 2016). We first provide a non-uniform recovery guarantee, which states that under i.i.d.~Gaussian measurements, roughly $O\left(\frac{k}{\epsilon^2}\log L\right)$ samples suffice for recovery with an $\ell_2$-error of $\epsilon$, and that this scheme is robust to adversarial noise.  Then, we apply this result to neural network generative models, and discuss various extensions to other models and non-i.i.d.~measurements. Moreover, we show that our result can be extended to the uniform recovery guarantee under the assumption of a so-called local embedding property, which is satisfied by the 1-bit and censored Tobit models.  
\end{abstract}

\vspace*{-2ex}
\section{Introduction}\label{sec:intro}
\vspace*{-1ex}

In standard compressive sensing (CS)~\cite{Fou13,wainwright2019high}, one considers a linear observation model of the form
\begin{equation}
 y_i = \langle \ba_i, \bx^*\rangle + \epsilon_i, \quad i = 1,2,\ldots,m,
\end{equation}
where $\bx^* \in \bbR^n$ is an unknown $k$-sparse signal vector, $\ba_i \in \bbR^n$ is the $i$-th measurement vector, and $\epsilon_i \in \bbR$ is the noise term. The goal is to accurately recover $\bx^*$ given $\bA$ and $\by$, where $\bA \in \bbR^{m \times n}$ is the measurement matrix whose $i$-th row is $\ba_i^T$, and $\by \in \bbR^m$ is the vector of observations. To obtain an estimate of $\bx^*$, a natural idea is to minimize the $\ell_2$ loss subject to a structural constraint:
\begin{equation}\label{eq:gen_lasso_origin}
 \mathrm{minimize} \text{ } \|\bA\bx - \by\|_2  \quad \text{subject to} \quad \bx \in \calK,
\end{equation}
where $\calK$ captures the structure of $\bx^*$; this may be set to be the set of all $k$-sparse vectors in $\bbR^n$, or for computational reasons, may instead be the scaled $\ell_1$-ball, giving rise to the constrained Lasso \cite{tibshirani1996regression}. We refer to \eqref{eq:gen_lasso_origin} as the $\calK$-Lasso (with observations $\by$ and measurement matrix $\bA$).

Despite the far-reaching utility of standard CS, in many real-world applications the assumption of a linear model is too restrictive. To address this problem, the semi-parametric {\em single index model (SIM)} is considered in various papers~\cite{plan2016generalized,plan2017high,genzel2016high}:
\begin{equation}
 y_i = f(\langle \ba_i, \bx^*\rangle), \quad i = 1,2,\ldots,m, \label{eq:yi}
\end{equation}
where $f \,:\, \bbR \rightarrow \bbR$ is an unknown (possibly random) function that is independent of $\ba_i$. In general, $f$ plays the role of a nonlinearity, and we aim to estimate the signal $\bx^*$ despite this unknown nonlinearity. Note that the norm of $\bx^*$ is sacrificed in SIM, since it may be absorbed into the unknown function $f$. Hence, for simplicity of presentation, we assume that $\bx^*$ is a unit vector in $\bbR^n$. In this paper, similar to that in~\cite{plan2017high}, we make the assumption that the random variables $y_i$ are sub-Gaussian; see Section \ref{sec:example} for several examples.
In addition, to further strengthen the robustness guarantees, we also allow for adversarial noise. That is, we consider the case of corrupted observations $\tilde{\by}$ that can be produced from $\by$ in an arbitrary manner (possibly depending on $\bA$) subject to an $\ell_2$-norm constraint.

Motivated by the tremendous success of deep generative models in abundance of real applications~\cite{Fos19}, a new perspective of CS has recently emerged, in which the assumption that the underlying signal can be well-modeled by a (deep) generative model replaces the common sparsity assumption~\cite{bora2017compressed}. In addition to the theoretical developments, existing works have presented impressive numerical results for CS with generative models, with large reductions (e.g., a factor of 5 to 10) in the required number of measurements compared to sparsity-based methods \cite{bora2017compressed}.

\vspace*{-1ex}
\subsection{Related Work} \label{sec:related}
\vspace*{-1ex}

In this subsection, we provide a summary of some relevant existing works. These works can roughly be divided into (i) CS with generative models, and (ii) SIM without generative models. 

{\bf CS with generative models:} Bora {\em et al.}~\cite{bora2017compressed} show that for an $L$-Lipschitz continuous generative model with bounded $k$-dimensional inputs, roughly $O(k \log L)$ random Gaussian linear measurements suffice for an accurate recovery. 
The analysis in~\cite{bora2017compressed} is based on the $\calK$-Lasso~\eqref{eq:gen_lasso_origin}, as well as showing that a natural counterpart to the Restricted Eigenvalue Condition (REC), termed the Set-REC (S-REC), is satisfied by Gaussian measurement matrices. Extensive experimental results for the $\calK$-Lasso have been presented in~\cite{bora2017compressed} in the case of linear measurements. Follow-up works of~\cite{bora2017compressed} provide certain additional algorithmic guarantees~\cite{Sha18,peng2020solving,Dha18,Han18}, as well as information-theoretic lower bounds \cite{kamath2020power,liu2020information}.

1-bit CS with generative models has been studied in various recent works~\cite{qiu2020robust,liu2020sample}. In~\cite{qiu2020robust}, the authors study robust 1-bit recovery for $d$-layer, $w$-width ReLU neural network generative models, and the dithering technique~\cite{knudson2016one,xu2020quantized,jacques2017time,dirksen2018non} is used to enable the estimation of the norm. It is shown that roughly $O\big(\frac{kd}{\epsilon^2}\log w\big)$ sub-exponential measurements guarantee the uniform recovery\footnote{A {\em uniform recovery} guarantee is one in which some measurement matrix $\bA$ simultaneously ensures the recovery of all $\bx^*$ in the set of interest.  In contrast, {\em non-uniform recovery} only requires a randomly-drawn $\bA$ to succeed with high probability for fixed $\bx^*$. } of any signal in the range of the generative model up to an $\ell_2$-error of $\epsilon$. These results do not apply to general non-linear measurement models, and the authors only consider ReLU neural networks with no offsets, rather than general deep generative models. In addition, the algorithm analyzed is different to the $\calK$-Lasso. 

The authors of~\cite{liu2020sample} prove that the so-called Binary $\epsilon$-Stable Embedding property (B$\epsilon$SE) holds for 1-bit compressive sensing with $L$-Lipschitz continuous generative models with $O\big(\frac{k}{\epsilon^2}\log \frac{L}{\epsilon^2}\big)$ Gaussian measurements. However, these theoretical results are information-theoretic in nature, and computationally efficient algorithms are not considered.  Hence, when specialized to the 1-bit setting, our results complement those of \cite{liu2020sample} by considering the $\calK$-Lasso, which can be approximated efficiently via gradient descent \cite{bora2017compressed}.

The work~\cite{wei2019statistical} is perhaps closest to our work, 
and considers the estimation of a signal in the range of an $L$-Lipschitz continuous generative model from non-linear and heavy-tailed measurements. By considering estimators via score functions based on the first and second order Stein's identity, it is shown that roughly $O\big(\frac{k}{\epsilon^2}\log L\big)$ measurements suffice for achieving non-uniform recovery with $\ell_2$-error at most $\epsilon$. The authors make the assumption that the nonlinearity $f$ is {\em differentiable}, which fails to hold in several cases of interest (e.g., 1-bit and other quantized measurements).  In addition, the estimators based on the above-mentioned score functions differ significantly from the $\calK$-Lasso.

{\bf SIM without generative models: } The authors of~\cite{plan2017high} consider SIM and a measurement matrix with i.i.d.~standard Gaussian entries. The analysis is based on the estimates of both the global and local Gaussian mean width (GMW) of the set of structured signals $\calK$, which can be used to understand its effective dimension. The work~\cite{plan2016generalized} generalizes the results in~\cite{plan2017high} to allow $\ba_i \sim \calN(\mathbf{0},\bSigma)$ with an unknown covariance matrix $\bSigma$, derives tighter results when specialized to the linear model. In these papers, $\calK$ is assumed to be star-shaped\footnote{A set $\calK$ is star-shaped if $\lambda \calK \subseteq \calK$ whenever $0 \le \lambda \le 1$.} or convex, which may not be satisfied for the range of general Lipschitz continuous generative models. In addition, without further assumptions on the signal $\bx^*$, the structured set $\calK$, or the measurement model, the recovery error bound exhibits $m^{-\frac{1}{4}}$ scaling, which is weaker than the typical $m^{-\frac{1}{2}}$ scaling.  In each of these papers, only non-uniform recovery guarantees are provided. See the table in Appendix \ref{app:table} for a more detailed overview.

Further follow-up works of~\cite{plan2017high, plan2016generalized} include \cite{genzel2016high} and \cite{oymak2017fast}. In~\cite{genzel2016high}, the results for the $\calK$-Lasso are extended to a fairly large class of convex loss functions with the assumption that $\calK$ is convex. The authors of~\cite{oymak2017fast} develop a framework for characterizing time-data tradeoffs for various algorithms used to recover a structured signal from nonlinear observations.

For high-dimensional SIM with heavy-tailed elliptical symmetric measurements, \cite{goldstein2018structured, wei2018structured} propose thresholded least square estimators to attain similar performance guarantees to those for the Gaussian case. Thresholded score function estimators via Stein's identity are proposed in~\cite{yang2017stein, yang2019misspecified}, with the purpose of obtaining a consistent estimator for general non-Gaussian measurements. While treating heavy-tailed measurements, their methods depend heavily on the chosen basis, and appear difficult to generalize beyond sparse and low-rank signals.
 
Sharp error bounds (including constants) for generalized Lasso problems were provided in \cite{thrampoulidis2014simple, thrampoulidis2015lasso, oymak2013squared, oymak2013simple}.  Our focus is on scaling laws, and we leave refined studies of this kind for future work.

A table comparing our results with the most relevant existing works is given in Appendix \ref{app:table}.

\vspace*{-1ex}
\subsection{Contributions}\label{sec:contributions}
\vspace*{-1ex}

In this paper, we provide recovery guarantees for the $\calK$-Lasso with non-linear and corrupted observations under a generative prior. Our main results are outlined as follows: 
\begin{itemize}[leftmargin=5ex,itemsep=0ex,topsep=0.25ex]
    \item In Section~\ref{sec:main}, we characterize the number of measurements sufficient to attain a non-uniform and accurate recovery of an underlying signal in the range of a Lipschitz continuous generative model.   In Section~\ref{sec:nn}, we specialize this result to neural network generative models. 
    \item In Section~\ref{sec:extensions}, we discuss several variations or extensions of our main result, including an unknown covariance matrix for the random measurement vectors, relaxing the norm restriction for the underlying signal, and considering bounded $k$-sparse vectors. 
    \item  In Section~\ref{sec:uniform}, we provide uniform recovery guarantees under the assumption of a local embedding property, which holds for various models of interest.
\end{itemize}

\vspace*{-1ex}
\subsection{Notation}\label{sec:notations}
\vspace*{-1ex}

We use upper and lower case boldface letters to denote matrices and vectors respectively. We write $[N]=\{1,2,\cdots,N\}$ for a positive integer $N$. A {\em generative model} is a function $G \,:\, \calD\to \bbR^n$, with latent dimension $k$, ambient dimension $n$, and input domain $\calD \subseteq \bbR^k$. For a set $S \subseteq \bbR^k$ and a generative model $G \,:\,\bbR^k \to \bbR^n$, we write $G(S) = \{ G(\bz) \,:\, \bz \in S  \}$. We use $\|\bX\|_{2 \to 2}$ to denote the spectral norm of a matrix $\bX$. We define the $\ell_q$-ball $B_q^k(r):=\{\bz \in \bbR^k: \|\bz\|_q \le r\}$ for $q \in [0,+\infty]$, and we use $B_q^k$ to abbreviate $B_q^k(1)$. $\calS^{n-1} := \{\bx \in \bbR^n: \|\bx\|_2=1\}$ represents the unit sphere in $\bbR^n$.  The symbols $C,C',C'',c,c'$ are absolute constants whose values may differ from line to line.

\vspace*{-1ex}
\section{Problem Setup}\label{sec:prob_setup_examples}
\vspace*{-1ex}

In this section, we formally introduce the problem, and overview the main assumptions that we adopt. In addition, we provide examples of measurement models satisfying our assumptions. 

Before proceeding, we state the following standard definition.

\begin{definition} \label{def:subg}
 A random variable $X$ is said to be sub-Gaussian if there exists a positive constant $C$ such that $\left(\mathbb{E}\left[|X|^{p}\right]\right)^{1/p} \leq C  \sqrt{p}$ for all $p\geq 1$.  The sub-Gaussian norm of a sub-Gaussian random variable $X$ is defined as $\|X\|_{\psi_2}:=\sup_{p\ge 1} p^{-1/2}\left(\mathbb{E}\left[|X|^{p}\right]\right)^{1/p}$. 
\end{definition}

\vspace*{-1ex}
\subsection{Setup and Main Assumptions} \label{sec:prob_setup}
\vspace*{-1ex}

Recall that the (uncorrupted) measurement model is given in \eqref{eq:yi}, where the function $f(\cdot)$ may be random (but independent of $\bA$).  Except where stated otherwise, we make the following assumptions:
\begin{itemize}[leftmargin=5ex,itemsep=0ex,topsep=0.25ex]
 \item The measurement matrix $\bA$ has i.i.d.~standard Gaussian entries, i.e., $\ba_i \overset{i.i.d.}{\sim} \calN(\mathbf{0},\bI_n)$.
 \item The scaled vector $\mu \bx^*$ lies in a set of structured signals $\calK$, where $\mu$ is a fixed parameter depending on $f$ specified below. We focus on the case that $\calK = \mathrm{Range}(G)$ for some $L$-Lipschitz continuous generative model $G \,:\, B_2^k(r) \rightarrow \bbR^n$ (e.g., see \cite{bora2017compressed}).
 \item Similarly to \cite{plan2017high,genzel2018mismatch,sun2019recovery}, we assume that each $y_i$ is (unconditionally) sub-Gaussian. 
 \item In addition to any random noise in $f$, we allow for adversarial noise.  In this case, instead of observing $\by$ directly, we only assume access to $\tilde{\by} = [\tilde{y}_1,\ldots,\tilde{y}_m]^T \in \bbR^m$ satisfying
 \begin{equation}
    \frac{1}{\sqrt{m}} \|\tilde{\by}-\by\|_2 \le \tau
 \end{equation}
 for some parameter $\tau \ge 0$.  Note that the corruptions of $\by$ yielding $\tilde{\by}$ may depend on $\bA$.
 \item To derive an estimate of $\bx^*$ (up to constant scaling), we seek $\hat{\bx}$ minimizing $\|\tilde{\by} - \bA \bx\|_2$ over $\calK$:
 \begin{equation}\label{eq:gen_lasso}
  \hat{\bx} = \arg \min_{\bx \in \calK} \|\tilde{\by} -\bA\bx\|_2.
 \end{equation}
Recall that we refer to this generalized Lasso as the $\calK$-Lasso (with corrupted observations $\tilde{\by}$ and measurement matrix $\bA$) \cite{plan2016generalized,plan2017high}. It may seem counter-intuitive that the $\calK$-Lasso is provably accurate even for non-linear observations; the idea is that the nonlinearity is rather treated as noise and one may transform the non-linear observation model into a scaled linear model with an unconventional noise term~\cite{plan2016generalized}. 
 \item Let $g \sim \calN(0,1)$ be a standard normal random variable. To analyze the recovery performance as a function of the nonlinearity $f$, we use the following parameters, which play a key role:
 \begin{itemize}
  \item The mean term, denoted $\mu := \bbE [ f(g) g]$;
  \item The sub-Gaussian norm of $f(g)$ (i.e., of any given $y_i$), denoted $\psi := \|f(g)\|_{\psi_2}$. 
 \end{itemize}
\end{itemize}

\begin{remark}
    For $\calK = {\rm Range}(G)$, the estimator \eqref{eq:gen_lasso} was considered in \cite{bora2017compressed}, focusing on linear observations.  It was noted that although finding an exact solution may be difficult due to the (typical) non-convexity of $G$, gradient methods for finding an approximate solution are effective in practice.
\end{remark}

\vspace*{-1ex}
\subsection{Examples of Measurement Models}\label{sec:example}
\vspace*{-1ex}

When $f$ does not grow faster than linearly, i.e., $|f(x)| \le a+ b|x|$ for some scalars $a$ and $b$, $y_i$ will be sub-Gaussian. We may also consider various nonlinear models that give sub-Gaussian observations. For example, the censored Tobit model,  $f(x) = \max \{x,0\}$, gives $\mu = \frac{1}{2}$ and $\psi \le C$.

In addition, by setting $a = 1$ and $b=0$ in the above-mentioned condition $|f(x)| \le a+ b|x|$, we observe that measurement models with each output selected from $\{-1,1\}$, i.e., 1-bit measurements, lead to sub-Gaussian $y_i$. For example, for 1-bit observations with random bit flips, we set $f(x) = \xi \cdot \mathrm{sign}(x)$, where $\xi$ is an independent $\pm 1$-valued random variable with $\bbP(\xi = -1) = p < \frac{1}{2}$. In this case, we have $\mu = (1-2p)\sqrt{\frac{2}{\pi}}$ and $\psi = 1$ \cite{genzel2016high}.

We may also consider additive noise before 1-bit quantization, i.e., $f(x) := \mathrm{sign}(x + z)$, where $z$ is an independent noise term. Different forms of noise lead to distinct binary statistical models. For example, if $z$ is Gaussian, this corresponds to the probit model. On the other hand, if $z$ is logit noise, this recovers the logistic regression model. 
More generally, we may consider non-binary quantization schemes, such as uniform (mid-riser) quantization: For some $\Delta >0$, $f(x) = \Delta \left(\floor{\frac{x}{\Delta}}+ \frac{1}{2}\right)$. It is easy to see that $|f(x)| \le |x| + \frac{\Delta}{2}$ for all $x \in \bbR$, and thus the corresponding observations are sub-Gaussian. In addition, we have $\mu = 1$ and $\psi \le C + \frac{\Delta}{2}$ \cite{thrampoulidis2020generalized}.

It is also worth noting that certain popular models, such as phase retrieval~\cite{fienup1982phase,candes2015phase} with $f(x) = x^2$, do not satisfy the sub-Gaussianity assumption.  Such models are beyond the scope of this work, but could potentially be handled via analogous ideas to \cite{plan2016generalized}.

\vspace*{-1ex}
\section{Main Result}\label{sec:main}
\vspace*{-1ex}

In the following, we state our main theorem concerning non-uniform recovery, i.e., the vector $\bx^*$ is fixed in advance, before the sample matrix $\bA$ is drawn. 

\begin{theorem}\label{thm:generative}
 Consider any $\bx^* \in \calS^{n-1} \cap \frac{1}{\mu}\calK$ with $\calK = G(B_2^k(r))$ for some $L$-Lipschitz $G \,:\, B_2^k(r) \to \bbR^n$, along with $\by$ from the model \eqref{eq:yi} with $\ba_i \overset{i.i.d.}{\sim} \calN(\mathbf{0},\bI_n)$, and an arbitrary corrupted vector $\tilde{\by}$ with $\frac{1}{\sqrt{m}} \|\tilde{\by}-\by\|_2 \le \tau$.
 For any $\epsilon >0$, if $Lr = \Omega(\epsilon \psi n)$ and $m = \Omega\big(\frac{k}{\epsilon^2}\log \frac{Lr}{\epsilon \psi}\big)$,\footnote{Here and in subsequent results, the implied constants in these $\Omega(\cdot)$ terms are implicitly assumed to be sufficiently large.  Regarding the assumption $Lr = \Omega(\epsilon \psi n)$, we note that $d$-layer neural networks typically give $L = n^{\Theta(d)}$~\cite{bora2017compressed}, meaning this assumption is certainly satisfied for fixed $(r,\psi,\epsilon)$.} then with probability $1-e^{-\Omega(\epsilon^2 m)}$, any solution $\hat{\bx}$ to the $\calK$-Lasso~\eqref{eq:gen_lasso} satisfies 
 \begin{equation}
  \|\mu \bx^* - \hat{\bx}\|_2 \le \psi\epsilon + \tau.
 \end{equation}
\end{theorem}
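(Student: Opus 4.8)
The plan is to follow the ``nonlinearity-as-noise'' linearization of Plan and Vershynin~\cite{plan2016generalized}, combined with the covering-number (Set-REC) machinery of Bora et al.~\cite{bora2017compressed} to handle the non-convex range of $G$. First I would record the linearization: writing $\be := \tilde{\by} - \by$ for the adversarial corruption and $\bmeta := \by - \mu\bA\bx^*$ for the effective noise induced by the nonlinearity (so that $\eta_i = f(\langle\ba_i,\bx^*\rangle) - \mu\langle\ba_i,\bx^*\rangle$), optimality of $\hatbx$ for the $\calK$-Lasso together with $\mu\bx^*\in\calK$ (which holds since $\bx^*\in\frac{1}{\mu}\calK$) gives, after expanding $\|\tilde{\by} - \bA\hatbx\|_2^2\le\|\tilde{\by} - \bA\mu\bx^*\|_2^2$ and setting $\bh := \hatbx - \mu\bx^*$, the basic inequality
\begin{equation}
\|\bA\bh\|_2^2\le2\langle\be,\bA\bh\rangle + 2\langle\bmeta,\bA\bh\rangle.
\end{equation}

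The key structural fact — which I would verify first — is that $\bmeta$ is \emph{orthogonal in expectation} to each row of $\bA$: for any fixed direction $\bw$, decomposing $\bw$ along and orthogonal to $\bx^*$ and using that the corresponding Gaussian projections are independent, one gets $\bbE[\eta_i\langle\ba_i,\bw\rangle] = \langle\bw,\bx^*\rangle(\bbE[f(g)g] - \mu) = 0$ by the very definition $\mu = \bbE[f(g)g]$. Thus the multiplier process $\bw\mapsto\langle\bmeta,\bA\bw\rangle = \sum_i\eta_i\langle\ba_i,\bw\rangle$ is a sum of independent, mean-zero, sub-exponential terms with sub-exponential norm $\lesssim\|\eta_i\|_{\psi_2}\|\bw\|_2\lesssim\psi\|\bw\|_2$ (here $\mu\le c\psi$ by Cauchy--Schwarz, so $\|\eta_i\|_{\psi_2}\lesssim\psi$). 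This is the heart of why the Lasso tolerates the unknown nonlinearity.

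Next I would bound the two right-hand terms uniformly over $\bh\in\calK-\calK$. For the nonlinearity term, Bernstein's inequality controls $\frac{1}{m}\langle\bmeta,\bA\bw\rangle$ for a fixed $\bw$; taking a $\delta$-net of $\calK$ of cardinality $(CLr/\delta)^k$ (built by pushing a net of $B_2^k(r)$ through the $L$-Lipschitz map $G$), a union bound over net points gives $\frac{1}{m}\langle\bmeta,\bA\bw\rangle\lesssim\psi\|\bw\|_2\sqrt{k\log(Lr/\delta)/m}$, and the residual off-net increment is handled crudely via $\|\bmeta\|_2\lesssim\psi\sqrt{m}$ and $\|\bA\|_{2\to2}\lesssim\sqrt{n}$ (as $n\gtrsim m$). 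With $m = \Omega(\epsilon^{-2}k\log(Lr/(\epsilon\psi)))$ and $\delta$ chosen comparably small, the hypothesis $Lr = \Omega(\epsilon\psi n)$ is exactly what keeps $n\lesssim Lr/(\epsilon\psi)$, so that $\log(Lr/\delta)\asymp\log(Lr/(\epsilon\psi))$ and the $\sqrt{n}$-scaled off-net term stays $\lesssim\psi\epsilon$, yielding $\frac{1}{m}\langle\bmeta,\bA\bw\rangle\lesssim\psi\epsilon\|\bw\|_2$. The adversarial term needs only Cauchy--Schwarz: $\langle\be,\bA\bh\rangle\le\|\be\|_2\|\bA\bh\|_2\le\tau\sqrt{m}\,\|\bA\bh\|_2$. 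Finally, the lower bound is the Set-REC of~\cite{bora2017compressed}: for the same $m$, with probability $1-e^{-\Omega(\epsilon^2 m)}$ one has $\frac{1}{\sqrt{m}}\|\bA\bh\|_2\ge\gamma\|\bh\|_2 - \delta$ for all $\bh\in\calK-\calK$.

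Combining, the basic inequality becomes $\|\bA\bh\|_2^2\lesssim\tau\sqrt{m}\,\|\bA\bh\|_2 + m\psi\epsilon\|\bh\|_2$; solving this quadratic in $\|\bA\bh\|_2$ and substituting the Set-REC lower bound $\|\bA\bh\|_2\gtrsim\sqrt{m}\,\|\bh\|_2$ (valid once $\|\bh\|_2\gtrsim\delta$, the small-$\|\bh\|_2$ regime being absorbed into the $\psi\epsilon$ budget) leaves a scalar inequality of the form $\|\bh\|_2\lesssim\tau + \sqrt{\psi\epsilon\|\bh\|_2}$, which rearranges to $\|\bh\|_2\lesssim\psi\epsilon + \tau$; tracking the constants $\gamma$ carefully gives the stated bound $\psi\epsilon + \tau$. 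The main obstacle is the uniform control of the multiplier process $\langle\bmeta,\bA\bw\rangle$ over the non-convex set $\calK = G(B_2^k(r))$: unlike the convex/star-shaped setting of Plan--Vershynin we cannot invoke the Gaussian mean width directly, and must instead route everything through the Lipschitz covering number while keeping the discretization error — which is where the ambient dimension $n$, and hence the hypothesis $Lr = \Omega(\epsilon\psi n)$, enters — below the target accuracy $\psi\epsilon$.
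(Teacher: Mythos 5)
Your proposal is correct, and its overall architecture coincides with the paper's: the same basic inequality from optimality of $\hat{\bx}$ plus $\mu\bx^*\in\calK$, the same split of the right-hand side into an adversarial term and a nonlinearity term, Cauchy--Schwarz with a two-sided S-REC bound for the former, the S-REC lower bound on $\|\bA\bh\|_2$, and the same quadratic/case analysis at the end. Your ``key structural fact'' is exactly the paper's Lemma~\ref{lem:varU}: the decomposition $\langle\ba_i,\bw\rangle = \langle\bw,\bx^*\rangle g + \sqrt{1-\langle\bw,\bx^*\rangle^2}\,h$ with $h\indep g$ gives $\bbE[\eta_i\langle\ba_i,\bw\rangle]=\langle\bw,\bx^*\rangle(\bbE[f(g)g]-\mu)=0$, and $\|\eta_i\|_{\psi_2}\lesssim\psi$ via $\mu\le C\psi$, after which Bernstein's inequality controls the multiplier process at each fixed direction.

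Where you genuinely diverge is in the proof of the uniform bound on $\frac{1}{m}\langle\bmeta,\bA\bh\rangle$ (the paper's Lemma~\ref{lem:bxStarbxHat}). The paper runs a multi-scale chaining argument: a chain of nets $M_0\subseteq\cdots\subseteq M_l$ at resolutions $\delta/2^i$ with $l=\lceil\log_2 n\rceil$, telescoping increments bounded with scale-dependent deviation parameters $\xi_i \asymp \sqrt{ik+k\log\frac{Lr}{\delta}}$, and the final residual at scale $\delta/n$ killed by H\"older with the $\ell_\infty$ bound $\|\frac{1}{m}\bA^T\bmeta\|_\infty\lesssim\psi\sqrt{\frac{k\log(Lr/\delta)}{m}}$ together with $\|\bv\|_1\le\sqrt{n}\|\bv\|_2$. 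You instead use a single-scale net plus the crude off-net bound $\frac{1}{m}\|\bmeta\|_2\|\bA\|_{2\to2}\,\delta\lesssim\psi\sqrt{n/m}\,\delta$, compensating by shrinking the net resolution polynomially in $n$ (e.g.\ to $\epsilon\psi/n$) and invoking $Lr=\Omega(\epsilon\psi n)$ so that $\log\frac{Lrn}{\epsilon\psi}\lesssim\log\frac{Lr}{\epsilon\psi}$ and the covering exponent only grows by a constant factor. Both routes pay the same $\sqrt{n}$ in the residual and both absorb it through the hypothesis $Lr=\Omega(\epsilon\psi n)$, so under the theorem's assumptions they are equivalent up to constants; your version is more elementary (no chain, no scale-dependent union bound), while the paper's chaining is the more portable tool in settings where one cannot afford to refine $\delta$ by a factor of $n$ or where the $Lr$-vs-$n$ assumption is weakened. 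Two cosmetic glosses worth tightening in a full write-up: the off-net contribution is additive rather than proportional to $\|\bh\|_2$, so the claimed bound $\lesssim\psi\epsilon\|\bw\|_2$ should carry an extra $O(\epsilon^2\psi^2)$-type additive term (harmless in the final quadratic, and you do flag the small-$\|\bh\|_2$ regime); and to honestly report failure probability $e^{-\Omega(\epsilon^2 m)}$ for $m$ well above the threshold, the deviation parameter in the union bound should be taken as $\max\bigl(\sqrt{k\log\frac{Lr}{\delta}},\,\epsilon\sqrt{m}\bigr)$ --- the same bookkeeping implicit in the paper.
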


If $\tau = 0$ and $\psi > 0$ is fixed, we get an error bound on the order of $\sqrt{\frac{k\log (L r)}{m}}$ up to a logarithmic factor in $m$, in particular matching the usual $m^{-\frac{1}{2}}$ scaling.  In addition, the $k \log (Lr)$ dependence (as well as the effect of $\tau > 0$) is consistent with prior work on the linear \cite{bora2017compressed} and 1-bit \cite{liu2020sample} models, while also holding for broader non-linear models.  Additional variations are given in Section \ref{sec:extensions}, and a uniform guarantee is established in Section \ref{sec:uniform} under additional assumptions.  

\subsection{Proof of Theorem \ref{thm:generative}}

Before presenting the proof of Theorem~\ref{thm:generative}, we state some useful auxiliary results. First, we present the definition of the Set-Restricted Eigenvalue Condition (S-REC)~\cite{bora2017compressed}, which generalizes the REC. 
\begin{definition}
 Let $S \subseteq \bbR^n$. For parameters $\gamma >0$, $\delta \ge 0$, a matrix $\tilde{\bA} \in \bbR^{m \times n}$ is said to satisfy the S-REC($S,\gamma,\delta$) if, for every $\bx_1,\bx_2 \in S$, it holds that
 \begin{equation}
  \|\tilde{\bA}(\bx_1 -\bx_2)\|_2 \ge \gamma \|\bx_1 -\bx_2\|_2 -\delta.
 \end{equation}
\end{definition}

Recalling that $\bA \in \bbR^{m \times n}$ has i.i.d.~$\calN(0,1)$ entries, the following lemma from \cite{bora2017compressed} shows that $\frac{1}{\sqrt{m}}\bA$ satisfies the S-REC condition for bounded Lipschitz generative models.
    \begin{lemma}{\em (\hspace{1sp}\cite[Lemma~4.1]{bora2017compressed})}\label{lem:boraSREC}
     Fix $r > 0$, and let $G \,:\, B_2^k(r) \rightarrow \bbR^n$ be $L$-Lipschitz. For $\alpha \in (0,1)$, if $m = \Omega\left(\frac{k}{\alpha^2} \log \frac{Lr}{\delta}\right)$, 
     then a random matrix $\frac{1}{\sqrt{m}}\bA \in \bbR^{m \times n}$ with $a_{ij} \overset{i.i.d.}{\sim} \calN\left(0,1\right)$ satisfies the S-REC$(G(B_2^k(r),1-\alpha,\delta))$ with probability $1-e^{-\Omega(\alpha^2 m)}$.
\end{lemma}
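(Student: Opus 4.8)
The plan is to prove the S-REC via a covering (net) argument carried out in the low-dimensional \emph{latent} space, combined with Gaussian concentration for a single fixed direction, a union bound, and a Lipschitz continuity argument to pass from the net to the full range $G(B_2^k(r))$. First I would build the net: fix a granularity $T>0$ to be chosen, and let $M$ be a minimal $T$-net of the latent ball $B_2^k(r)$ in the $\ell_2$ metric. Standard volumetric bounds give $|M| \le (1+2r/T)^k$, so $\log|M| \le k\log(3r/T)$. Since $G$ is $L$-Lipschitz, $G(M)$ is an $LT$-net of the range: every $G(\by)$ admits a net point $G(\bz)$ with $\|G(\by)-G(\bz)\|_2 \le L\|\by-\bz\|_2 \le LT$. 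I would take $T \asymp \delta/L$, so the image net has resolution $O(\delta)$, matching the additive slack in the S-REC.

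Next I would establish an approximate two-sided isometry on the net. For any fixed vector $\bv$, the entries $\langle \ba_i,\bv\rangle$ are i.i.d.\ $\calN(0,\|\bv\|_2^2)$, so $\tfrac1m\|\bA\bv\|_2^2$ equals $\tfrac{\|\bv\|_2^2}{m}$ times a $\chi^2_m$ variable; the chi-squared tail bound gives $(1-\alpha)\|\bv\|_2 \le \tfrac{1}{\sqrt m}\|\bA\bv\|_2 \le (1+\alpha)\|\bv\|_2$ with probability at least $1-e^{-\Omega(\alpha^2 m)}$. Applying this to every difference $\bv=\bx_1-\bx_2$ with $\bx_1,\bx_2\in G(M)$ and union bounding over the $|M|^2$ pairs, the hypothesis $m=\Omega\big(\tfrac{k}{\alpha^2}\log\tfrac{Lr}{\delta}\big)$ forces $2\log|M| \le \Omega(\alpha^2 m)$, so the isometry holds simultaneously for all net pairs with probability $1-e^{-\Omega(\alpha^2 m)}$. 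This is precisely where the stated sample complexity enters.

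For the extension to arbitrary $\bx_1,\bx_2\in G(B_2^k(r))$, write $\bx_i=G(\by_i)$ and let $G(\bz_i)\in G(M)$ be the nearest net image. The triangle inequality gives
\[
\tfrac{1}{\sqrt m}\|\bA(\bx_1-\bx_2)\|_2 \ge \tfrac{1}{\sqrt m}\|\bA(G(\bz_1)-G(\bz_2))\|_2 - \sum_{i=1,2}\tfrac{1}{\sqrt m}\|\bA(\bx_i-G(\bz_i))\|_2 .
\]
I would lower bound the first term by $(1-\alpha)\|G(\bz_1)-G(\bz_2)\|_2 \ge (1-\alpha)\big(\|\bx_1-\bx_2\|_2-2LT\big)$ using the net isometry, leaving the two residual terms, each a difference of range points of norm at most $LT=O(\delta)$.

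The residual control is the main obstacle. The vectors $\bx_i-G(\bz_i)$ are \emph{not} net-to-net differences, so the per-pair isometry does not apply, and crudely bounding $\tfrac{1}{\sqrt m}\|\bA\bw\|_2$ by the operator norm $\|\tfrac{1}{\sqrt m}\bA\|_{2\to 2}\lesssim 1+\sqrt{n/m}$ would inflate the residual by $\sqrt{n/m}$ and force a finer net, introducing an unwanted $\log n$ factor. To keep the logarithmic term at $\log(Lr/\delta)$, I would use multi-scale chaining: build nested nets $M_0\subseteq M_1\subseteq\cdots$ at scales $T_j=T\,2^{-j}$, write $\bx_i-G(\bz_i)$ as a telescoping sum of consecutive net-image differences $G(\bz_i^{(j+1)})-G(\bz_i^{(j)})$ (each a genuine net pair of image-norm $O(LT_j)$), apply the \emph{upper} isometry at each scale, and sum the resulting geometric series. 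Since the per-scale union bounds cost $\log|M_j|=O(k\log(r/T_j))$ and the errors decay geometrically, the total failure probability remains $e^{-\Omega(\alpha^2 m)}$ and the residual is $O(\delta)$ under the same sample-complexity condition. Collecting terms and absorbing the $O(LT)$ and $O(\delta)$ contributions by adjusting the constants in $\alpha$ and $T$ yields $\tfrac{1}{\sqrt m}\|\bA(\bx_1-\bx_2)\|_2 \ge (1-\alpha)\|\bx_1-\bx_2\|_2-\delta$ for all $\bx_1,\bx_2\in G(B_2^k(r))$, which is exactly the S-REC$(G(B_2^k(r)),1-\alpha,\delta)$.
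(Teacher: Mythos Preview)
The paper does not supply its own proof of this lemma; it is quoted from \cite{bora2017compressed}. Your plan---a $\delta/L$-net in the latent ball, Johnson--Lindenstrauss concentration on pairwise net-image differences, and a triangle-inequality reduction to net pairs plus residuals---is the standard route and matches the argument in \cite{bora2017compressed}.

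Your chaining step for the residuals has a gap, though. You assert that with a \emph{single} isometry constant $\alpha$ the per-scale failure probability stays at $e^{-\Omega(\alpha^2 m)}$, so that summing over scales is free. But at scale $j$ the net satisfies $\log|M_j|\asymp k\bigl(j+\log(Lr/\delta)\bigr)$, and the union bound over the $|M_{j}|\cdot|M_{j+1}|$ link pairs requires $\alpha^2 m\gtrsim kj$; for fixed $\alpha$ and $m$ this fails once $j$ is large, and the sum of the per-scale failure probabilities diverges. The standard repair---which this paper itself carries out in its chaining proof of Lemma~\ref{lem:bxStarbxHat} (see the scale-dependent choice $\xi_i\asymp\sqrt{ik+k\log(Lr/\delta)}$ leading to \eqref{eq:second_term0}--\eqref{eq:second_term}, and the truncation at $l=\lceil\log_2 n\rceil$ with a separate bound for the last piece in \eqref{eq:third_term0}--\eqref{eq:third_term})---is to let the isometry constant grow mildly with the scale and to stop the chain after $O(\log n)$ levels, handling the final leftover (of norm at most $\delta/n$) via the crude operator-norm bound $\|\bA\|_{2\to 2}/\sqrt m\lesssim 1+\sqrt{n/m}$. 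With that adjustment your argument goes through and yields the stated sample complexity.
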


Using a basic two-sided concentration bound for standard Gaussian matrices ({\em cf.}, Lemma~\ref{lem:norm_pres} in Appendix~\ref{sec:preliminary}), and a simple modification of the proof of Lemma~\ref{lem:boraSREC} given in \cite{bora2017compressed}, we obtain the following lemma, which is useful for upper bounding the error corresponding to adversarial noise.

\begin{lemma}\label{lem:boraSREC_gen}
    Fix $r > 0$, and let $G \,:\, B_2^k(r) \rightarrow \bbR^n$ be $L$-Lipschitz.
 For $\alpha <1$ and $\delta >0$, if $m = \Omega\left(\frac{k}{\alpha^2} \log \frac{Lr}{\delta}\right)$, then with probability $1-e^{-\Omega(\alpha^2 m)}$, we have for all $\bx_1,\bx_2 \in G(B_2^k(r))$ that 
 \begin{equation}
  \frac{1}{\sqrt{m}}\|\bA \bx_1 - \bA \bx_2\|_2 \le (1+\alpha) \|\bx_1 -\bx_2\|_2 +\delta.
 \end{equation}
\end{lemma}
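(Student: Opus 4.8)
The plan is to mirror the proof of Lemma~\ref{lem:boraSREC} almost verbatim, replacing the one-sided lower bound on net points by the matching upper bound. Indeed, the desired inequality is precisely the ``upper-bound twin'' of the S-REC: where Lemma~\ref{lem:boraSREC} produces $\frac{1}{\sqrt{m}}\|\bA(\bx_1-\bx_2)\|_2 \ge (1-\alpha)\|\bx_1-\bx_2\|_2 - \delta$, we seek $\frac{1}{\sqrt{m}}\|\bA(\bx_1-\bx_2)\|_2 \le (1+\alpha)\|\bx_1-\bx_2\|_2 + \delta$. Both follow from the \emph{same} covering argument together with the \emph{two} sides of a single Gaussian concentration estimate (Lemma~\ref{lem:norm_pres}), so only the direction of the pointwise bound changes.

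First I would set up the net. Since $G$ is $L$-Lipschitz, a $(\delta'/L)$-net $T$ of $B_2^k(r)$ (in the $\ell_2$ metric) maps under $G$ to a $\delta'$-net $G(T)$ of $\calK = G(B_2^k(r))$; a standard volumetric bound gives $\log|T| = O\big(k\log\frac{Lr}{\delta'}\big)$, hence $\log|T|^2 = O\big(k\log\frac{Lr}{\delta'}\big)$ as well. For each fixed pair $\bz,\bz' \in G(T)$, the upper half of the two-sided concentration bound in Lemma~\ref{lem:norm_pres}, applied to the fixed vector $\bz - \bz'$, yields $\frac{1}{\sqrt{m}}\|\bA(\bz-\bz')\|_2 \le (1+\alpha)\|\bz-\bz'\|_2$ with probability $1 - e^{-\Omega(\alpha^2 m)}$. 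Taking a union bound over all at most $|T|^2$ pairs, this holds simultaneously for every pair in $G(T)$ provided $m = \Omega\big(\frac{k}{\alpha^2}\log\frac{Lr}{\delta'}\big)$, which is what drives the stated sample complexity.

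It remains to extend the estimate from the net to all of $\calK$. Given arbitrary $\bx_1,\bx_2 \in \calK$, choose net points $\bz_1,\bz_2 \in G(T)$ with $\|\bx_i-\bz_i\|_2 \le \delta'$, and write
\begin{equation}
 \tfrac{1}{\sqrt{m}}\|\bA(\bx_1-\bx_2)\|_2 \le \tfrac{1}{\sqrt{m}}\|\bA(\bz_1-\bz_2)\|_2 + \tfrac{1}{\sqrt{m}}\|\bA\|_{2\to 2}\big(\|\bx_1-\bz_1\|_2 + \|\bx_2-\bz_2\|_2\big).
\end{equation}
The first term is at most $(1+\alpha)\|\bz_1-\bz_2\|_2 \le (1+\alpha)(\|\bx_1-\bx_2\|_2 + 2\delta')$ by the net estimate, and the residual is at most $\frac{2\delta'}{\sqrt{m}}\|\bA\|_{2\to 2}$. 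Bounding the spectral norm by its standard Gaussian concentration, $\frac{1}{\sqrt{m}}\|\bA\|_{2\to 2} \le C(1+\sqrt{n/m})$ with high probability, and choosing the net scale $\delta' \asymp \delta/(1+\sqrt{n/m})$ makes the residual (together with the $2(1+\alpha)\delta'$ term) at most $\delta$, giving the claim after relabeling constants.

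The main obstacle---really the only point requiring care beyond copying the S-REC argument---is this net-to-ambient extension: the spectral norm $\frac{1}{\sqrt{m}}\|\bA\|_{2\to 2}$ can be as large as $O(\sqrt{n/m})$ when $m \ll n$, so naively the residual forces a finer net and a $\log n$ factor inside the logarithm, i.e.\ $m = \Omega\big(\frac{k}{\alpha^2}(\log\frac{Lr}{\delta} + \log n)\big)$. This matches the claimed $m = \Omega\big(\frac{k}{\alpha^2}\log\frac{Lr}{\delta}\big)$ precisely in the regime relevant to Theorem~\ref{thm:generative}, where $\log n = O\big(\log\frac{Lr}{\delta}\big)$ (e.g.\ $L = n^{\Theta(d)}$ for $d$-layer networks), so the extra term is absorbed; this is the same accounting already implicit in Lemma~\ref{lem:boraSREC}.
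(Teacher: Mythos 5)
Your argument is essentially correct and does prove the displayed inequality, but note that what it proves is a slightly \emph{weaker} statement than the lemma as written: the single-net-plus-spectral-norm route forces the net scale $\delta' \asymp \delta/(1+\sqrt{n/m})$ and hence $m = \Omega\big(\frac{k}{\alpha^2}\big(\log\frac{Lr}{\delta} + \log n\big)\big)$, whereas the lemma claims $m = \Omega\big(\frac{k}{\alpha^2}\log\frac{Lr}{\delta}\big)$ with no condition relating $n$ to $Lr/\delta$. The paper's intended proof is genuinely different on exactly this point: it is the multi-scale chaining argument from the proof of Lemma~\ref{lem:boraSREC} in \cite{bora2017compressed} (the same device written out in the proof of Lemma~\ref{lem:bxStarbxHat}), run with the \emph{upper} tail of Lemma~\ref{lem:norm_pres}. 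One takes a chain of nets $M_0 \subseteq \cdots \subseteq M_l$ at scales $\delta_i = \delta/2^i$ with $l = \lceil \log_2 n\rceil$, uses level-dependent distortion parameters $\alpha_i$ growing geometrically with $i$ (cf.\ the choice $\xi_i = C'\sqrt{ik + k\log\frac{Lr}{\delta}}$ in Appendix~\ref{sec:preliminary}), so that the union bound at level $i$ costs $O\big(k(i + \log\frac{Lr}{\delta})\big)$ against a deviation probability $e^{-\Omega(\alpha_i^2 m)}$, while the total additive error $\sum_i (1+\alpha_i)\,O(\delta/2^i)$ still sums to $O(\delta)$; only the final residual of size $\delta/2^l \le \delta/n$ is hit with a crude operator/Frobenius-norm bound, where the factor $O(\sqrt{n})$ is harmless because the residual is already $O(\delta/n)$. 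This removes the $\log n$ entirely, which is what makes the lemma hold unconditionally.

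Consequently, your closing claim---that the extra $\log n$ is ``the same accounting already implicit in Lemma~\ref{lem:boraSREC}''---is not accurate: the chaining proof of Lemma~\ref{lem:boraSREC} (and of the present lemma) incurs no such term and needs no assumption like $Lr = \Omega(\delta n)$. That said, the weakening is immaterial for this paper's applications: everywhere Lemma~\ref{lem:boraSREC_gen} is invoked (the proof of Theorem~\ref{thm:generative}, and the uniform analysis in Appendix~\ref{app:pf_uniform}), the surrounding argument already assumes $Lr = \Omega(\delta n)$, under which $\log n = O\big(\log \frac{Lr}{\delta}\big)$ and your bound coincides with the stated one. So: correct modulo a weaker sample complexity, via a different (single-net) route; to recover the lemma verbatim you would need the chaining argument rather than the spectral-norm extension.
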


Using a chaining argument similar to~\cite{bora2017compressed,liu2020sample}, we additionally establish the following technical lemma, whose proof is given in Appendix~\ref{sec:preliminary}. Note that here we only require that $\tilde{\bx} \in \calK = G(B_2^k(r))$, and we do not require that $\mu \bar{\bx} \in \calK$.

\begin{lemma}\label{lem:bxStarbxHat}
 Fix any $\bar{\bx} \in \calS^{n-1}$ and let $\bar{\by} := f(\bA \bar{\bx})$. Suppose that some $\tilde{\bx} \in \calK$ is selected depending on $\bar{\by}$ and $\bA$.\footnote{For example, we may choose $\tilde{\bx}$ to be a minimizer of the $\calK$-Lasso~\eqref{eq:gen_lasso} with inputs $\bar{\by}$ and $\bA$.} For any $\delta >0$, if $Lr = \Omega(\delta n)$ and $m = \Omega\left(k \log \frac{Lr}{\delta}\right)$, then with probability $1-e^{-\Omega\left(k \log \frac{Lr}{\delta}\right)}$, it holds that
 \begin{equation}
  \left\langle \frac{1}{m}\bA^T(\bar{\by} - \mu\bA \bar{\bx}), \tilde{\bx}-\mu \bar{\bx} \right \rangle \le O\left(\psi \sqrt{\frac{k\log \frac{Lr}{\delta}}{m}}\right)\|\tilde{\bx} - \mu\bar{\bx}\|_2 + O\left(\delta\psi\sqrt{\frac{k\log \frac{Lr}{\delta}}{m}}\right).
 \end{equation}
\end{lemma}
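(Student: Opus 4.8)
The plan is to treat $\bh := \bar{\by} - \mu\bA\bar{\bx}$ as an ``effective noise'' vector and to bound $\big\langle \tfrac{1}{m}\bA^T\bh,\, \tilde{\bx}-\mu\bar{\bx}\big\rangle$ uniformly over all $\tilde{\bx}\in\calK$; uniformity is forced because $\tilde{\bx}$ is chosen adaptively from $\bar{\by}$ and $\bA$. Writing $g_i := \langle\ba_i,\bar{\bx}\rangle\sim\calN(0,1)$ (as $\bar{\bx}\in\calS^{n-1}$), the $i$-th entry of $\bh$ is $h_i = f(g_i)-\mu g_i$. The key structural fact I would exploit is the orthogonal decomposition $\ba_i = g_i\bar{\bx} + \ba_i^\perp$, where $\ba_i^\perp := (\bI_n-\bar{\bx}\bar{\bx}^T)\ba_i$ is independent of $g_i$ (hence of $h_i$, since any extra randomness in $f$ is independent of $\bA$). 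This gives
\[
\tfrac{1}{m}\bA^T\bh = T_0\,\bar{\bx} + \bw^\perp, \qquad T_0 := \tfrac{1}{m}\sum_{i=1}^m h_i g_i, \quad \bw^\perp := \tfrac{1}{m}\sum_{i=1}^m h_i\ba_i^\perp,
\]
so that $\langle\tfrac1m\bA^T\bh,\tilde{\bx}-\mu\bar{\bx}\rangle = T_0\langle\bar{\bx},\tilde{\bx}-\mu\bar{\bx}\rangle + \langle\bw^\perp,\tilde{\bx}-\mu\bar{\bx}\rangle$. The definition $\mu=\bbE[f(g)g]$ is exactly what makes $\bbE[h_i g_i]=\bbE[f(g)g]-\mu\,\bbE[g^2]=0$, so the first term is a centered fluctuation.

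For the scalar term, I would observe that $h_i g_i = f(g_i)g_i - \mu g_i^2$ is mean-zero and sub-exponential with $\psi_1$-norm $O(\psi)$ (a product of sub-Gaussians, using $|\mu|\le\|f(g)\|_{\psi_2}\|g\|_{\psi_2}=O(\psi)$). Bernstein's inequality then gives $|T_0|\le O\big(\psi\sqrt{k\log(Lr/\delta)/m}\big)$ with probability $1-e^{-\Omega(k\log(Lr/\delta))}$, where $m=\Omega(k\log(Lr/\delta))$ keeps us in the sub-Gaussian regime of Bernstein. Since $\|\bar{\bx}\|_2=1$, this contributes $|T_0\langle\bar{\bx},\tilde{\bx}-\mu\bar{\bx}\rangle|\le O\big(\psi\sqrt{k\log(Lr/\delta)/m}\big)\,\|\tilde{\bx}-\mu\bar{\bx}\|_2$, i.e.\ exactly the first (multiplicative) term.

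The main work is the Gaussian term $\langle\bw^\perp,\tilde{\bx}-\mu\bar{\bx}\rangle$. Conditioning on $\bh$, the $\{\ba_i^\perp\}$ are i.i.d.\ Gaussians on $\bar{\bx}^\perp$ independent of $\bh$, so $\bw^\perp\sim\calN\big(0,\tfrac{\|\bh\|_2^2}{m^2}(\bI_n-\bar{\bx}\bar{\bx}^T)\big)$; equivalently $\bw^\perp=\tfrac{\|\bh\|_2}{m}\bgamma$ with $\bgamma$ standard Gaussian on $\bar{\bx}^\perp$. A standard concentration bound gives $\|\bh\|_2\le O(\psi\sqrt{m})$ with probability $1-e^{-\Omega(m)}$, so $\|\bh\|_2/m\le O(\psi/\sqrt m)$. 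Following the covering argument of \cite{bora2017compressed,liu2020sample}, I would cover $\calK=G(B_2^k(r))$ by a net $M$ at resolution $\rho$; $L$-Lipschitzness of $G$ gives $\log|M|=O(k\log(Lr/\rho))$. For each fixed $\bx_0\in M$, $\langle\bgamma,\bx_0-\mu\bar{\bx}\rangle$ is Gaussian with variance at most $\|\bx_0-\mu\bar{\bx}\|_2^2$, so a union bound over $M$ yields $|\langle\bgamma,\bx_0-\mu\bar{\bx}\rangle|\le O(\sqrt{\log|M|})\,\|\bx_0-\mu\bar{\bx}\|_2$ for all net points at once, with probability $1-e^{-\Omega(\log|M|)}$. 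Passing from $\tilde{\bx}$ to its nearest net point $\bx_0$ (with $\|\tilde{\bx}-\bx_0\|_2\le\rho$) costs a discretization error $\langle\bgamma,\tilde{\bx}-\bx_0\rangle\le\|\bgamma\|_2\,\rho\le O(\sqrt n)\,\rho$.

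The delicate point, and the step I expect to be the main obstacle, is choosing $\rho$ so that this discretization error is $O\big(\delta\psi\sqrt{k\log(Lr/\delta)/m}\big)$ while keeping $\log|M|=O(k\log(Lr/\delta))$. The naive choice $\rho=\delta$ fails: the $\sqrt n$ from $\|\bgamma\|_2$ makes the discretization term $O(\psi\sqrt{n/m}\,\delta)$, too large when $n\gg k\log(Lr/\delta)$. Instead I would take the finer resolution $\rho\asymp\delta\sqrt{k\log(Lr/\delta)/n}$, so that $\tfrac{\|\bh\|_2}{m}\cdot O(\sqrt n)\rho=O\big(\delta\psi\sqrt{k\log(Lr/\delta)/m}\big)$. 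It is precisely here that the hypothesis $Lr=\Omega(\delta n)$ is used: it guarantees $n=O(Lr/\delta)$, so $\log(Lr/\rho)=O(\log(Lr/\delta))$ even at this fine scale, and hence $\log|M|$ stays $O(k\log(Lr/\delta))$. Combining the net bound (with $\|\bx_0-\mu\bar{\bx}\|_2\le\|\tilde{\bx}-\mu\bar{\bx}\|_2+\rho$), the discretization bound, and the factor $\|\bh\|_2/m=O(\psi/\sqrt m)$ gives $\langle\bw^\perp,\tilde{\bx}-\mu\bar{\bx}\rangle\le O\big(\psi\sqrt{k\log(Lr/\delta)/m}\big)\|\tilde{\bx}-\mu\bar{\bx}\|_2 + O\big(\delta\psi\sqrt{k\log(Lr/\delta)/m}\big)$. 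Adding the $T_0$ contribution and taking a union bound over the finitely many high-probability events above yields the claim with probability $1-e^{-\Omega(k\log(Lr/\delta))}$.
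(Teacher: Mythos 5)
Your proposal is correct, but it takes a genuinely different route from the paper's. The paper never conditions to obtain an exact Gaussian process: for each fixed direction $\bu$, it shows (via the orthogonal decomposition $a_{ij} = \bar{x}_j g_i + \sqrt{1-\bar{x}_j^2}\,h$ inside Lemmas~\ref{lem:useful_dev} and~\ref{lem:varU}) that $\frac{1}{m}\langle\bu,\bA^T(\bar{\by}-\mu\bA\bar{\bx})\rangle$ is a normalized sum of i.i.d.\ centered sub-exponential variables with $\psi_1$-norm $O(\psi)$, and then runs a multi-scale chaining argument over a chain of nets $M_0\subseteq\cdots\subseteq M_l$ of $B_2^k(r)$ with $l=\lceil\log_2 n\rceil$, closing the tail of the chain with the $\ell_\infty$ bound of Lemma~\ref{lem:useful_dev} plus H\"older ($\|\bv\|_1\le\sqrt{n}\|\bv\|_2$); the hypothesis $Lr=\Omega(\delta n)$ enters there to absorb the resulting $\log n$ and $\sqrt{n}$ factors. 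You instead split off the component along $\bar{\bx}$ exactly once, writing $\frac{1}{m}\bA^T\bh = T_0\bar{\bx}+\bw^\perp$, handle the scalar $T_0$ by a single Bernstein bound (this is essentially Lemma~\ref{lem:varU} specialized to $\bu=\bar{\bx}$), and exploit the exact conditional Gaussianity $\bw^\perp\mid\bh\sim\calN\big(0,\frac{\|\bh\|_2^2}{m^2}(\bI_n-\bar{\bx}\bar{\bx}^T)\big)$ — valid because the randomness in $f$ is independent of $\bA$ and the conditional covariance depends on the conditioning only through $\|\bh\|_2$ — which collapses the supremum over $\calK$ to a Gaussian one handled by a single-scale net at resolution $\rho\asymp\delta\sqrt{k\log(Lr/\delta)/n}$ plus a crude $\|\bgamma\|_2\rho$ remainder. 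Your observation that $Lr=\Omega(\delta n)$ is what keeps $\log(Lr/\rho)=O(\log(Lr/\delta))$ at this finer scale is exactly the role the assumption plays in the paper's chaining, and your bookkeeping reproduces both the multiplicative and additive terms. What your route buys: one net instead of $\log_2 n$ chaining levels and no $\ell_\infty$/H\"older endgame; what it costs: it is tied to the rotation invariance of the Gaussian design (though the paper's lemma is Gaussian-specific as well). One detail to tighten: your event $\|\bgamma\|_2=O(\sqrt{n})$ fails with probability $e^{-\Omega(n)}$, which is dominated by the claimed $e^{-\Omega(k\log\frac{Lr}{\delta})}$ only when $n=\Omega\big(k\log\frac{Lr}{\delta}\big)$; in the complementary (unusual) regime, use instead $\|\bgamma\|_2\le\sqrt{n}+O\big(\sqrt{k\log(Lr/\delta)}\big)$, which holds with probability $1-e^{-\Omega(k\log(Lr/\delta))}$, and shrink $\rho$ by the corresponding factor — the net size remains $e^{O(k\log(Lr/\delta))}$, so nothing else in your argument changes.
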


With the above auxiliary results in place, the proof of Theorem~\ref{thm:generative} is given as follows.

\begin{proof}[Proof of Theorem~\ref{thm:generative}]
Because $\hat{\bx}$ is a solution to the $\calK$-Lasso and $\mu \bx^* \in \calK$, we have 
\begin{equation}
 \|\tilde{\by} - \mu\bA \bx^*\|_2^2  \ge \|\tilde{\by} - \bA \hat{\bx}\|_2^2 = \left\|(\tilde{\by} - \mu\bA \bx^*) - \bA\left(\hat{\bx}-\mu \bx^*\right)\right\|_2^2,
\end{equation}
and expanding the square and diving by $m$ gives 
\begin{align}
 \frac{1}{m}\|\bA(\hat{\bx}-\mu\bx^*)\|_2^2 
    &\le \frac{2}{m} \langle \tilde{\by} -\mu\bA \bx^*, \bA( \hat{\bx}- \mu \bx^*) \rangle \\
    &= \frac{2}{m}\langle \bA^T (\tilde{\by} -\mu\bA\bx^*),\hat{\bx}- \mu\bx^* \rangle. \label{eq:basic_genLasso}
\end{align}
We aim to derive a suitable lower bound on $\frac{1}{m}\|\bA(\hat{\bx}-\mu\bx^*)\|_2^2$ and an upper bound on $\frac{2}{m}\langle \bA^T (\tilde{\by} -\mu\bA\bx^*),\hat{\bx}- \mu\bx^* \rangle$.  For any $\delta \in (0,1)$ satisfying $Lr = \Omega(\delta n)$ (to be verified later), setting $\alpha  = \frac{1}{2}$ in Lemma~\ref{lem:boraSREC}, we have that if $m = \Omega\left(k \log \frac{Lr}{\delta}\right)$, then with probability $1-e^{-\Omega(m)}$, 
\begin{equation}
 \left\|\frac{1}{\sqrt{m}} \bA (\hat{\bx}-\mu\bx^*)\right\|_2 \ge \frac{1}{2}\|\hat{\bx}-\mu\bx^*\|_2 -\delta. 
\end{equation}
Taking the square on both sides and re-arranging, we obtain 
\begin{equation}\label{eq:lb_lhs}
 \frac{1}{4}\|\hat{\bx}-\mu\bx^*\|_2^2 \le  \frac{1}{m}\left\| \bA (\hat{\bx}-\mu\bx^*)\right\|_2^2 + \delta \|\hat{\bx}-\mu\bx^*\|_2. 
\end{equation}
Recalling that $\by = f(\bA\bx^*)$, we also have
\begin{equation}\label{eq:main_twoterms}
 \frac{1}{m}\langle \bA^T (\tilde{\by} -\mu\bA\bx^*),\hat{\bx}- \mu\bx^* \rangle = \frac{1}{m}\langle \bA^T (\tilde{\by} -\by),\hat{\bx}- \mu\bx^* \rangle + \frac{1}{m}\langle \bA^T (\by -\mu\bA\bx^*),\hat{\bx}- \mu\bx^* \rangle.
\end{equation}
To bound the first term, note that using $\alpha = \frac{1}{2}$ in Lemma~\ref{lem:boraSREC_gen}, we have with probability $1-e^{-\Omega(m)}$ that 
 \begin{equation}
  \left\|\frac{1}{\sqrt{m}} \bA (\hat{\bx} - \mu\bx^*)\right\|_2 \le O(\|\hat{\bx} - \mu\bx^*\|_2 + \delta). \label{eq:l2bound}
 \end{equation}
Therefore, the following holds with probability $1-e^{-\Omega(m)}$: 
    \begin{align}
     \left\langle \frac{1}{m} \bA^T (\by - \tilde{\by}), \hat{\bx} - \mu\bx^* \right\rangle 
    &= \left\langle \frac{1}{\sqrt m} (\by - \tilde{\by}),  \frac{1}{\sqrt m}  \bA(\hat{\bx} - \mu\bx^*) \right\rangle \\
    & \le \left\|\frac{1}{\sqrt{m}}(\by - \tilde{\by})\right\|_2 \cdot \left\|\frac{1}{\sqrt{m}} \bA (\hat{\bx} - \mu\bx^*)\right\|_2 \\ 
     & \le \tau O(\|\hat{\bx} - \mu\bx^*\|_2 + \delta) \label{eq:main_first_term}
    \end{align}
by \eqref{eq:l2bound} and the assumption $\frac{1}{\sqrt{m}} \|\tilde{\by}-\by\|_2 \le \tau$. 

We now consider the second term in \eqref{eq:main_twoterms}.  From Lemma~\ref{lem:bxStarbxHat}, we have that when $Lr = \Omega(\delta n)$ and $m = \Omega\left(k\log\frac{Lr}{\delta}\right)$, with probability $1-e^{-\Omega\left(k\log \frac{Lr}{\delta}\right)}$,
\begin{equation}\label{eq:beUsefulForUnion}
 \left\langle \frac{1}{m}\bA^T(\by - \mu\bA \bx^*), \hat{\bx}-\mu \bx^* \right \rangle \le O\left(\psi\sqrt{\frac{k\log \frac{Lr}{\delta}}{m}}\right)\|\hat{\bx} - \mu\bx^*\|_2 + O\left(\psi\delta\sqrt{\frac{k\log \frac{Lr}{\delta}}{m}}\right).
\end{equation}

Putting the preceding findings together, we have the following with probability $1-e^{-\Omega\left(k\log \frac{Lr}{\delta}\right)}$: 
\begin{align}
 \|\mu \bx^* - \hat{\bx}\|_2^2 
 &\le \frac{4}{m}\|\bA (\hat{\bx}-\mu \bx^*)\|_2^2 + 4\delta \|\hat{\bx}-\mu \bx^*\|_2 \label{eq:non_unif_final0} \\
 & \le 8 \left\langle \frac{1}{m}\bA^T(\tilde{\by} - \mu\bA \bx^*), \hat{\bx}-\mu \bx^* \right \rangle + 4 \delta \|\hat{\bx}-\mu \bx^*\|_2 \label{eq:non_unif_final1} \\
 & \le O\left(\psi\sqrt{\frac{k\log \frac{Lr}{\delta}}{m}} + \delta + \tau \right)\|\hat{\bx} - \mu\bx^*\|_2 + O\left(\tau \delta + \psi\delta\sqrt{\frac{k\log \frac{Lr}{\delta}}{m}}\right),\label{eq:non_unif_final}
\end{align}
where \eqref{eq:non_unif_final0} uses \eqref{eq:lb_lhs}, \eqref{eq:non_unif_final1} uses \eqref{eq:basic_genLasso}, and  \eqref{eq:non_unif_final} combines \eqref{eq:main_twoterms},~\eqref{eq:main_first_term} and~\eqref{eq:beUsefulForUnion}.

By considering both possible cases of which of the two terms in \eqref{eq:non_unif_final} is larger, we find that if
\begin{equation}
 \delta = O\left(\psi\sqrt{\frac{k\log \frac{Lr}{\delta}}{m}}\right),
\end{equation}
then we have
\begin{equation}
 \|\mu \bx^* - \hat{\bx}\|_2 \le O\left(\psi\sqrt{\frac{k\log \frac{Lr}{\delta}}{m}} + \tau\right).
\end{equation}
Then, for any $\epsilon \in (0,1)$ satisfying $Lr =\Omega(\epsilon \psi n)$ (as assumed in the theorem), setting $m = \Omega\left(\frac{k}{\epsilon^2}\log \frac{Lr}{\delta}\right)$ leads to $\delta = O\big(\psi\sqrt{\frac{k\log \frac{Lr}{\delta}}{m}}\big) = O(\epsilon\psi)$ and thus $Lr = \Omega(\delta n)$ (as assumed previously in the proof).  Hence, we have with probability $1-e^{-\Omega(\epsilon^2 m)}$ that 
\begin{equation}
\|\mu\bx^* - \hat{\bx}\|_2 \le \epsilon \psi + \tau.
\end{equation}
\end{proof}

\vspace*{-1ex}
\subsection{Application to Neural Network Generative Models}\label{sec:nn}
\vspace*{-1ex}

In the following, we apply Theorem \ref{thm:generative} to neural network models, as these are of particular practical interest.  We consider feedforward neural network generative models; with $d$ layers, we have
\begin{equation}\label{eq:nn_function}
 G(\bz) = \phi_d\left(\phi_{d-1}\left(\cdots \phi_2( \phi_1(\bz,\btheta_1), \btheta_2)\cdots, \btheta_{d-1}\right), \btheta_d\right),
\end{equation}
where $\bz \in B_2^k(r)$, $\phi_i(\cdot)$ is the functional mapping corresponding to the $i$-th layer, and $\btheta_i = (\bW_i,\bb_i)$ is the parameter pair for the $i$-th layer:  $\bW_i \in \bbR^{n_i \times n_{i-1}}$ is the matrix of weights, and $\bb_i \in \bbR^{n_i}$ is the vector of offsets, where $n_i$ is the number of neurons in the $i$-th layer.  Note that $n_0 = k$ and $n_d = n$. Defining $\bz^0 = \bz$ and $\bz^i= \phi_i(\bz^{i-1},\btheta_i)$, we set $\phi_i(\bz^{i-1},\btheta_i) = \phi_i(\bW_i \bz^{i-1}+ \bb_i)$, $i = 1,2,\ldots,d$, for some activation function $\phi_i(\cdot)$ applied element-wise.

The following corollary applies Theorem~\ref{thm:generative} to feedforward neural network generative models. Note that here we do not constrain the $\ell_2$-norm of the signal $G(\bz^*) \in \calK$.

\begin{corollary}\label{coro:main_nn}
     Suppose that the generative model $G\,:\, B_2^k(r) \rightarrow \bbR^n$ is defined as in~\eqref{eq:nn_function} with at most $w$ nodes per layer. Suppose that all weights are upper bounded by $W_{\max}$ in absolute value, and that the activation function is $1$-Lipschitz. For any $\bz^* \in B_2^k(r)$, let $\by = f\big(\bA \frac{G(\bz^*)}{\mu}\big)$ and let $\tilde{\by}$ be the observed vector with $\frac{1}{\sqrt{m}}\|\by - \tilde{\by}\|_2 \le \tau$.  In addition, define $\bar{f}(x) = f\big(\frac{\|G(\bz^*)\|_2}{\mu} x\big)$ and $\bar{\mu} = \bbE[\bar{f}(g)g]$, $\bar{\psi}=\|\bar{f}(g)\|_{\psi_2}$. Then, for any $\epsilon  > 0$, if $(wW_{\max})^d r = \Omega(\epsilon \bar{\psi} n)$, $m = \Omega\big(\frac{k}{\epsilon^2}\log \frac{  r(wW_{\max})^d }{\epsilon\bar{\psi}}\big)$ and $\frac{\bar{\mu} G(\bz^*)}{\|G(\bz^*)\|_2} \in \calK$, then with probability  $1-e^{-\Omega(\epsilon^2 m)}$, any solution $\hat{\bx}$ to the $\calK$-Lasso~\eqref{eq:gen_lasso} satisfies
     \begin{equation}
      \left\|\bar{\mu} \frac{G(\bz^*)}{\|G(\bz^*)\|_2} - \hat{\bx}\right\|_2 \le \epsilon\bar{\psi} + \tau.
     \end{equation}
\end{corollary}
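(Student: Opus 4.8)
The plan is to reduce Corollary~\ref{coro:main_nn} to Theorem~\ref{thm:generative} by (i) bounding the Lipschitz constant of the network, and (ii) normalizing the (non-unit) signal $G(\bz^*)$ and absorbing the resulting scale factor into the nonlinearity. First I would compute the Lipschitz constant of $G$. Each layer map $\bz^{i-1} \mapsto \phi_i(\bW_i \bz^{i-1} + \bb_i)$ is the composition of an affine map with a $1$-Lipschitz element-wise activation, so it is Lipschitz with constant at most $\|\bW_i\|_{2 \to 2}$. Since $\bW_i$ has at most $w$ rows and $w$ columns with entries bounded by $W_{\max}$, we have $\|\bW_i\|_{2 \to 2} \le \|\bW_i\|_F \le w W_{\max}$, and composing the $d$ layers shows that $G$ is $L$-Lipschitz with $L = (w W_{\max})^d$. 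This is exactly the quantity appearing in the hypotheses $(wW_{\max})^d r = \Omega(\epsilon\bar{\psi}n)$ and $m = \Omega\big(\frac{k}{\epsilon^2}\log\frac{r(wW_{\max})^d}{\epsilon\bar{\psi}}\big)$.

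Next I would set $\bx^* := \frac{G(\bz^*)}{\|G(\bz^*)\|_2} \in \calS^{n-1}$ and rewrite the measurement model. Writing $c := \frac{\|G(\bz^*)\|_2}{\mu}$, we have $\bA \frac{G(\bz^*)}{\mu} = c\,\bA\bx^*$, so componentwise $y_i = f(c\langle \ba_i,\bx^*\rangle) = \bar{f}(\langle\ba_i,\bx^*\rangle)$, where $\bar{f}(x) = f(cx)$ is precisely the nonlinearity defined in the statement. Because $\langle \ba_i,\bx^*\rangle \sim \calN(0,1)$ (as $\bx^*$ is a unit vector), this is exactly an instance of the single-index model~\eqref{eq:yi} with unit signal $\bx^*$, nonlinearity $\bar{f}$, and associated parameters $\bar{\mu} = \bbE[\bar{f}(g)g]$ and $\bar{\psi} = \|\bar{f}(g)\|_{\psi_2}$. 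The sub-Gaussianity required by Theorem~\ref{thm:generative} holds because $\bar{f}(g) \dequal y_i$, which is sub-Gaussian by assumption, so $\bar{\psi} < \infty$.

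With this reduction in place, I would simply invoke Theorem~\ref{thm:generative} applied to $\bx^*$, $\bar{f}$, $\bar{\mu}$, $\bar{\psi}$, and $L = (wW_{\max})^d$. The membership hypothesis $\bx^* \in \calS^{n-1}\cap\frac{1}{\bar{\mu}}\calK$ is exactly the assumption $\frac{\bar{\mu}G(\bz^*)}{\|G(\bz^*)\|_2}\in\calK$, the conditions on $Lr$ and $m$ translate verbatim into those stated in the corollary, and the conclusion $\|\bar{\mu}\bx^* - \hat{\bx}\|_2 \le \bar{\psi}\epsilon + \tau$ is exactly the claimed bound. I expect the only genuine subtlety to be the bookkeeping in step (ii): one must check that the scale factor $c$ is correctly folded into $\bar{f}$ so that the single-index-model parameters fed into Theorem~\ref{thm:generative} are indeed $\bar{\mu}$ and $\bar{\psi}$ (rather than the unscaled $\mu,\psi$), and that $\bar{f}$ remains independent of $\bA$ — both of which hold since $c$ depends only on the fixed signal $G(\bz^*)$. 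The Lipschitz bound in step (i) is standard, and the application of the theorem is then immediate.
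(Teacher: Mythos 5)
Your proposal is correct and takes essentially the same route as the paper: set $\bx^* = G(\bz^*)/\|G(\bz^*)\|_2$, absorb the scale $\|G(\bz^*)\|_2/\mu$ into the nonlinearity $\bar{f}$ with parameters $\bar{\mu},\bar{\psi}$, and invoke Theorem~\ref{thm:generative} with $L = (wW_{\max})^d$. The only cosmetic difference is that you derive the Lipschitz bound directly via $\|\bW_i\|_{2\to 2} \le \|\bW_i\|_F \le wW_{\max}$, whereas the paper cites \cite[Lemma~8.5]{bora2017compressed} for the same constant.
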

\begin{proof}
We know that under the assumptions of the corollary, the generative model $G$ is $L$-Lipschitz with $L = \left(w W_{\max}\right)^d$ ({\em cf.}~\cite[Lemma~8.5]{bora2017compressed}). Letting $\rho = \|G(\bz^*)\|_2$, it is straightforward to see that $\bar{f}(g)=f\big(\frac{\rho}{\mu} g \big)$ is also sub-Gaussian, where $g\sim \calN(0,1)$. In addition, we have 
 \begin{equation}
  \by = f\left(\bA \frac{G\left(\bz^*\right)}{\mu}\right) = f\left(\bA \frac{\rho}{\mu}\cdot \frac{G\left(\bz^*\right)}{\rho}\right) = \bar{f}\left(\bA \frac{G\left(\bz^*\right)}{\rho}\right).
 \end{equation}
Note that $\frac{G\left(\bz^*\right)}{\rho}$ is a unit vector, and $\bar{\mu} \frac{G\left(\bz^*\right)}{\rho} \in \calK$ by assumption. Applying Theorem~\ref{thm:generative} to the observation function $\bar{f}$ and the unit signal vector $\frac{G\left(\bz^*\right)}{\rho}$ completes the proof.
\end{proof}

Several commonly-used activation functions are $1$-Lipschitz, such as i) the ReLU function, $\phi_i(x) = \max(x,0)$; (ii) the Sigmoid function, $\phi_i(x) = \frac{1}{1+e^{-x}}$; and (iii) the Hyperbolic tangent function with $\phi_i(x) = \frac{e^{x}-e^{-x}}{e^{x}+e^{-x}}$.  Moreover, it is straightforward to generalize to other activation functions whose Lipschitz constants may exceed one.  

 The assumptions in Corollary~\ref{coro:main_nn} pose some limitations, but are satisfied in several cases of interest.  For example, the assumption $\frac{\bar{\mu} G(\bz^*)}{\|G(\bz^*)\|_2} \in \calK$ is satisfied when the generative model is a ReLU network with no offsets (see~\cite[Remark 2.1]{wei2019statistical}), due to $\calK$ being cone-shaped. In addition, while the sub-Gaussianity constant $\bar{\psi}=\|\bar{f}(g)\|_{\psi_2}$ is dependent on $\bz^*$, it can be upper bounded independently of $\bz^*$ under any observation model in which the measurements are uniformly bounded (e.g., including not only 1-bit, but also more general multi-bit quantized models).

\vspace*{-1ex}
\section{Variations and Extensions}\label{sec:extensions}
\vspace*{-1ex}
In this section, we discuss several variations and extensions of our main result, including considering bounded $k$-sparse vectors in Section~\ref{sec:bddSparse}, 
an unknown covariance matrix for the random measurement vectors in Section~\ref{sec:general_cov}, relaxing the norm restriction for the underlying signal in Section~\ref{sec:removingNormAssump}.  Some additional variations are given in the appendices, namely, guarantees for a distinct correlation-based optimization algorithm under binary observations (Appendix \ref{app:alterBinary}), and connections between our sample complexity and the Gaussian mean width (Appendix \ref{app:relateGMW}).

\vspace*{-0.5ex}
\subsection{Bounded Sparse Vectors}\label{sec:bddSparse}
\vspace*{-1ex}

In the proof of Theorem~\ref{thm:generative}, for the set of signals $\calK = G(B_2^k(r))$, we make use of the property that for any $\delta>0$, there exists a $\delta$-net $\calM$ of $\calK$ such that $|\calM| \le O\left(\exp\left(k \log \frac{Lr}{\delta}\right)\right)$.  Hence, we can readily extend the result to other sets $\calK$ with known bounds on the size of a $\delta$-net.  As an example, we state the following for bounded sparse vectors, defining $\Sigma_k^n$ to be the set of $k$-sparse vectors in $\bbR^n$.  A proof outline is given in Appendix \ref{sec:pf_sparse}.

\begin{corollary}\label{coro:bddSparse}
 Fix $\epsilon >0$, and let $\nu \ge \mu$ satisfy $\nu = \Omega(\epsilon \psi k)$. Fix $\bx^* \in \Sigma_k^n \cap \calS^n$, let $\by = f(\bA \bx^*)$, and let $\tilde{\by}$ be a vector satisfying $\frac{1}{\sqrt{m}}\|\by -\tilde{\by}\|_2 \le \tau$. Then, when $m = \Omega\big(\frac{k}{\epsilon^2}\log \frac{\nu n}{\epsilon \psi k}\big)$, with probability  $1-e^{-\Omega(\epsilon^2 m)}$, any $\hat{\bx}$ that minimizes $\|\tilde{\by}-\bA \bx\|_2$ over $\Sigma_{k}^n \cap \nu B_2^n$ satisfies 
 \begin{equation}
  \|\mu \bx^* - \hat{\bx}\|_2 \le \psi\epsilon + \tau.
 \end{equation}
\end{corollary}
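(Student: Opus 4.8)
The plan is to re-run the proof of Theorem~\ref{thm:generative} almost verbatim, with the generative-model set $\calK = G(B_2^k(r))$ replaced by $\calT := \Sigma_k^n \cap \nu B_2^n$. The key observation is that $\calK$ enters the proof of Theorem~\ref{thm:generative} only through the three auxiliary lemmas (Lemmas~\ref{lem:boraSREC}, \ref{lem:boraSREC_gen}, and~\ref{lem:bxStarbxHat}), and each of these depends on $\calK$ solely through the existence of a $\delta$-net $\calM$ with $\log|\calM| = O\big(k\log\frac{Lr}{\delta}\big)$. So the first task is to supply the analogous covering bound for $\calT$ and then propagate the resulting substitution through the lemmas.

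First I would bound the covering number of $\calT$. Writing $\calT$ as the union, over the $\binom{n}{k}$ possible supports, of the $\ell_2$-ball of radius $\nu$ in the corresponding $k$-dimensional coordinate subspace, and using the standard estimate $(1+2\nu/\delta)^k \le (3\nu/\delta)^k$ for a $\delta$-net of a $k$-dimensional ball of radius $\nu$, I obtain a $\delta$-net $\calM$ of $\calT$ with
\[
 |\calM| \;\le\; \binom{n}{k}\Big(\tfrac{3\nu}{\delta}\Big)^{k} \;\le\; \exp\!\Big(O\big(k\log\tfrac{\nu n}{\delta k}\big)\Big).
\]
This identifies the correct bookkeeping: the quantity $Lr$ in Theorem~\ref{thm:generative} is replaced everywhere by $\tfrac{\nu n}{k}$, so that each occurrence of $\log\tfrac{Lr}{\delta}$ becomes $\log\tfrac{\nu n}{\delta k}$.

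Next I would re-state the three lemmas for $\calK = \calT$ using this net. Lemmas~\ref{lem:boraSREC} and~\ref{lem:boraSREC_gen} are obtained from a union bound over pairs of net points together with the Gaussian norm-preservation estimate (Lemma~\ref{lem:norm_pres}), and carry over unchanged once $\log\tfrac{Lr}{\delta}$ is replaced by $\log\tfrac{\nu n}{\delta k}$; the same holds for the chaining argument behind Lemma~\ref{lem:bxStarbxHat}, which uses only the net cardinality and the parameter $\psi$. I also need feasibility of the target: since $\bx^* \in \Sigma_k^n \cap \calS^n$ and $\nu \ge \mu$, the vector $\mu\bx^*$ is $k$-sparse with $\|\mu\bx^*\|_2 = \mu \le \nu$, hence $\mu\bx^* \in \calT$, which is exactly what the optimality comparison $\|\tilde{\by}-\bA\hat{\bx}\|_2 \le \|\tilde{\by}-\mu\bA\bx^*\|_2$ requires.

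With these in place, the chain~\eqref{eq:basic_genLasso}--\eqref{eq:non_unif_final} applies verbatim, again choosing $\delta$ to balance the two terms so that $\delta = O\big(\psi\sqrt{k\log\tfrac{\nu n}{\delta k}/m}\big)$. Taking $m = \Omega\big(\tfrac{k}{\epsilon^2}\log\tfrac{\nu n}{\delta k}\big)$ forces $\delta = O(\epsilon\psi)$, and the condition $Lr = \Omega(\delta n)$ used in the proof of the theorem becomes $\tfrac{\nu n}{k} = \Omega(\delta n)$, i.e.\ $\nu = \Omega(\delta k) = \Omega(\epsilon\psi k)$ --- precisely the stated hypothesis. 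This yields $\|\mu\bx^*-\hat{\bx}\|_2 \le \psi\epsilon + \tau$ with probability $1-e^{-\Omega(\epsilon^2 m)}$. The only step requiring genuine care, rather than bookkeeping, is confirming that the chaining proof of Lemma~\ref{lem:bxStarbxHat} uses $\calK$ purely through its metric entropy; wherever that proof implicitly relies on a single low-dimensional parametrization $\bz\mapsto G(\bz)$ instead of a union over supports, the union bound over the $\binom{n}{k}$ supports must be inserted explicitly, which inflates the log term by $O\big(k\log\tfrac{n}{k}\big)$ exactly as captured above. I expect this to be the main obstacle, though it introduces no new idea.
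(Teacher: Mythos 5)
Your proposal matches the paper's own proof, which is exactly this outline: bound the covering number of $\Sigma_k^n \cap \nu B_2^n$ by $\binom{n}{k}\left(\frac{\nu}{\delta}\right)^k \le \exp\left(O\left(k\log\frac{\nu n}{\delta k}\right)\right)$ and then rerun the proof of Theorem~\ref{thm:generative} with $\log\frac{Lr}{\delta}$ replaced by $\log\frac{\nu n}{\delta k}$. Your additional checks --- feasibility $\mu\bx^* \in \Sigma_k^n \cap \nu B_2^n$ via $\nu \ge \mu$, and the translation of the condition $Lr = \Omega(\delta n)$ into $\nu = \Omega(\epsilon\psi k)$ --- are correct details that the paper leaves implicit.
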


This corollary is similar to other sparsity based results for the generalized Lasso, such as those in~\cite{plan2016generalized,plan2017high}.  It is intuitive that similar sparsity-based results to Theorem \ref{thm:generative} follow without difficulty, given that generative models are known that can produce bounded sparse signals \cite{liu2020information,kamath2020power}.

\vspace*{-1ex}
\subsection{General Covariance Matrices} \label{sec:general_cov}
\vspace*{-1ex}

Thus far, we have focused on the case that $\ba_i \sim \calN(\mathbf{0},\bI)$.  Following the ideas of \cite{plan2016generalized}, we can also consider the more general scenario in which $\ba_i \sim \calN(\mathbf{0},\bSigma)$ for an {\em unknown} covariance matrix $\bSigma \in \bbR^{n \times n}$, assuming that $\|\sqrt{\bSigma}\bx^*\|_2 =1$ and $\mu \bx^* \in \calK$. The definitions of $\mu$ and $\psi$ remain the same, {\em cf.}, Section~\ref{sec:prob_setup}. The following is easily deduced from Theorem \ref{thm:generative}; see Appendix \ref{sec:pf_gen_cov} for the details.

\begin{corollary}\label{coro:arb_cov}
 Suppose that $\ba_i \overset{i.i.d.}{\sim} \calN(\mathbf{0},\bSigma)$ for $i \in [m]$ and $\|\sqrt{\bSigma}\bx^*\|_2 =1$. Suppose that $\by = f(\bA \bx^*)$ and $\mu \bx^* \in \calK$. Let $\tilde{\by}$ be any vector of corrupted measurements satisfying $\frac{1}{\sqrt{m}}\|\by - \tilde{\by}\|_2 \le \tau$. Then, for any $\epsilon >0$, when $\|\bSigma\|_{2\to 2}^{1/2} Lr = \Omega(\epsilon \psi n)$ and $m = \Omega\big(\frac{k}{\epsilon^2} \log \frac{\|\bSigma\|_{2\to 2}^{1/2} Lr}{\epsilon \psi}\big)$, with probability $1-e^{-\Omega(\epsilon^2 m)}$, any solution to the generalized Lasso~\eqref{eq:gen_lasso} satisfies 
 \begin{equation}
  \|\sqrt{\bSigma}(\hat{\bx}-\mu\bx^*)\|_2 \le \psi \epsilon + \tau.
 \end{equation}
\end{corollary}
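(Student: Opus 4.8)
The plan is to reduce the general-covariance problem to the identity-covariance setting of Theorem~\ref{thm:generative} via a linear change of variables, following the strategy of~\cite{plan2016generalized}. Since $\bSigma$ is symmetric positive semidefinite, so is $\sqrt{\bSigma}$, and we may write each measurement vector as $\ba_i = \sqrt{\bSigma}\,\bg_i$ with $\bg_i \overset{i.i.d.}{\sim}\calN(\mathbf{0},\bI)$; collecting these rows gives $\bA = \bG\sqrt{\bSigma}$, where $\bG \in \bbR^{m\times n}$ has i.i.d.~standard Gaussian entries. The key observation is that
\begin{equation}
 y_i = f(\langle \ba_i,\bx^*\rangle) = f(\langle \bg_i, \sqrt{\bSigma}\,\bx^*\rangle),
\end{equation}
so the effective signal is $\bu^* := \sqrt{\bSigma}\,\bx^*$, which is a \emph{unit} vector by the assumption $\|\sqrt{\bSigma}\,\bx^*\|_2 = 1$. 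Because $\langle \bg_i, \bu^*\rangle \sim \calN(0,1)$, the nonlinearity parameters $\mu = \bbE[f(g)g]$ and $\psi = \|f(g)\|_{\psi_2}$ are \emph{identical} to those in the identity-covariance case; this is precisely the role of the normalization $\|\sqrt{\bSigma}\,\bx^*\|_2 = 1$.

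Next I would set up the transformed constraint set. Since $\bA\bx = \bG(\sqrt{\bSigma}\,\bx)$, the objective satisfies $\|\tilde{\by}-\bA\bx\|_2 = \|\tilde{\by}-\bG\bu\|_2$ under the substitution $\bu = \sqrt{\bSigma}\,\bx$. Hence minimizing over $\bx \in \calK = G(B_2^k(r))$ is equivalent to minimizing $\|\tilde{\by}-\bG\bu\|_2$ over $\bu \in \sqrt{\bSigma}\,\calK$, and for any minimizer $\hat{\bx}$ of the original $\calK$-Lasso, $\hat{\bu} := \sqrt{\bSigma}\,\hat{\bx}$ is a minimizer of the transformed one (this direction is all we need, so invertibility of $\bSigma$ is not required). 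I would then write $\sqrt{\bSigma}\,\calK = \tilde{G}(B_2^k(r))$ with $\tilde{G} := \sqrt{\bSigma}\circ G$, and verify that $\tilde{G}$ is $L'$-Lipschitz with $L' = \|\bSigma\|_{2\to 2}^{1/2}\,L$, using $\|\sqrt{\bSigma}\|_{2\to2} = \|\bSigma\|_{2\to2}^{1/2}$ (the eigenvalues of $\sqrt{\bSigma}$ being the square roots of those of $\bSigma$). Finally, $\mu\bu^* = \sqrt{\bSigma}(\mu\bx^*) \in \sqrt{\bSigma}\,\calK$ follows immediately from the hypothesis $\mu\bx^*\in\calK$, so all the hypotheses of Theorem~\ref{thm:generative} hold for the triple $(\bu^*, \tilde{G}, \bG)$ with Lipschitz constant $L'$ and the same radius $r$.

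Applying Theorem~\ref{thm:generative} with this data, the conditions $L'r = \|\bSigma\|_{2\to2}^{1/2}Lr = \Omega(\epsilon\psi n)$ and $m = \Omega\big(\frac{k}{\epsilon^2}\log\frac{\|\bSigma\|_{2\to2}^{1/2}Lr}{\epsilon\psi}\big)$ yield, with probability $1-e^{-\Omega(\epsilon^2 m)}$,
\begin{equation}
 \|\mu\bu^* - \hat{\bu}\|_2 = \|\sqrt{\bSigma}(\mu\bx^* - \hat{\bx})\|_2 \le \psi\epsilon + \tau,
\end{equation}
which is exactly the claimed bound. The reduction is essentially bookkeeping; the only points demanding genuine care are confirming that $\mu$ and $\psi$ are unchanged (which relies on $\bu^*$ being unit-norm) and that the Lipschitz constant inflates by exactly the factor $\|\bSigma\|_{2\to2}^{1/2}$, since it is this factor that propagates into both the sample-complexity bound and the $Lr$-type side condition in the final statement.
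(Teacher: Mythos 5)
Your proposal is correct and follows essentially the same route as the paper's own proof: factoring $\bA = \bG\sqrt{\bSigma}$, transferring the Lasso to the set $\sqrt{\bSigma}\,\calK = (\sqrt{\bSigma}\circ G)(B_2^k(r))$ with Lipschitz constant $\|\bSigma\|_{2\to 2}^{1/2}L$, and applying Theorem~\ref{thm:generative} to the unit-norm effective signal $\sqrt{\bSigma}\,\bx^*$. Your added remark that only the forward direction of the minimizer correspondence is needed (so invertibility of $\bSigma$ is not required) is a small but welcome refinement of the paper's equivalence chain.
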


\vspace*{-1ex}
\subsection{Removing the $\ell_2$-norm Assumption}\label{sec:removingNormAssump} \vspace*{-1ex}

Continuing from the previous subsection and again following \cite{plan2016generalized}, our results can easily be generalized to the case that $\|\sqrt{\bSigma} \bx^*\|_2 \ne 1$ (or for $\bSigma = \bI$, the case that $\|\bx^*\|_2 \ne 1$). The idea is similar to that presented in the proof of Corollary~\ref{coro:main_nn}. In particular, setting $\rho = \|\sqrt{\bSigma} \bx^*\|_2$ and $\bar{\bx} = \frac{\bx^*}{\rho}$ gives
\begin{equation}\label{eq:wrongEq}
 f(\bA \bx^*) = f(\rho \bA \bar{\bx})= \bar{f}(\bA \bar{\bx}),
\end{equation}
where $ \bar{f}(x) := f(\rho x)$ for $x \in \bbR$. Hence, for $g \sim \calN(0,1)$, if $\bbE[\bar{f}(g)g] \bar{\bx} \in \calK$, the preceding theorems and corollaries apply to the estimation of $\bar{\bx}$, with modified parameters
\begin{equation}
 \bar{\mu} := \bbE[\bar{f}(g)g], \quad \bar{\psi} := \|\bar{f}(g)\|_{\psi_2}. 
\end{equation}

\vspace*{-1ex}
\section{Uniform Recovery Guarantees}\label{sec:uniform}
\vspace*{-1ex}

In this section, we turn to uniform recovery guarantees, stating that a single matrix $\bA$ simultaneously permits the recovery of all $\bx^*$ in the set of interest.  For brevity, we consider $\mu$ and $\psi$ to be fixed constants and omit them in the $O(\cdot)$ notation.  

Our result will depend on the following Local Embedding Property (LEP).

\begin{definition} \label{def:lep}
    A deterministic function $\tilde{f}$  and measurement matrix $\tilde{\bA}$ are said to satisfy the LEP$(S,\delta, \beta)$ with set $S$ and parameters $\delta \ge 0$ and $\beta \ge 0$ if, for any $\bx_1 \in S$ and $\bx_2 \in S$ satisfying $\|\bx_1 - \bx_2\|_2 \le \delta$, the following holds:
    \begin{equation}
        \frac{1}{\sqrt{m}}\|\tilde{f}(\tilde{\bA}\bx_1)-\tilde{f}(\tilde{\bA}\bx_2)\|_2 \le C \delta^\beta \label{eq:lep}
    \end{equation}  
    for some $C > 0$ not depending on $\delta$.
\end{definition}

This definition essentially states that nearby signals remain close upon multiplying by $\tilde{\bA}$ and then applying the function $\tilde{f}$.  See, for example, \cite{oymak2015near,liu2020sample} for similar concepts in earlier works. With this definition in place, our main assumption in this section is stated as follows.

\begin{assumption} \label{as:lep}
    Under the (possibly random) function $f$, i.i.d.~Gaussian measurement matrix $\bA$, and generative model $G$ with $\calK = {\rm Range}(G)$, there exists a constant $\beta \in (0,1]$ and functions $\funcM, \funcP$ such that for any sufficiently small $\delta$, the following holds with probability $1-\funcP(\delta,\beta)$ when $m \ge \funcM(\delta,\beta)$: The pair $(f,\bA)$ satisfies the LEP$(S,\delta,\beta)$ with $S = \calS^{n-1} \cap \{ \bx\,:\, c\bx \in \calK {\rm ~for~some~ }c \in [\mu (1- \eta),\mu(1 + \eta)]\}$, where $\eta > 0$ is a (small) positive constant not depending on $\delta$, and $\mu = \bbE[ f(g) g ]$.
\end{assumption}

While Assumption~\ref{as:lep} is somewhat technical, the intuition behind it is simply that if $\bx_1$ is close to $\bx_2$, then $f(\bA\bx_1)$ is close to $f(\bA\bx_2)$. We restrict $\beta \le 1$ because the case $\beta > 1$ fails even for linear measurements, and the LEP for $\beta > 1$ implies the same for $\beta = 1$. 
Before providing some examples of models satisfying Assumption \ref{as:lep}, we state our uniform recovery result, proved in Appendix~\ref{app:pf_uniform}.

\begin{theorem}\label{thm:generative_uniform}
    Suppose that $f$ yields parameters $\mu = \Theta(1)$ and $\psi = \Theta(1)$, and that Assumption \ref{as:lep} holds.  Then, for sufficiently small $\epsilon >0$, if $Lr = \Omega(\epsilon n)$ and $m \ge \funcM\left(\calK,\epsilon^{1/\beta},\beta\right) + \Omega\big(\frac{k}{\epsilon^2}\log \frac{Lr}{\epsilon} \big)$, then with probability $1-e^{-\Omega\left(m\right)}-\funcP\left(\calK,\epsilon^{1/\beta},\beta\right)$, we have the following: For any signal $\bx^* \in \calS^{n-1}$ with $\mu \bx^* \in \calK$ and $\by = f(\bA\bx^*)$, and any vector $\tilde{\by}$ of corrupted measurement satisfying $\frac{1}{\sqrt{m}}\|\tilde{\by} - \by\|_2 \le \tau$, any solution $\hat{\bx}$ to the $\calK$-Lasso satisfies 
 \begin{equation}
  \|\mu\bx^* - \hat{\bx}\|_2 \le \epsilon + \tau.
 \end{equation}
\end{theorem}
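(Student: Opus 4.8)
The plan is to re-use the skeleton of the proof of Theorem~\ref{thm:generative}, isolating the single step that is genuinely non-uniform and making it uniform by discretization. As in \eqref{eq:basic_genLasso}, optimality of $\hatbx$ gives, for \emph{every} admissible $\bx^*$, the inequality $\frac{1}{m}\|\bA(\hatbx-\mu\bx^*)\|_2^2 \le \frac{2}{m}\langle \bA^T(\tilby-\mu\bA\bx^*),\hatbx-\mu\bx^*\rangle$. The lower bound on the left-hand side (Lemma~\ref{lem:boraSREC}) and the adversarial-noise term $\langle\frac1m\bA^T(\by-\tilby),\hatbx-\mu\bx^*\rangle$ (bounded by Cauchy--Schwarz together with the S-REC upper bound of Lemma~\ref{lem:boraSREC_gen}) only ever involve the difference $\hatbx-\mu\bx^*$ of two points of $\calK$; since Lemmas~\ref{lem:boraSREC} and~\ref{lem:boraSREC_gen} hold simultaneously for all pairs in $\calK$, these contributions are already uniform over $\bx^*$ and hold with probability $1-e^{-\Omega(m)}$ (taking $\alpha=\tfrac12$). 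The only piece that was proved for a \emph{fixed} $\bx^*$ is the ``nonlinearity-as-noise'' term $\langle\frac1m\bA^T(\by-\mu\bA\bx^*),\hatbx-\mu\bx^*\rangle$ controlled by Lemma~\ref{lem:bxStarbxHat}; making this uniform is the whole content of the proof.

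To that end, I would fix the net radius $\delta' := \epsilon^{1/\beta}$ and take a $\delta'$-net $\calM\subseteq T$ of the signal set $T := \calS^{n-1}\cap\frac1\mu\calK$, which by the Lipschitz property of $G$ has $|\calM| = \exp\!\big(O(k\log\frac{Lr}{\delta'})\big)$. Each net point $\bx_0\in\calM$ satisfies $\bx_0\in\calS^{n-1}$ and $\mu\bx_0\in\calK$, so Lemma~\ref{lem:bxStarbxHat} applies at $\bar\bx=\bx_0$; crucially, that lemma is uniform over the ``test'' vector $\tilde\bx\in\calK$, so after a union bound over $\calM$ (equivalently, after folding $\calM$ into the chaining net underlying Lemma~\ref{lem:bxStarbxHat}, whose log-cardinality stays $O(k\log\frac{Lr}{\epsilon})$ for fixed $\beta$) we obtain, with probability $1-e^{-\Omega(m)}$ and for every $\bx_0\in\calM$ and every $\tilde\bx\in\calK$, the bound $\langle\frac1m\bA^T(f(\bA\bx_0)-\mu\bA\bx_0),\tilde\bx-\mu\bx_0\rangle \le O(\epsilon)\|\tilde\bx-\mu\bx_0\|_2 + O(\epsilon\delta)$, where $\delta=O(\epsilon)$ is chosen as in the non-uniform proof. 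Here I use that $\epsilon$ is a fixed small constant, so $\epsilon^2 m = \Omega(m)$ and $m=\Omega(\frac{k}{\epsilon^2}\log\frac{Lr}{\epsilon})$ with a sufficiently large constant dominates $\log|\calM|$.

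Now fix an arbitrary admissible $\bx^*$ (so $\bx^*\in\calS^{n-1}$, $\mu\bx^*\in\calK$, hence $\bx^*\in T$) and let $\bx_0\in\calM$ be the nearest net point, $\|\bx^*-\bx_0\|_2\le\delta'$. Writing $\bp^*:=f(\bA\bx^*)-\mu\bA\bx^*$, $\bp_0:=f(\bA\bx_0)-\mu\bA\bx_0$ and $\bw:=\hatbx-\mu\bx^*$, I decompose
\begin{equation}
 \Big\langle\tfrac1m\bA^T\bp^*,\bw\Big\rangle = \Big\langle\tfrac1m\bA^T\bp_0,\hatbx-\mu\bx_0\Big\rangle - \Big\langle\tfrac1m\bA^T\bp_0,\mu\bx^*-\mu\bx_0\Big\rangle + \Big\langle\tfrac1m\bA^T(\bp^*-\bp_0),\bw\Big\rangle .
\end{equation}
The first two inner products pair the \emph{net-point} residual $\bp_0$ with $\hatbx-\mu\bx_0$ and $\mu\bx^*-\mu\bx_0$; since $\hatbx,\mu\bx^*\in\calK$, both are instances of the uniform bound of the previous paragraph (with $\tilde\bx=\hatbx$ and $\tilde\bx=\mu\bx^*$), giving $O(\epsilon)\|\bw\|_2 + O(\epsilon^2)$ after using $\|\hatbx-\mu\bx_0\|_2\le\|\bw\|_2+\mu\delta'$ and $\|\mu\bx^*-\mu\bx_0\|_2\le\mu\delta'=O(\epsilon^{1/\beta})=O(\epsilon)$. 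For the third term I would apply Cauchy--Schwarz, $\langle\frac1m\bA^T(\bp^*-\bp_0),\bw\rangle \le \frac1{\sqrt m}\|\bp^*-\bp_0\|_2\cdot\frac1{\sqrt m}\|\bA\bw\|_2$, and bound $\frac1{\sqrt m}\|\bp^*-\bp_0\|_2 \le \frac1{\sqrt m}\|f(\bA\bx^*)-f(\bA\bx_0)\|_2 + \frac1{\sqrt m}\|\bA\mu(\bx^*-\bx_0)\|_2$. The first summand is at most $C(\delta')^\beta = C\epsilon$ by the LEP (Definition~\ref{def:lep}/Assumption~\ref{as:lep} invoked with radius $\delta'=\epsilon^{1/\beta}$, both $\bx^*,\bx_0\in S$), and the second is $O(\delta')+\delta=O(\epsilon)$ by the S-REC upper bound (valid since $\mu\bx^*,\mu\bx_0\in\calK$); since $\frac1{\sqrt m}\|\bA\bw\|_2 = O(\|\bw\|_2+\epsilon)$, this term is $O(\epsilon)\|\bw\|_2+O(\epsilon^2)$ as well.

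Combining the three contributions, the nonlinearity term is $O(\epsilon)\|\bw\|_2 + O(\epsilon^2)$, exactly the scale obtained for fixed $\bx^*$ in \eqref{eq:beUsefulForUnion}. Substituting into the chain \eqref{eq:non_unif_final0}--\eqref{eq:non_unif_final} and adding the (uniform) adversarial term $\tau\cdot O(\|\bw\|_2+\epsilon)$ yields $\|\bw\|_2^2 \le O(\epsilon+\tau)\|\bw\|_2 + O(\epsilon^2+\epsilon\tau)$, whence $\|\mu\bx^*-\hatbx\|_2 = O(\epsilon+\tau)$; rescaling the $\Omega$-constants (and absorbing the $\Theta(1)$ factors $\mu,\psi$) gives the claimed $\epsilon+\tau$. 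The failure probability is $e^{-\Omega(m)}$ from the S-REC/net events plus $\funcP(\calK,\epsilon^{1/\beta},\beta)$ from invoking the LEP, and the sample size $m\ge \funcM(\calK,\epsilon^{1/\beta},\beta)+\Omega(\frac{k}{\epsilon^2}\log\frac{Lr}{\epsilon})$ accounts for both. The main obstacle is precisely that $f$ may be discontinuous, so $f(\bA\bx^*)$ need not be close to $f(\bA\bx_0)$ for nearby $\bx^*,\bx_0$ (e.g.\ sign flips in the $1$-bit model); a naive net argument therefore fails, and the LEP is the device that quantifies the controlled $\delta^\beta$ growth, at the cost of inflating the net radius to $\epsilon^{1/\beta}$ so that $(\delta')^\beta=\epsilon$.
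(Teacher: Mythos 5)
Your proposal is correct, and it follows the same overall strategy as the paper's proof in Appendix~\ref{app:pf_uniform}: a net over the signal set at radius $\epsilon^{1/\beta}$, Lemma~\ref{lem:bxStarbxHat} plus a union bound over net points (with the uniformity in the test vector $\tilde{\bx}\in\calK$ doing the heavy lifting), the LEP invoked at radius $\epsilon^{1/\beta}$ so that the nonlinearity mismatch $f(\bA\bx^*)-f(\bA\bx_0)$ costs $O(\epsilon)$, and the S-REC bounds of Lemmas~\ref{lem:boraSREC} and~\ref{lem:boraSREC_gen} for the remaining, already-uniform pieces, all feeding into the same final budget $O(\epsilon+\tau)\|\hatbx-\mu\bx^*\|_2 + O\left((\epsilon+\tau)^2\right)$. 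Two execution details differ from the paper and deserve comment. First, the paper nets the latent ball $B_2^k(r)$, so its net points $\mu\bx^*_0\in G(M)$ are not exactly $\mu$ times a unit vector; it therefore passes to the normalized point $\bx^*_0/\|\bx^*_0\|_2$, controls the normalization error via \eqref{eq:norm_bound0}--\eqref{eq:x*_bound}, and this is precisely why Assumption~\ref{as:lep} enlarges the set $S$ to scalings $c\in[\mu(1-\eta),\mu(1+\eta)]$. You instead posit an internal net of $T=\calS^{n-1}\cap\frac{1}{\mu}\calK$ whose points lie in $T$ itself; this is legitimate and pleasantly sidesteps the normalization bookkeeping (and uses only the $c=\mu$ slice of $S$), but it needs one extra line of justification: the Lipschitz property of $G$ alone yields net points in $\calK$, not on the sphere after rescaling, so you must convert the external net into an internal one (replace each net point of $\calK$ within distance $\delta'$ of $\mu T$ by a nearby point of $\mu T$, doubling the radius at no cost in log-cardinality). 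Second, in your three-term decomposition the middle term enters with a minus sign, so applying the upper bound ``with $\tilde{\bx}=\mu\bx^*$'' is, as literally written, the wrong direction; this is harmless because Lemmas~\ref{lem:varU} and~\ref{lem:useful_dev} bound absolute values, hence Lemma~\ref{lem:bxStarbxHat} holds in two-sided form on the same event (the paper's own proof implicitly relies on the same two-sidedness when bounding the pieces of \eqref{eq:total_three_terms}), but you should invoke the two-sided version explicitly. With these two small patches your argument is complete, and your three-term split of the residual is, if anything, slightly leaner than the paper's four-term decomposition \eqref{eq:four_terms}.
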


Assumption \ref{as:lep} is satisfied by various measurement models; for example:
\begin{itemize}[leftmargin=5ex,itemsep=0ex,topsep=0.25ex]
    \item Under the linear model $f(x) = x$, setting $\alpha = \frac{1}{2}$ in Lemma~\ref{lem:boraSREC_gen}, choosing $\delta >0$, and setting $\beta =1$ and $\mu =1$, we obtain $\funcM(\delta,\beta) = O\big(k \log \frac{Lr}{\delta}\big)$ and $\funcP(\delta,\beta) = e^{-\Omega(m)}$. 
    \item The preceding example directly extends to any $1$-Lipschitz function $f$, such as the censored Tobit model with $f(x) = \max\{x,0\}$.
    \item In Appendix \ref{app:lep}, we use an existing result in \cite{liu2020sample} to show that for the noiseless 1-bit model with $f(x) = \sign(x)$, we can choose any $\delta = O(1)$, set $\beta = \frac{1}{2}$ and $\mu = \sqrt{\frac{2}{\pi}}$, and obtain $\funcM(\delta,\beta)=O\big(\frac{k}{\delta}\log \frac{L r}{\delta^2}\big)$ and $\funcP(\delta,\beta) =e^{-\Omega(\delta m)}$.
\end{itemize}
Regarding the last of these, we note that our sample complexity in Theorem \ref{thm:generative_uniform} matches that of \cite[Corollary~3]{liu2020sample}.  An advantage of our result compared to \cite{liu2020sample} is that the $\calK$-Lasso objective function can be optimized directly using gradient methods, whereas the Hamming distance based objective proposed in \cite{liu2020sample} appears to be difficult to use directly in practice.  Instead, it is proposed in \cite{liu2020sample} to first approximate the objective by a convex one, and then apply a sub-gradient based method.


\vspace*{-1ex}
\section{Conclusion}
\vspace*{-1ex}

We have provided recovery guarantees for the generalized Lasso with nonlinear observations and generative priors. In particular, we showed that under i.i.d. Gaussian measurements, roughly $O\left(\frac{k}{\epsilon^2}\log L\right)$ samples suffice for non-uniform $\epsilon$-recovery, with robustness to adversarial noise. Moreover, we derived a uniform recovery guarantee under the assumption of the local embedding property. Possible extensions for future work include handling signals with representation error (i.e., $\mu\bx^*$ is not quite in $\calK$) \cite{bora2017compressed,plan2016generalized}, a sharp analysis including constants \cite{thrampoulidis2014simple, thrampoulidis2015lasso, oymak2013squared, oymak2013simple}, and lower bounds on the sample complexity \cite{plan2017high,kamath2020power,liu2020information}.

\smallskip
{\bf Acknowledgment.} This work was supported by the Singapore National Research Foundation (NRF) under grant number R-252-000-A74-281.

\newpage

\vspace*{-1ex}
\section*{Broader Impact}
\vspace*{-1ex}

{\bf Who may benefit from this research.} This is a theory paper primarily targeted at the research community.  The signal recovery techniques studied could potentially be useful for practitioners in areas such as image processing, audio processing, and medical imaging.

{\bf Who may be put at disadvantage from this research.} We are not aware of any significant/imminent risks of placing anyone at a disadvantage.

{\bf Consequences of failure of the system.} We believe that most failures should be immediately evident and detectable due to visibly poor reconstruction performance, and any such outputs could be discarded as needed.  However, some more subtle issues could arise, such as the reconstruction missing important details in the signal due to the generative model not capturing them.  As a result, care is advised in the choice of generative model, particularly in applications for which the reconstruction of fine details is crucial.

{\bf Potential biases.} The signal recovery algorithm that we consider takes as input an arbitrary pre-trained generative model.  If such a pre-trained model has inherent biases, they could be transferred to the signal recovery algorithm.

\bibliographystyle{myIEEEtran}
\bibliography{writeups,JS_References}

\newpage
\appendix

{\centering
    {\huge \bf Supplementary Material}
    
    {\Large \bf The Generalized Lasso with Nonlinear Observations \\ and Generative Priors (NeurIPS 2020) \par }  

    {\large \bf Zhaoqiang Liu and Jonathan Scarlett \par }  
}

\bigskip

\vspace*{-1ex}
\section{Table Comparing to the Existing Literature} \label{app:table}
\vspace*{-1ex}

The comparison of our results to those in the existing literature, as discussed in Section \ref{sec:related}, is outlined in Table \ref{table:first}.  In the table, we write $\mu = \bbE[f(g)g]$ for $g \sim \calN(0,1)$. We use $\calK$ to represent the structured set of interest, and $\Sigma_{k}^n$ to represent the set of $k$-sparse vectors in $\bbR^n$. For Projected Back Projection (PBP) \cite{plan2017high}, the reconstructed vector is $\hat{\bx}:=\calP_{\calK}\left(\frac{1}{m}\bA^T \by\right)$, where $\calP_\calK$ is the projection operator onto $\calK$. In addition, $\partial \calK$ represents the boundary of $\calK$. Letting $q\,:\, \bbR^n \rightarrow \bbR$ be the density of the random measurement vector $\ba$ and assume that $q$ is differentiable, we write $S_q (\ba) = -\frac{\nabla q(\ba)}{q(\ba)}$. For thresholded Empirical Risk Minimization (ERM), the reconstructed vector is $\hat{\bx}:=\arg \min_{\bx \in G(B_2^k(r))} \|\bx\|_2^2 - \frac{2}{m}\sum_{i=1}^m \hat{y}_i \left\langle S_q(\ba_i), \bx \right\rangle$, where $\hat{y}_i := \sign(y_i)\cdot |y_i|\wedge \tau$ for some thresholding parameter $\tau$. We recall that GMW stands for Gaussian Mean Width ({\em cf.}, Appendix~\ref{app:relateGMW}) and LEP stands for Local Embedding Property ({\em cf.}, Definition \ref{def:lep}). Interested readers may refer to \cite[Table 1]{xu2020quantized} for a summary of further relevant results. 

\begin{sidewaystable}
    \centering
    \caption{Summary of existing results and their associated conditions on the structured set, the observed signal, the sensing model, and the reconstruction algorithm.
    }\label{table:first}
     \begin{tabular}{|c|c|c|c|c|c|} 
    \hline
     & \cite{plan2016generalized} 			& \cite{plan2017high}  & 	\cite{genzel2016high}	& \cite{wei2019statistical} & ({\bf this work})\\ \hline
    \begin{tabular}{@{}c@{}} Signal set \\ $\calK \subseteq \bbR^n$\end{tabular}         & \begin{tabular}{@{}c@{}} Convex; considers \\ (local) GMW on the \\ tangent cone at $\bx^*$\end{tabular}  	&  \begin{tabular}{@{}c@{}} Star-shaped, \\ closed; considers \\(local) GMW\end{tabular}	& \begin{tabular}{@{}c@{}} Convex; considers \\ (local) GMW \end{tabular} 	& \begin{tabular}{@{}c@{}}$G(B_2^k(r))$ \end{tabular} & $G(B_2^k(r))$\\ \hline
    
    \begin{tabular}{@{}c@{}} Condition \\ on $\bx^*$  \end{tabular}                     & $\bx^* \in \calS^{n-1} \cap \frac{1}{\mu}\calK $     & $\mu \frac{\bx^*}{\|\bx^*\|_2} \in \calK$    & $\bx^* \in \calS^{n-1} \cap \frac{1}{\mu}\calK $    & \begin{tabular}{@{}c@{}} $\bx^* \in \calS^{n-1} \cap \calK $, \\ $\lambda \bx^* \in \calK$, \\ $\lambda:=\bbE\left[f'\left(\ba^T \bx^*,\xi\right)\right]$ \end{tabular}
     & $\bx^* \in \calS^{n-1} \cap \frac{1}{\mu}\calK $ \\ \hline
    
    \begin{tabular}{@{}c@{}} Sensing \\ model  \end{tabular}                   & $f(\ba_i^T \bx^*)$	& \begin{tabular}{@{}c@{}} $f(\ba_i^T \bx^*)$ \\ $y_i$ sub-Gaussian \end{tabular} 	& \begin{tabular}{@{}c@{}} $f(\ba_i^T \bx^*)$, with \\ extra adversarial noise \end{tabular}   	& \begin{tabular}{@{}c@{}}$f(\ba_i^T \bx^*, \xi_i)$, \\ $\xi_i$ random noise, \\ $f$ differentiable \\ and deterministic \end{tabular} & \begin{tabular}{@{}c@{}} $f(\ba_i^T \bx^*)$ \\ $y_i$ sub-Gaussian, with \\ extra adversarial noise \end{tabular}\\ \hline
    
    
    Algorithm                                 & $\calK$-Lasso    & PBP    & \begin{tabular}{@{}c@{}} General convex \\ loss functions \end{tabular}    &  Thresholded ERM & $\calK$-Lasso\\ \hline
    
    \begin{tabular}{@{}c@{}}Uniform/ \\Non-uniform \\ guarantee  \end{tabular}            & Non-uniform    & Non-uniform    & Non-uniform    & Non-uniform & \begin{tabular}{@{}c@{}} Non-uniform (and \\ uniform if LEP holds) \end{tabular} \\ \hline
    
    \begin{tabular}{@{}c@{}}Error \\bound   \end{tabular}                           & \begin{tabular}{@{}c@{}} $\mu \bx^* \in \partial\calK$: the \\ dependence on $m$ \\can  be $m^{-\frac{1}{2}}$ \\ (In general, $m^{-\frac{1}{4}}$) \end{tabular}    & \begin{tabular}{@{}c@{}} $\calK = \Sigma_{k}^n$, noiseless\\ 1-bit observations:\\ $\sqrt{\frac{k\log\frac{n}{k}}{m}}$ \\ (In general, $m^{-\frac{1}{4}}$) \end{tabular}    & \begin{tabular}{@{}c@{}} $\calK = \mu \left(\sqrt{k}B_1^n \cap B_2^n\right)$\\ and $\mu \bx^* \in \partial \calK$: $\sqrt{\frac{k\log\frac{n}{k}}{m}}$ \\ (In general, $m^{-\frac{1}{4}}$)\end{tabular}
        & $\sqrt{\frac{k\log Lr}{m}}$ & $\sqrt{\frac{k\log Lr}{m}}$ \\ \hline
    
    \begin{tabular}{@{}c@{}}Measurement\\ vector  $\ba_i$  \end{tabular}            & $\calN(\mathbf{0},\bSigma)$    & $\calN(\mathbf{0},\bI_n)$    & $\calN(\mathbf{0},\bSigma)$    & $S_q(\ba)$ sub-Gaussian & $\calN(\mathbf{0},\bSigma)$ \\ \hline
    \end{tabular} 
\end{sidewaystable} 

\vspace*{-1ex}
\section{Omitted Details and Additional Auxiliary Results for Proving Theorem \ref{thm:generative} (Non-Uniform Recovery)}\label{sec:preliminary}
\vspace*{-1ex}

In this section, we fill in the missing details for proving Theorem \ref{thm:generative}, including a statement of the concentration bound used to establish Lemma \ref{lem:boraSREC_gen}, and a proof for Lemma~\ref{lem:bxStarbxHat}.  We first provide some useful additional auxiliary results that are general, and then some that are specific to our setup.

\subsection{General Auxiliary Results}\label{app:general_aux}

We have the following basic concentration inequality, which is used in the proof of Lemma \ref{lem:boraSREC_gen}.

\begin{lemma}\label{lem:norm_pres}
    {\em (\hspace{1sp}\cite[Lemma~1.3]{vempala2005random})}
     Fix fixed $\bx \in \bbR^n$, we have for any $\epsilon \in (0,1)$ that
     \begin{align} 
          \bbP\left((1-\epsilon)\|\bx\|_2^2 \le \Big\|\frac{1}{\sqrt{m}}\bA\bx\Big\|_2^2\le (1+\epsilon)\|\bx\|_2^2\right)  \ge 1-2e^{-\epsilon^2 (1 - \epsilon) m/4}.
     \end{align}
\end{lemma}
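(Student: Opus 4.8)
The plan is to reduce the quantity $\big\|\frac{1}{\sqrt m}\bA\bx\big\|_2^2$ to a chi-squared random variable and then apply the Chernoff method to each tail separately. First I would fix $\bx \in \bbR^n$ and observe that each entry $(\bA\bx)_i = \sum_{j=1}^n A_{ij}x_j$ is a sum of independent Gaussians, hence $(\bA\bx)_i \sim \calN(0,\|\bx\|_2^2)$, and these entries are independent across $i$ because distinct rows of $\bA$ are independent. Writing $Z_i := (\bA\bx)_i/\|\bx\|_2 \sim \calN(0,1)$ i.i.d., we obtain
\begin{equation}
 \Big\|\tfrac{1}{\sqrt m}\bA\bx\Big\|_2^2 = \frac{\|\bx\|_2^2}{m}\sum_{i=1}^m Z_i^2,
\end{equation}
so the two-sided event in the statement is exactly $(1-\epsilon)m \le \sum_{i=1}^m Z_i^2 \le (1+\epsilon)m$, where $\sum_i Z_i^2 \sim \chi^2_m$. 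It therefore suffices to control the two one-sided deviations of a chi-squared variable and combine them by a union bound.

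For each tail I would apply the Chernoff bound using the moment generating function $\bbE[e^{tZ_i^2}] = (1-2t)^{-1/2}$, valid for $t<\tfrac12$. For the upper tail, for any $t\in(0,\tfrac12)$,
\begin{equation}
 \bbP\Big(\textstyle\sum_i Z_i^2 \ge (1+\epsilon)m\Big) \le e^{-t(1+\epsilon)m}(1-2t)^{-m/2},
\end{equation}
and optimizing over $t$ (the minimizer is $t = \tfrac{\epsilon}{2(1+\epsilon)}$) yields the bound $\big((1+\epsilon)e^{-\epsilon}\big)^{m/2}$. For the lower tail the analogous computation with $t<0$ (minimizer $t = -\tfrac{\epsilon}{2(1-\epsilon)}$) gives $\big((1-\epsilon)e^{\epsilon}\big)^{m/2}$.

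The remaining step is to bound both exponents by the target rate. Taking logarithms and using $\log(1+\epsilon)\le \epsilon - \tfrac{\epsilon^2}{2}+\tfrac{\epsilon^3}{3}$ and $\log(1-\epsilon)\le -\epsilon-\tfrac{\epsilon^2}{2}$, one checks that each of the two bounds is at most $e^{-\epsilon^2(1-\epsilon)m/4}$ for $\epsilon\in(0,1)$; the union bound over the two tails then supplies the factor of $2$ and the claimed estimate. The only delicate part is this final log-inequality bookkeeping: the lower tail gives exponent at least $\tfrac{\epsilon^2}{4}$, while the naive third-order expansion of the upper tail gives only $\tfrac{\epsilon^2}{4}-\tfrac{\epsilon^3}{6}$, so I expect the main obstacle to be verifying that this weaker upper-tail exponent still dominates the target, i.e.\ $\tfrac{\epsilon^2}{4}-\tfrac{\epsilon^3}{6}\ge \tfrac{\epsilon^2(1-\epsilon)}{4}$, which reduces to the elementary inequality $\tfrac{\epsilon^3}{4}\ge\tfrac{\epsilon^3}{6}$. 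Once this is confirmed over the whole range $\epsilon\in(0,1)$, everything else is routine.
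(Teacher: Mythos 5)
Your proposal is correct: the reduction to a $\chi^2_m$ variable, the Chernoff bounds with optimizers $t=\frac{\epsilon}{2(1+\epsilon)}$ and $t=-\frac{\epsilon}{2(1-\epsilon)}$ yielding $\big((1+\epsilon)e^{-\epsilon}\big)^{m/2}$ and $\big((1-\epsilon)e^{\epsilon}\big)^{m/2}$, and the final log-inequality bookkeeping (including your check that $\frac{\epsilon^2}{4}-\frac{\epsilon^3}{6}\ge\frac{\epsilon^2(1-\epsilon)}{4}$ reduces to $\frac{\epsilon^3}{4}\ge\frac{\epsilon^3}{6}$) all go through for every $\epsilon\in(0,1)$. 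The paper does not prove this lemma itself but imports it as \cite[Lemma~1.3]{vempala2005random}, and your argument is precisely the standard chi-squared moment-generating-function proof used in that reference, so you have taken essentially the same route as the paper's source.
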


The following definition formally introduces the notion of an $\epsilon$-net, also known as a covering set. 
\begin{definition}\label{def:eps_net}
    Let $(\calX,d)$ be a metric space, and fix $\epsilon>0$. A subset $S \subseteq \calX$ is said be an {\em $\epsilon$-net} of $\calX$ if, for all $\bx \in \calX$, there exists some $\bs \in S$ such that $d(\bx,\bs) \le \epsilon$. The minimal cardinality of an $\epsilon$-net of $\calX$ 
    is denoted by $\calN^*(\calX,\epsilon)$ and is called the covering number of $\calX$ (with parameter $\epsilon$).
\end{definition}

Alongside the sub-Gaussian notion in Definition \ref{def:subg}, we use the following definition of a sub-exponential random variable and sub-exponential norm.

\begin{definition}
 A random variable $X$ is said to be sub-exponential if there exists a positive constant $C$ such that $\left(\bbE\left[|X|^p\right]\right)^{\frac{1}{p}} \le C p$ for all $p \ge 1$. The sub-exponential norm of $X$ is defined as
 \begin{equation}
  \|X\|_{\psi_1} = \sup_{p \ge 1} p^{-1} \left(\bbE\left[|X|^p\right]\right)^{\frac{1}{p}}.
 \end{equation}
\end{definition}
The product of two sub-Gaussian random variables is sub-exponential, as stated in the following.
\begin{lemma}{\em (\hspace{1sp}\cite[Lemma~2.7.7]{vershynin2018high})}\label{lem:prod_subGs}
  Let $X$ and $Y$ be sub-Gaussian random variables (not necessarily independent). Then $XY$ is sub-exponential, and satisfies
  \begin{equation}
   \|XY\|_{\psi_1} \le \|X\|_{\psi_2}\|Y\|_{\psi_2}.
  \end{equation}
\end{lemma}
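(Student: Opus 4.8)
The plan is to show that the moments of $|XY|$ grow at the sub-exponential rate (linearly in $p$), with constant controlled by $\|X\|_{\psi_2}\|Y\|_{\psi_2}$. First I would homogenize: since $\|cZ\|_{\psi_2} = |c|\,\|Z\|_{\psi_2}$ and $\|cZ\|_{\psi_1} = |c|\,\|Z\|_{\psi_1}$ for any scalar $c$ (immediate from the definitions), it suffices to treat the normalized case $\|X\|_{\psi_2} = \|Y\|_{\psi_2} = 1$ and establish $\|XY\|_{\psi_1} \le 1$; the general claim then follows by applying this to $X/\|X\|_{\psi_2}$ and $Y/\|Y\|_{\psi_2}$.

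Working directly from the moment definition in Definition~\ref{def:subg}, I would fix $p \ge 1$ and apply the Cauchy--Schwarz inequality to $\mathbb{E}[|X|^p|Y|^p]$, giving
\begin{equation}
 \left(\mathbb{E}\big[|XY|^p\big]\right)^{1/p} \le \left(\mathbb{E}\big[|X|^{2p}\big]\right)^{1/(2p)}\left(\mathbb{E}\big[|Y|^{2p}\big]\right)^{1/(2p)}.
\end{equation}
Invoking the elementary consequence of Definition~\ref{def:subg}, namely $(\mathbb{E}[|Z|^q])^{1/q} \le \|Z\|_{\psi_2}\sqrt{q}$, with $q = 2p$ on each factor then yields $(\mathbb{E}[|XY|^p])^{1/p} \le 2p\,\|X\|_{\psi_2}\|Y\|_{\psi_2}$; dividing by $p$ and taking the supremum over $p \ge 1$ gives the sub-exponential bound up to an absolute constant.

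To obtain the sharp constant stated in the lemma, I would instead pass to the equivalent exponential-moment (Orlicz) characterization of the norms, under which the normalized hypotheses read $\mathbb{E}[\exp(X^2)] \le 2$ and $\mathbb{E}[\exp(Y^2)] \le 2$. Young's inequality in the form $ab \le \tfrac{1}{2}(a^2 + b^2)$ gives $|XY| \le \tfrac{1}{2}(X^2 + Y^2)$, hence $\exp(|XY|) \le \exp(X^2/2)\exp(Y^2/2)$. A single application of Cauchy--Schwarz then bounds $\mathbb{E}[\exp(|XY|)] \le \sqrt{\mathbb{E}[\exp(X^2)]}\,\sqrt{\mathbb{E}[\exp(Y^2)]} \le 2$, which is exactly $\|XY\|_{\psi_1} \le 1$ in the Orlicz norm.

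The main obstacle is purely the constant factor. The direct moment argument is clean but loses a factor of two, so matching the exact inequality $\|XY\|_{\psi_1} \le \|X\|_{\psi_2}\|Y\|_{\psi_2}$ requires the Young's-inequality route together with the standard (but nontrivial) equivalence, up to absolute constants, between the moment-based norms of Definition~\ref{def:subg} and the exponential-moment norms used in \cite{vershynin2018high}. Reconciling these two normalizations is the only delicate point; the remaining steps are routine applications of Cauchy--Schwarz and the defining moment bounds.
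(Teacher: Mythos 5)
Your proposal is correct, and it is worth noting at the outset that the paper itself offers no proof of Lemma~\ref{lem:prod_subGs}: it quotes the result from \cite[Lemma~2.7.7]{vershynin2018high}. Your second (exponential-moment) argument is essentially Vershynin's proof of the cited lemma. After normalizing to $\|X\|_{\psi_2}=\|Y\|_{\psi_2}=1$, both proofs use Young's inequality to get $|XY| \le \tfrac{1}{2}(X^2+Y^2)$ and hence $\exp(|XY|) \le \exp(X^2/2)\exp(Y^2/2)$; the only cosmetic difference is the final step, where Vershynin applies Young's inequality a second time, bounding $\exp(X^2/2)\exp(Y^2/2) \le \tfrac{1}{2}\left(\exp(X^2)+\exp(Y^2)\right)$ so that $\mathbb{E}[\exp(|XY|)] \le \tfrac{1}{2}(2+2)=2$, whereas you use Cauchy--Schwarz to reach the same bound. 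Both are equally valid, and your first, purely moment-based argument (yielding $\|XY\|_{\psi_1} \le 2\|X\|_{\psi_2}\|Y\|_{\psi_2}$) is also correct as stated.

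The caveat you raise about the constant is real, and in fact you are being more careful than the paper. The constant-$1$ inequality is a statement about the Orlicz norms of \cite{vershynin2018high}, namely $\|X\|_{\psi_2} = \inf\{t>0 : \mathbb{E}[\exp(X^2/t^2)] \le 2\}$ and $\|Z\|_{\psi_1} = \inf\{t>0 : \mathbb{E}[\exp(|Z|/t)] \le 2\}$, and it is genuinely \emph{false} under the paper's moment-based Definition~\ref{def:subg}: for $X=Y=g\sim\mathcal{N}(0,1)$ one has $\|g^2\|_{\psi_1} \ge \mathbb{E}[g^2] = 1$ (take $p=1$ in the supremum), while $\|g\|_{\psi_2}^2 < 1$ under the moment definition (for $1\le p\le 2$, Jensen gives $p^{-1/2}(\mathbb{E}[|g|^p])^{1/p} \le p^{-1/2} \le 1$ with value $\sqrt{2/\pi}<1$ at $p=1$, and for $p \ge 2$ the Gaussian moment bound $(\mathbb{E}[|g|^p])^{1/p} \le \sqrt{p-1}$ gives a ratio below $1$). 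So no argument can close the factor-of-two gap without switching to the Orlicz normalization, exactly as you suspected; the two norms agree only up to absolute constants. Fortunately this is immaterial for the paper: every downstream use of the lemma (e.g., \eqref{eq:mupsi}, and the bounds $\|X_{ij}\|_{\psi_1} \le C''\psi$ in the proofs of Lemmas~\ref{lem:useful_dev} and~\ref{lem:varU}) only requires $\|XY\|_{\psi_1} \le C\|X\|_{\psi_2}\|Y\|_{\psi_2}$ for an absolute constant $C$, which your moment-based argument delivers directly under the paper's own definitions. There is no gap in your reasoning.
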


In our setting, since we assume that $y_i$ is sub-Gaussian and $\langle \ba_i,\bx^* \rangle \sim \calN(0,1)$, Lemma \ref{lem:prod_subGs} reveals that the random variable $y_i \langle \ba_i,\bx^* \rangle$ is sub-exponential, and has the same distribution as $f(g)g$ with $g \sim \calN(0,1)$, yielding
\begin{equation}\label{eq:mupsi}
 \mu = \bbE[f(g)g] \le \bbE[|f(g)g|] \le \|f(g)g\|_{\psi_1} \le C\psi
\end{equation}
for some absolute constant $C > 0$.  In addition, we have the following concentration inequality for sums of independent sub-exponential random variables.
\begin{lemma}{\em (\hspace{1sp}\cite[Proposition~5.16]{vershynin2010introduction})}\label{lem:large_dev}
Let $X_{1}, \ldots , X_{N}$ be independent centered sub-exponential random variables, and $K = \max_{i} \|X_{i} \|_{\psi_{1}}$. Then for every $\balpha = [\alpha_1,\ldots,\alpha_N]^T \in \bbR^N$ and $\epsilon \geq 0$, it holds that
\begin{equation}
 \mathbb{P}\bigg( \Big|\sum_{i=1}^{N}\alpha_i X_{i}\Big|\ge \epsilon\bigg)  \leq 2  \exp \left(-c \cdot \mathrm{min}\Big(\frac{\epsilon^{2}}{K^{2}\|\balpha\|_2^2},\frac{\epsilon}{K\|\balpha\|_\infty}\Big)\right). \label{eq:subexp}
\end{equation}
\end{lemma}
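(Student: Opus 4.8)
The plan is to prove this by the standard Bernstein argument: bound the moment generating function (MGF) of each summand, factorize over the independent terms, apply the Chernoff bound, and optimize the free parameter $\lambda$; the two regimes of that optimization produce exactly the two arguments of the $\min$.

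First I would establish an MGF bound for a single centered sub-exponential variable. Given $\|X_i\|_{\psi_1} \le K$, the moment definition $(\bbE[|X_i|^p])^{1/p} \le Kp$ converts, via the power-series expansion of $e^{\lambda X_i}$ together with $\bbE[X_i] = 0$, into a bound of the form
\[
 \bbE\big[e^{\lambda X_i}\big] \le \exp\big(C\lambda^2 K^2\big), \qquad |\lambda| \le \tfrac{c}{K},
\]
for suitable absolute constants $C, c > 0$. This is the key auxiliary step; the main technical care lies here, in controlling the terms $\lambda^p \bbE[|X_i|^p]/p! \le (CK|\lambda|)^p$ using $p! \ge (p/e)^p$ and then summing the resulting geometric series when $|\lambda|$ is small enough, while the linear ($p=1$) term vanishes by centering.

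Next I would handle the weighted sum. By independence, $\bbE[\exp(\lambda\sum_i \alpha_i X_i)] = \prod_i \bbE[e^{\lambda \alpha_i X_i}]$, and applying the single-variable bound to each factor (legitimate provided $|\lambda\alpha_i| \le c/K$ for every $i$, i.e.\ $|\lambda| \le \frac{c}{K\|\balpha\|_\infty}$) yields
\[
 \bbE\Big[\exp\Big(\lambda \sum_{i=1}^N \alpha_i X_i\Big)\Big] \le \exp\Big(C\lambda^2 K^2 \sum_{i=1}^N \alpha_i^2\Big) = \exp\big(C\lambda^2 K^2 \|\balpha\|_2^2\big).
\]
The Chernoff bound then gives, for any admissible $\lambda > 0$,
\[
 \bbP\Big(\sum_{i=1}^N \alpha_i X_i \ge \epsilon\Big) \le \exp\big(-\lambda\epsilon + C\lambda^2 K^2 \|\balpha\|_2^2\big).
\]

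Finally I would optimize over $\lambda$. The unconstrained minimizer $\lambda^* = \frac{\epsilon}{2CK^2\|\balpha\|_2^2}$ produces the sub-Gaussian exponent of order $\frac{\epsilon^2}{K^2\|\balpha\|_2^2}$, but this is only admissible while $\lambda^* \le \frac{c}{K\|\balpha\|_\infty}$. When $\lambda^*$ exceeds that threshold I instead take $\lambda = \frac{c}{K\|\balpha\|_\infty}$ on the boundary, where the exponent becomes linear, of order $\frac{\epsilon}{K\|\balpha\|_\infty}$. These two regimes are precisely the two arguments of the $\min$, so recording the smaller exponent covers both cases at once. The factor of $2$ arises from repeating the argument for $-X_i$ (also centered sub-exponential with the same norm) to control the lower tail and taking a union bound. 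The main obstacle is the first step---converting the moment condition into a clean two-sided MGF bound with explicit constants---after which the remainder is the routine Chernoff-plus-optimization template.
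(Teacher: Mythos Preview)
Your proof sketch is correct and follows the standard Bernstein-type argument for sub-exponential sums. Note, however, that the paper does not actually prove this lemma: it is quoted verbatim as \cite[Proposition~5.16]{vershynin2010introduction} without proof, and your outline is precisely the argument given in that cited reference.
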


\subsection{Auxiliary Results for Our Setup} \label{app:aux_our_lemmas}

In the remainder of this appendix, we consider the setup described in Section \ref{sec:prob_setup_examples}.
Based on Lemma~\ref{lem:large_dev}, we have the following.
\begin{lemma}\label{lem:useful_dev}
 Fix any $\bar{\bx} \in \calS^{n-1}$ and let $\bar{\by} := f(\bA \bar{\bx})$. For any $t > 0$, if $m = \Omega\left(t +\log n\right)$, then with probability $1-e^{-\Omega(t)}$, we have 
 \begin{equation}
  \left\|\frac{1}{m}\bA^T (\bar{\by} - \mu \bA \bar{\bx})\right\|_\infty \le O\left(\psi\sqrt{\frac{t+\log n}{m}}\right).
 \end{equation}
\end{lemma}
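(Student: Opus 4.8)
The plan is to bound the $\ell_\infty$ norm coordinate-by-coordinate and then take a union bound over the $n$ coordinates, with each coordinate controlled by the sub-exponential Bernstein inequality of Lemma~\ref{lem:large_dev}. First I would fix a coordinate $j \in [n]$ and write out the $j$-th entry. Letting $g_i := \langle \ba_i, \bar\bx\rangle$, which is $\calN(0,1)$-distributed since $\bar\bx \in \calS^{n-1}$ and $\ba_i \sim \calN(\bzero,\bI_n)$, and recalling $\bar y_i = f(g_i)$, the $j$-th entry equals
\begin{equation}
 \frac{1}{m}\sum_{i=1}^m a_{i,j}\big(\bar y_i - \mu g_i\big),
\end{equation}
where $a_{i,j}$ is the $j$-th component of $\ba_i$. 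I would then show that the summands $W_i := a_{i,j}(\bar y_i - \mu g_i)$ are, for fixed $j$, i.i.d.\ over $i$, centered, and sub-exponential with norm $O(\psi)$.

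For the centering I would exploit that $(a_{i,j}, g_i)$ is jointly Gaussian with $\bbE[a_{i,j}g_i] = \bar x_j$, so Gaussian conditioning gives the decomposition $a_{i,j} = \bar x_j g_i + h_{i,j}$ with $h_{i,j} \sim \calN(0, 1 - \bar x_j^2)$ independent of $g_i$ (and hence of $\bar y_i$, a function of $g_i$ and independent measurement noise). Then
\begin{equation}
 \bbE[W_i] = \bar x_j\,\bbE[g_i(\bar y_i - \mu g_i)] + \bbE[h_{i,j}]\,\bbE[\bar y_i - \mu g_i] = \bar x_j(\mu - \mu) + 0 = 0,
\end{equation}
using $\bbE[g_i \bar y_i] = \bbE[f(g)g] = \mu$ and $\bbE[g_i^2]=1$. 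For the sub-exponential bound, $a_{i,j} \sim \calN(0,1)$ and $\bar y_i - \mu g_i$ is sub-Gaussian with $\|\bar y_i - \mu g_i\|_{\psi_2} \le \psi + \mu\,\|g_i\|_{\psi_2} = O(\psi)$ by the triangle inequality and $\mu \le C\psi$ from \eqref{eq:mupsi}; Lemma~\ref{lem:prod_subGs} then yields $\|W_i\|_{\psi_1} \le \|a_{i,j}\|_{\psi_2}\|\bar y_i - \mu g_i\|_{\psi_2} = O(\psi)$ even though the two factors are dependent.

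Finally I would apply Lemma~\ref{lem:large_dev} to $\sum_i \frac{1}{m} W_i$ with weights $\alpha_i = 1/m$ (so $\|\balpha\|_2^2 = 1/m$ and $\|\balpha\|_\infty = 1/m$) and $K = O(\psi)$, taking the deviation level $s = C'\psi\sqrt{(t+\log n)/m}$. This gives a per-coordinate tail
\begin{equation}
 2\exp\!\Big({-}c\,\min\!\Big(\tfrac{m s^2}{K^2}, \tfrac{m s}{K}\Big)\Big) = 2\exp\!\Big({-}\Omega\big(\min\big(t+\log n,\ \sqrt{m(t+\log n)}\big)\big)\Big).
\end{equation}
The hypothesis $m = \Omega(t + \log n)$ guarantees $\sqrt{m(t+\log n)} \ge t + \log n$, so the minimum equals $t+\log n$ and the per-coordinate failure probability is at most $2\exp(-c''(t + \log n))$. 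A union bound over the $n$ coordinates multiplies this by $n = e^{\log n}$; choosing $C'$ (equivalently $c''$) large enough that $c'' > 1$ absorbs the $e^{\log n}$ factor, leaving overall probability $e^{-\Omega(t)}$, as claimed.

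The main obstacle is the centering step: because $a_{i,j}$ and $g_i$ are correlated, the summand is not obviously mean-zero, and one must use the orthogonal Gaussian decomposition (or equivalently Stein's identity) together with the defining identity $\bbE[f(g)g] = \mu$ to cancel the two contributions. A secondary delicate point is that $W_i$ is a product of \emph{dependent} sub-Gaussians, so sub-exponentiality cannot be argued through independence; Lemma~\ref{lem:prod_subGs} is precisely what is needed. The balancing of the two Bernstein branches is then routine once $m = \Omega(t+\log n)$ is invoked to certify that the sub-Gaussian (squared) branch dominates.
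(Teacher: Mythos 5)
Your proposal is correct and follows essentially the same route as the paper's own proof: the same coordinate-wise reduction, the same orthogonal Gaussian decomposition $a_{ij} = \bar{x}_j g_i + \sqrt{1-\bar{x}_j^2}\, h_{ij}$ to establish centering via $\bbE[f(g)g - \mu g^2] = 0$, the same invocation of Lemma~\ref{lem:prod_subGs} (valid for dependent factors) to get $\|X_{ij}\|_{\psi_1} = O(\psi)$, and the same application of Lemma~\ref{lem:large_dev} with a union bound over the $n$ coordinates absorbed by taking the constant in the deviation level large enough. The only difference is cosmetic: you spell out explicitly that $m = \Omega(t+\log n)$ makes the sub-Gaussian branch of the Bernstein bound dominate, a step the paper leaves implicit.
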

\begin{proof}
 For any fixed $j \in [n]$, let $X_j$ be the $j$-th entry of $\frac{1}{m}\bA^T (\bar{\by} - \mu \bA \bar{\bx})$. We have 
 \begin{align}
  X_j = \frac{1}{m}\sum_{i=1}^m a_{ij} (\bar{y}_i - \mu \langle \ba_i, \bar{\bx}\rangle) = \frac{1}{m}\sum_{i =1}^m X_{ij},
 \end{align}
where $X_{ij} := a_{ij} (\bar{y}_i - \mu \langle \ba_i, \bar{\bx}\rangle)$.  We proceed by showing that $\{X_{ij}\}_{i \in [m]}$ are i.i.d.~sub-exponential random variables. 

Since $\ba_i \sim \calN(\mathbf{0},\bI_n)$, we have $\mathrm{Cov}[a_{ij}, \langle \ba_i, \bar{\bx}\rangle] = \bar{x}_j$. For $i \in [m]$, letting $g := \langle \ba_i, \bar{\bx} \rangle \sim \calN(0,1)$, we find that $a_{ij} \sim \calN(0,1)$ can be written as $a_{ij} = \bar{x}_j g + \sqrt{1-\bar{x}_j^2} h$, where $h \sim \calN(0,1)$ is independent of $g$. Thus, $X_{ij} = a_{ij} (\bar{y}_i - \mu \langle \ba_i, \bar{\bx}\rangle)  = (\bar{x}_j g + \sqrt{1-\bar{x}_j^2} h)(f(g)-\mu g)$, and hence $\bbE[X_{ij}] = \bar{x}_j \bbE[f(g)g - \mu g^2] = \mu - \mu = 0$. In addition, from Lemma~\ref{lem:prod_subGs} and~\eqref{eq:mupsi}, we obtain
\begin{equation}
 \|X_{ij}\|_{\psi_1} \le C' \|f(g) - \mu g\|_{\psi_2} \le C'' \psi.
\end{equation}
For fixed $c' > 0$, letting $\epsilon_j = c' \|X_{1j}\|_{\psi_1} \sqrt{\frac{t + \log n}{m}}$ and $\epsilon = \max_{j} \epsilon_j$, we have from Lemma~\ref{lem:large_dev} that
\begin{align}
 \bbP(|X_j| \ge \epsilon) & \le \bbP(|X_j| \ge \epsilon_j) \\
    &= \bbP \left(\frac{1}{m}\left|\sum_{i=1}^m X_{ij}\right| \ge \epsilon_j \right) \\
    & \le 2\exp\left(-c \min\left(\frac{m\epsilon^2_j}{\|X_{1j}\|_{\psi_1}^2}, \frac{m\epsilon_j}{\|X_{1j}\|_{\psi_1}}\right)\right) \\
    & \le \exp\left(-\Omega(t +\log n)\right), \label{eq:Xj_tail4}
\end{align}
where \eqref{eq:Xj_tail4} uses $m = \Omega\left(t +\log n\right)$ and the choice of $\epsilon_j$.  For sufficiently large $c'$, we can make the implied constant to $\Omega(\cdot)$ in \eqref{eq:Xj_tail4} greater than one, and taking the union bound over $j \in [n]$ gives
\begin{equation}
 \bbP\left(\left\|\frac{1}{m}\bA^T (\bar{\by} - \mu \bA \bar{\bx})\right\|_\infty \ge \epsilon \right) \le n \exp\left(-\Omega(t +\log n)\right) = e^{-\Omega(t)}
\end{equation}
as desired.
\end{proof}

In addition, we have the following useful lemma.
\begin{lemma}\label{lem:varU}
 Fix any $\bar{\bx} \in \calS^{n-1}$ and let $\bar{\by} := f(\bA \bar{\bx})$. For any fixed $\bu \in \bbR^n$, the random variable $U := \frac{1}{m} \left\langle \bu, \bA^T (\bar{\by} - \mu \bA\bar{\bx}) \right\rangle$ has zero mean and is sub-exponential. 
 Moreover, for any $\xi > 0$, if $m = \Omega(\xi^2)$, then with probability $1-e^{-\Omega(\xi^2)}$, we have 
 \begin{equation}
  |U| \le \frac{\xi \psi \|\bu\|_2}{\sqrt{m}}.
 \end{equation}
\end{lemma}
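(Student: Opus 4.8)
The plan is to mirror the argument used for Lemma~\ref{lem:useful_dev}, but tracking the linear functional $\langle\bu,\ba_i\rangle$ in place of a single coordinate $a_{ij}$, and then applying the scalar deviation inequality (Lemma~\ref{lem:large_dev}) directly instead of through a union bound over coordinates. First I would rewrite $U$ as a normalized sum of i.i.d.\ terms by expanding the inner product over the rows of $\bA$:
\[
  U = \frac{1}{m}\sum_{i=1}^m U_i, \qquad U_i := \langle \bu,\ba_i\rangle\big(\bar{y}_i - \mu\langle\ba_i,\bar{\bx}\rangle\big).
\]
Since the rows $\ba_i$ are i.i.d.\ $\calN(\mathbf{0},\bI_n)$ and $\bar{y}_i = f(\langle\ba_i,\bar{\bx}\rangle)$, the $U_i$ are i.i.d.

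Next I would establish the zero-mean and sub-exponential claims. Writing $g := \langle\ba_i,\bar{\bx}\rangle\sim\calN(0,1)$ so that $\bar{y}_i = f(g)$, I would decompose the Gaussian $\langle\bu,\ba_i\rangle$ into its component correlated with $g$ and an independent part: since $\cov(\langle\bu,\ba_i\rangle, g) = \langle\bu,\bar{\bx}\rangle$ and $\var[\langle\bu,\ba_i\rangle]=\|\bu\|_2^2$, one may write
\[
  \langle\bu,\ba_i\rangle = \langle\bu,\bar{\bx}\rangle\, g + \sqrt{\|\bu\|_2^2 - \langle\bu,\bar{\bx}\rangle^2}\; h ,
\]
with $h\sim\calN(0,1)$ independent of $g$. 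Then $\bbE[U_i]$ splits into $\langle\bu,\bar{\bx}\rangle\,\bbE[g(f(g)-\mu g)]$ and a term proportional to $\bbE[h]\,\bbE[f(g)-\mu g]$; the first vanishes because $\bbE[gf(g)]=\mu$ and $\bbE[g^2]=1$, and the second vanishes since $\bbE[h]=0$. For the sub-exponential norm, $\langle\bu,\ba_i\rangle$ is sub-Gaussian with norm $O(\|\bu\|_2)$ and $f(g)-\mu g$ is sub-Gaussian with norm $O(\psi)$ (using $\mu\le C\psi$ from~\eqref{eq:mupsi}), so Lemma~\ref{lem:prod_subGs} gives $\|U_i\|_{\psi_1}\le C\psi\|\bu\|_2$; hence the finite sum $U$ is sub-exponential.

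Finally I would obtain the tail bound from Lemma~\ref{lem:large_dev}, applied with $\balpha = \bone\in\bbR^m$ (so $\|\balpha\|_2^2 = m$ and $\|\balpha\|_\infty = 1$) and $K = \max_i\|U_i\|_{\psi_1} = O(\psi\|\bu\|_2)$, at deviation level $\epsilon = \sqrt{m}\,\xi\psi\|\bu\|_2$, chosen precisely so that dividing the event $|\sum_i U_i|\ge\epsilon$ by $m$ returns the target threshold $\frac{\xi\psi\|\bu\|_2}{\sqrt m}$. The quadratic branch of the minimum evaluates to $\Theta\!\big(\epsilon^2/(K^2 m)\big) = \Omega(\xi^2)$, while the linear branch is $\Theta(\epsilon/K)=\Omega(\sqrt{m}\,\xi)$, so the inequality yields $2\exp(-\Omega(\xi^2))$ once the quadratic branch is the smaller one.

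The only step needing real care is this comparison of the two Bernstein branches: the bound is sub-Gaussian in $\xi$ exactly when the quadratic term governs the minimum, i.e.\ when $\Omega(\xi^2)\le\Omega(\sqrt m\,\xi)$, which is equivalent to $\sqrt m\gtrsim\xi$. This is precisely the hypothesis $m=\Omega(\xi^2)$, so under that assumption the minimum is $\Omega(\xi^2)$ and the probability bound $e^{-\Omega(\xi^2)}$ follows. Everything else is a routine adaptation of the computation in Lemma~\ref{lem:useful_dev}.
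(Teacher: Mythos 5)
Your proposal is correct and follows essentially the same route as the paper's proof: the same decomposition of $U$ into i.i.d.\ terms $U_i$, the same Gaussian conditioning trick writing $\langle\bu,\ba_i\rangle$ as a multiple of $g$ plus an independent $h$ to get $\bbE[U_i]=0$, the bound $\|U_i\|_{\psi_1} = O(\psi\|\bu\|_2)$ via Lemma~\ref{lem:prod_subGs} and \eqref{eq:mupsi}, and Bernstein's inequality (Lemma~\ref{lem:large_dev}) with the quadratic branch dominating precisely under $m=\Omega(\xi^2)$. The only cosmetic difference is that you keep $\|\bu\|_2$ inside $U_i$ where the paper normalizes $\bar{\bu}=\bu/\|\bu\|_2$ and factors the norm out, and your explicit remark on when the quadratic Bernstein branch governs the minimum makes the role of the hypothesis $m=\Omega(\xi^2)$ clearer than the paper's terser treatment.
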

\begin{proof}
 When $\bu$ is the zero vector, the result is trivial, so we only consider $\bu \ne \mathbf{0}$. Following similar steps to the proof of Lemma \ref{lem:useful_dev}, we write
 \begin{align}
  \left\langle \bu, \bA^T (\bar{\by} - \mu \bA\bar{\bx}) \right\rangle & =  \sum_{j=1}^n u_j \sum_{i=1}^m a_{ij} (\bar{y}_i - \mu \langle \ba_i, \bar{\bx}\rangle) \\
  & = \sum_{i=1}^m(\bar{y}_i - \mu \langle \ba_i, \bar{\bx}\rangle) \sum_{j =1}^n u_j a_{ij} \\
  & = \|\bu\|_2 \sum_{i=1}^m(\bar{y}_i - \mu \langle \ba_i, \bar{\bx}\rangle) \langle \ba_i, \bar{\bu}\rangle \\
  & = \|\bu\|_2 \sum_{i=1}^m U_i,
 \end{align}
where $\bar{\bu} = \frac{\bu}{\|\bu\|_2}$ and $U_i := (\bar{y}_i - \mu \langle \ba_i, \bar{\bx}\rangle) \langle \ba_i, \bar{\bu}\rangle$. We proceed by showing that $U_1,\ldots,U_m$ are i.i.d.~sub-exponential random variables. Note that $\langle \ba_i, \bar{\bu}\rangle \sim \calN(0,1)$, and $\mathrm{Cov}[\langle \ba_i, \bar{\bu}\rangle, \langle \ba_i, \bar{\bx}\rangle] = \langle \bar{\bx}, \bar{\bu}\rangle$. 
Fixing $i \in [m]$ and letting $g := \langle \ba_i, \bar{\bx}\rangle \sim \calN(0,1)$, we find that $\langle \ba_i, \bar{\bu}\rangle$ can be written as $\langle \ba_i, \bar{\bu}\rangle = \langle \bar{\bx}, \bar{\bu}\rangle g + \sqrt{1-\langle \bar{\bx}, \bar{\bu}\rangle^2} h$, where $h \sim \calN(0,1)$ is independent of $g$. Therefore, we obtain
\begin{equation}
 \bbE[U_i] = \bbE \left[(\bar{y}_i - \mu \langle \ba_i, \bar{\bx}\rangle) \langle \ba_i, \bar{\bu}\rangle\right] = \langle \bar{\bx}, \bar{\bu}\rangle\bbE [f(g)g-\mu g^2] = 0.
\end{equation}
In addition, from Lemma~\ref{lem:prod_subGs} and~\eqref{eq:mupsi}, we derive 
\begin{equation}
 \|U_i\|_{\psi_1} 
    \le C' \|f(g) - \mu g\|_{\psi_2} \le C'' \psi.
\end{equation}
Letting $\epsilon = c' \frac{\xi \psi \|\bu\|_2}{\sqrt{m}}$, we deduce from Lemma \ref{lem:large_dev} that
\begin{align}
 \bbP(|U| \ge \epsilon) &= \bbP\left(\frac{\|\bu\|_2}{m}\left|\sum_{i=1}^m U_i\right| \ge \epsilon\right) \\
 & \le 2 \exp\left(-c \min \left(\frac{m\epsilon^2}{\|U_i\|_{\psi_1}^2 \|\bu\|_2^2}, \frac{m\epsilon}{\|U_i\|_{\psi_1}\|\bu\|_2}\right)\right) \\
 & \le e^{-\Omega\left(\xi^2\right)}, \label{eq:Utail_final}
\end{align}
where \eqref{eq:Utail_final} follows from $m = \Omega(\xi^2)$ and the choice of $\epsilon$.
\end{proof}

Based on the above results, we are now in a positive to prove Lemma~\ref{lem:bxStarbxHat}.  

\subsection{Proof of Lemma~\ref{lem:bxStarbxHat} (Main Auxiliary Result for Proving Theorem \ref{thm:generative})}

We utilize ideas from \cite{bora2017compressed} based on forming a chain of nets.  Specifically, for a positive integer $l$, let $M = M_0 \subseteq M_1 \subseteq \ldots \subseteq M_l$ be a chain of nets of $B_2^k(r)$ such that $M_i$ is a $\frac{\delta_i}{L}$-net with $\delta_i = \frac{\delta}{2^i}$.  There exists such a chain of nets with \cite[Lemma~5.2]{vershynin2010introduction}
    \begin{equation}
        \log |M_i| \le k \log\frac{4Lr}{\delta_i}. \label{eq:net_size}
    \end{equation}
    By the $L$-Lipschitz assumption on $G$, we have for any $i \in [l]$ that $G(M_i)$ is a $\delta_i$-net of $G(B_2^k(r))$. 
We write $\tilde{\bx}$ as 
\begin{equation}
 \tilde{\bx}  = (\tilde{\bx} - \tilde{\bx}_l) + (\tilde{\bx}_l - \tilde{\bx}_{l-1}) + \ldots + (\tilde{\bx}_1 - \tilde{\bx}_0) + \tilde{\bx}_0, \label{eq:net_decomp}
\end{equation}
where $\tilde{\bx}_i\in G(M_i)$ for all $i \in [l]$, and $\|\tilde{\bx}- \tilde{\bx}_l\|_2 \le \frac{\delta}{2^l}$, $\|\tilde{\bx}_i - \tilde{\bx}_{i-1}\|_2 \le \frac{\delta}{2^{i-1}}$ for all $i \in [l]$. Therefore, the triangle inequality gives
    \begin{equation}\label{eq:hatbx}
        \|\tilde{\bx}-\tilde{\bx}_0\|_2 < 2\delta.
    \end{equation}
We decompose $\frac{1}{m}\langle \bA^T (\bar{\by} -\mu\bA\bar{\bx}),\tilde{\bx}- \mu\bar{\bx} \rangle$ into three terms: 
\begin{align}
 \left\langle \frac{1}{m} \bA^T (\bar{\by} -\mu\bA\bar{\bx}), \tilde{\bx} - \mu\bar{\bx} \right\rangle
 = &\left\langle \frac{1}{m} \bA^T (\bar{\by} -\mu\bA\bar{\bx}), \tilde{\bx}_0 - \mu\bar{\bx} \right\rangle  \nn \\ &+ \sum_{i=1}^l \left\langle \frac{1}{m} \bA^T (\bar{\by} -\mu\bA\bar{\bx}), \tilde{\bx}_i - \tilde{\bx}_{i-1} \right\rangle \nn \\
 & + \left\langle \frac{1}{m} \bA^T (\bar{\by} -\mu\bA\bar{\bx}), \tilde{\bx} - \tilde{\bx}_{l} \right\rangle.\label{eq:three_terms}
\end{align}
We derive upper bounds for these terms separately:
\begin{enumerate}[leftmargin=5ex,itemsep=0ex,topsep=0.25ex]
 \item For any $\bt \in \bbR^n$, from Lemma~\ref{lem:varU}, we have that for any $\xi > 0$, if $m = \Omega\left(\xi^2\right)$, then with probability $1-e^{-\Omega(\xi^2)}$, 
\begin{equation}\label{eq:single_ub}
 \left\langle \frac{1}{m} \bA^T (\bar{\by} -\mu\bA\bar{\bx}), \bt - \mu\bar{\bx} \right\rangle \le \frac{\xi \psi}{\sqrt{m}} \|\bt - \mu\bar{\bx}\|_2.
\end{equation}
Recall that $\log|G(M)| = \log |M| \le k\log\frac{4Lr}{\delta}$. We set $\xi = C \sqrt{k \log \frac{Lr}{\delta}}$ in~\eqref{eq:single_ub}, where $C$ is a certain positive constant, and let $m = \Omega\left(\xi^2\right) = \Omega\left(k \log\frac{Lr}{\delta}\right)$. By the union bound over $G(M)$, we have that with probability $1-e^{-\Omega\left(k\log \frac{Lr}{\delta}\right)}$, for {\em all} $\bt \in G(M)$,
\begin{equation}\label{eq:union_bd}
 \left\langle \frac{1}{m} \bA^T (\bar{\by} -\mu\bA\bar{\bx}), \bt - \mu\bar{\bx} \right\rangle \le O\left(\psi\sqrt{\frac{k\log \frac{Lr}{\delta}}{m}}\right) \|\bt - \mu\bar{\bx}\|_2.
\end{equation}
Therefore, with probability $1-e^{-\Omega\left(k\log \frac{Lr}{\delta}\right)}$, the first term in~\eqref{eq:three_terms} can be upper bounded by 
\begin{align}
 \left\langle \frac{1}{m} \bA^T (\bar{\by} -\mu\bA\bar{\bx}), \tilde{\bx}_0 - \mu\bar{\bx} \right\rangle & \le O\left(\psi\sqrt{\frac{k\log \frac{Lr}{\delta}}{m}}\right) \|\tilde{\bx}_0 - \mu\bar{\bx}\|_2 \\
 & \le O\left(\psi\sqrt{\frac{k\log \frac{Lr}{\delta}}{m}}\right) (\|\tilde{\bx} - \mu\bar{\bx}\|_2 + 2\delta),\label{eq:first_term}
\end{align}
where \eqref{eq:first_term} uses~\eqref{eq:hatbx} and the triangle inequality.

\item From Lemma~\ref{lem:varU}, similarly to~\eqref{eq:single_ub}, 
and applying the union bound, we obtain that for {\em all} $i \in [l]$ with corresponding $\xi_i >0$ and {\em all} $(\bt_{i-1},\bt_{i})$ pairs in $G(M_{i-1}) \times G(M_{i})$, if $m = \Omega\left(\max_i \xi^2_i\right)$, then with probability at least $1-\sum_{i=1}^l |M_{i-1}| \cdot |M_{i}|e^{-\frac{\xi_i^2}{2}}$,
\begin{equation}
 \left\langle \frac{1}{m} \bA^T (\bar{\by} -\mu\bA\bar{\bx}), \bt_i - \bt_{i-1} \right\rangle \le \frac{\xi_i \psi}{\sqrt{m}} \|\bt_i - \bt_{i-1}\|_2.
\end{equation}
Since~\eqref{eq:net_size} gives $\log \left( |M_i| \cdot |M_{i-1}|\right) \le 2i k + 2k\log\frac{4Lr}{\delta}$, if we set $\xi_i = C' \sqrt{ik+ k \log \frac{Lr}{\delta}}$ with $C'$ sufficiently large, we obtain
\begin{equation}
 \sum_{i=1}^l |M_{i-1}| \cdot |M_{i}|e^{-\frac{\xi_i^2}{2}} = \sum_{i=1}^l e^{-\Omega\left(ik + k \log\frac{Lr}{\delta}\right)} = e^{-\Omega\left(k \log\frac{Lr}{\delta}\right)}\sum_{i=1}^l e^{-\Omega(ik)} = e^{-\Omega\left(k \log\frac{Lr}{\delta}\right)}.
\end{equation}
Recall that $\|\tilde{\bx}_i - \tilde{\bx}_{i-1}\|_2 \le \frac{\delta}{2^{i-1}}$ for all $i \in [l]$. Then, we obtain that if $m = \Omega\left(k\left(l + \log\frac{Lr}{\delta}\right)\right)$, with probability $1-e^{-\Omega\left(k\log \frac{Lr}{\delta}\right)}$, the second term in~\eqref{eq:three_terms} can be upper bounded by 
\begin{align}
 \sum_{i=1}^l \left\langle \frac{1}{m} \bA^T (\bar{\by} -\mu\bA\bar{\bx}), \tilde{\bx}_i - \tilde{\bx}_{i-1} \right\rangle &\le \frac{\psi}{\sqrt{m}} \sum_{i=1}^l \xi_i \|\tilde{\bx}_i - \tilde{\bx}_{i-1}\|_2  \\
 & \le C'\psi \sum_{i=1}^l \sqrt{\frac{ik + k\log \frac{Lr}{\delta}}{m}} \times \frac{\delta}{2^{i-1}} \label{eq:second_term0} \\
 & \le C' \psi\delta \sqrt{\frac{k}{m}} \sum_{i=1}^l \frac{\sqrt{i} + \sqrt{\log \frac{Lr}{\delta}}}{2^{i-1}} \label{eq:second_term1} \\
 & = O\left(\psi\delta\sqrt{\frac{k \log \frac{Lr}{\delta}}{m}}\right),\label{eq:second_term}
\end{align}
where \eqref{eq:second_term0} substitutes the choice of $\xi_i$, \eqref{eq:second_term1} uses $\sqrt{a+b} \le \sqrt{a}+\sqrt{b}$, and \eqref{eq:second_term} uses the assumption $Lr = \Omega(\delta n)$ and the fact that $\sum_{i=1}^{\infty} \frac{\sqrt i}{2^{i-1}}$ is finite.

\item With  $m = \Omega\left(k \log \frac{Lr}{\delta}\right)$, if we set $t = \Omega(k\log \frac{Lr}{\delta})$ in Lemma~\ref{lem:useful_dev}, we obtain with probability $1-e^{-\Omega\left(k\log \frac{Lr}{\delta}\right)}$ that
\begin{equation}
 \left\|\frac{1}{m} \bA^T (\bar{\by} -\mu\bA\bar{\bx})\right\|_\infty \le O\left(\psi\sqrt{\frac{k\log \frac{Lr}{\delta}}{m}}\right).
\end{equation}
Then, setting $l = \lceil \log_2 n \rceil$, with probability $1-e^{-\Omega\left(k\log \frac{Lr}{\delta}\right)}$, the third term in~\eqref{eq:three_terms} can be upper bounded as follows: 
\begin{align}
 \left\langle \frac{1}{m} \bA^T (\bar{\by} -\mu\bA\bar{\bx}), \tilde{\bx} - \tilde{\bx}_{l} \right\rangle 
    &\le \left\|\frac{1}{m} \bA^T (\bar{\by} -\mu\bA\bar{\bx})\right\|_\infty \|\tilde{\bx} - \tilde{\bx}_{l}\|_1 \label{eq:third_term0} \\
 & \le O\left(\psi\sqrt{\frac{k\log \frac{Lr}{\delta}}{m}}\right) \sqrt{n} \|\tilde{\bx} - \tilde{\bx}_{l}\|_2 \label{eq:third_term1} \\
 & \le O\left(\psi\sqrt{\frac{k\log \frac{Lr}{\delta}}{m}}\right) \sqrt{n} \times \frac{\delta}{2^l} \label{eq:third_term2} \\
 & = O\left(\psi\delta\sqrt{\frac{k\log \frac{Lr}{\delta}}{m}}\right),\label{eq:third_term}
\end{align}
where \eqref{eq:third_term0} uses H\"older's inequality, \eqref{eq:third_term1} uses $\|\bv\|_1 \le \sqrt{n}\|\bv\|_2$ for $\bv \in \bbR^n$, \eqref{eq:third_term2} uses the definition of $\tilde{\bx}_{l}$, and \eqref{eq:third_term} uses $l = \lceil \log_2 n \rceil$.
\end{enumerate}
By the assumption $Lr = \Omega(\delta n)$, the choice $l = \lceil \log_2 n \rceil$ leads to $m = \Omega\left(k\left(l + \log\frac{Lr}{\delta}\right)\right) = \Omega\left(k\log\frac{Lr}{\delta}\right)$. Substituting \eqref{eq:first_term},~\eqref{eq:second_term}, and \eqref{eq:third_term} into \eqref{eq:three_terms}, we obtain that when $m = \Omega\left(k\log\frac{Lr}{\delta}\right)$, with probability $1-e^{-\Omega\left(k\log \frac{Lr}{\delta}\right)}$,
\begin{equation}
 \left\langle \frac{1}{m}\bA^T(\bar{\by} - \mu\bA \bar{\bx}), \tilde{\bx}-\mu \bar{\bx} \right \rangle \le O\left(\psi\sqrt{\frac{k\log \frac{Lr}{\delta}}{m}}\right)\|\tilde{\bx} - \mu\bar{\bx}\|_2 + O\left(\delta\psi\sqrt{\frac{k\log \frac{Lr}{\delta}}{m}}\right).
\end{equation} 
This completes the proof of Lemma~\ref{lem:bxStarbxHat}.

\vspace*{-1ex}
\section{Omitted Proofs from Section \ref{sec:extensions} (Other Extensions)} \label{sec:pf_other}
\vspace*{-1ex}

\subsection{Proof Outline for Corollary \ref{coro:bddSparse} (Bounded Sparse Vectors)} \label{sec:pf_sparse}
\vspace*{-1ex}

For fixed $\nu >0$, let $\calS_\nu := \Sigma_{k}^n \cap \nu B_2^n$,  where $\Sigma_k^n$ represents the set of $k$-sparse vectors in $\bbR^n$. We know that for any $\delta>0$, there exists a $\delta$-net $\calM_\nu$ of $\calS_\nu$ with $|\calM_\nu| \le \binom{n}{k} \left(\frac{\nu}{\delta}\right)^k \le \left(\frac{e n \nu}{k \delta}\right)^k = \exp\left(O\left(k \log \frac{\nu n}{\delta k}\right)\right)$~\cite{baraniuk2008simple}. 
Using this observation and following the proof of Theorem \ref{thm:generative}, we can derive the Corollary~\ref{coro:bddSparse} for the case that the signal comes from the set of bounded $k$-sparse vectors.

\vspace*{-1ex}
\subsection{Proof of Corollary \ref{coro:arb_cov} (General Covariance Matrices)} \label{sec:pf_gen_cov}
\vspace*{-1ex}

We can write $\ba_i$ as $\ba_i = \sqrt{\bSigma}\bb_i$ with $\bb_i \sim \calN(\mathbf{0},\bI_n)$. Letting\footnote{For matrices $\bV_{1} \in \mathbb{R}^{F_{1}\times N}$ and $\bV_{2} \in \mathbb{R}^{F_{2}\times N}$, we let $\left[\bV_{1}; \bV_{2}\right]$ denote the vertical concatenation.} $\bA = \left[\ba_1^T;\ba_2^T;\ldots;\ba_m^T\right] \in \bbR^{m \times n}$ and $\bB = \left[\bb_1^T;\ldots;\bb_m^T\right] \in \bbR^{m \times n}$, we have
\begin{align}
 & \hat{\bx} = \arg \min_{\bx \in \calK} \|\by - \bA \bx\|_2 \nonumber \\
 \Leftrightarrow~  &  \hat{\bx} = \arg \min_{\bx \in \calK} \|\by - \bB \sqrt{\bSigma} \bx\|_2 \\
\Leftrightarrow~ &  \sqrt{\bSigma}\hat{\bx} = \arg \min_{\bx \in \sqrt{\bSigma}\calK} \|\by - \bB \bx\|_2. 
\end{align}
Define $\hat{G}$ as $ \hat{G}(\bz) = \sqrt{\bSigma} G(\bz)$ for all $\bz \in B_2^k(r)$. Then, it is straightforward to establish that $\hat{G}$ is $\hat{L}$-Lipschitz with $\hat{L}= \|\bSigma\|_{2\to 2}^{\frac{1}{2}} L$. In addition, we have $\by = f(\bA \bx^*) = f(\bB \sqrt{\bSigma} \bx^*)$, $\|\sqrt{\bSigma} \bx^*\|_2 =1$ and $\mu \big(\sqrt{\bSigma} \bx^*\big) \in \sqrt{\bSigma}\calK = \hat{G}(B_2^k(r))$. Applying Theorem~\ref{thm:generative}, we obtain that when $\|\bSigma\|_{2\to 2}^{\frac{1}{2}} Lr = \Omega(\epsilon \psi n)$ and $m = \Omega\big(\frac{k}{\epsilon^2} \log \frac{\|\bSigma\|_{2\to 2}^{\frac{1}{2}} Lr}{\epsilon \psi}\big)$, with probability $1-e^{-\Omega\left(\epsilon^2 m\right)}$,
 \begin{equation}
  \|\sqrt{\bSigma}\hat{\bx}-\mu \sqrt{\bSigma} \bx^*\|_2 \le \psi \epsilon + \tau,
 \end{equation}
 as desired. 

\vspace*{-1ex}
\section{Alternative Model for Binary Measurements} \label{app:alterBinary}
\vspace*{-1ex}

For binary observations, the following measurement model is considered in various works \cite{plan2012robust,zhang2014efficient,zhu2015towards,chen2015one}:  The response variables, $y_i \in \{-1,1\}, i \in [m]$, are drawn independently at random according to some distribution satisfying
\begin{equation}\label{eq:binary_gen}
 \bbE [y_i|\ba_i] = \theta(\ba_i^T \bx^*),
\end{equation}
for some deterministic function $\theta$ with $-1 \le  \theta (z) \le 1$.  In this section, we provide a result related to Theorem \ref{thm:generative} for this model, again considering the case that $\ba_i \sim \calN(\mathbf{0},\bI_n)$ and $\bx^* \in \calK \cap \calS^{n-1}$  with $\calK=G(B_2^k(r))$ for some $L$-Lipschitz generative model $G$.

The model \eqref{eq:binary_gen} is a special case of \eqref{eq:yi} in which $f(g) \in \{-1,1\}$ and $\bbE[f(g)] = \theta(g)$.  Using this interpretation and the tower property of expectation, we readily find that
\begin{equation}
    \mu = \bbE[ \bbE[ f(g) g \,|\, g ]] = \bbE [\theta(g) g]
\end{equation}
with $g \sim \calN(0,1)$. In addition, we have for any $i \in [m]$ that
\begin{equation}
 \bbE [y_i \ba_i^T \bx^*] = \bbE [\bbE[y_i \ba_i^T \bx^* | \ba_i] ] = \bbE [(\ba_i^T \bx^*) \theta(\ba_i^T \bx^*)] = \mu,
\end{equation}
and it is straightforward to show that~\cite[Lemma~4]{zhang2014efficient}
\begin{equation}\label{eq:useful_exp}
 \bbE [y_i \ba_i ] = \mu \bx^*.
\end{equation}
Let $\tilde{\by} \in \{-1,1\}^m$ be a vector of corrupted observations satisfying $\frac{1}{\sqrt{m}}\|\by - \tilde{\by}\|_2 \le \tau$. 
To derive an estimator for $\bx^*$, we seek $\hat{\bx}$ {\em maximizing} $\tilde{\by}^T(\bA \bx)$ over $\bx \in \calK=G(B_2^k(r))$, i.e.,
\begin{equation}\label{eq:corr_max}
 \hat{\bx}:=\arg \max_{\bx \in \calK} \tilde{\by}^T(\bA \bx).
\end{equation}
As was done in previous works such as \cite{plan2012robust,zhang2014efficient}, we assume that the considered low-dimensional set is contained in the unit Euclidean ball, i.e., $\calK \subseteq B_2^n$.  In this section, we establish the following theorem, which is similar to Theorem~\ref{thm:generative}. Although the ideas are similar, the model assumptions and the algorithms used are slightly different, so the results are both of interest.


\begin{theorem}\label{thm:generative_binary}
Consider any $\bx^* \in \calK \cap \calS^{n-1}$ with $\calK = G(B_2^k(r)) \subseteq B_2^n$ for some $L$-Lipschitz generative model $G \,:\, B_2^k(r) \to \bbR^n$, along with $\by$ generated from the model \eqref{eq:binary_gen} with $\ba_i \overset{i.i.d.}{\sim} \calN(\mathbf{0},\bI_n)$, and an arbitrary corrupted vector $\tilde{\by}$ with $\frac{1}{\sqrt{m}} \|\tilde{\by}-\by\|_2 \le \tau$.
 For any $\epsilon >0$, if $Lr = \Omega(\epsilon n)$ and $m = \Omega\left(\frac{k}{\epsilon^2}\log \frac{Lr}{\epsilon}\right)$, then with probability $1-e^{-\Omega(\epsilon^2 m)}$, any solution $\hat{\bx}$ to~\eqref{eq:corr_max} satisfies 
 \begin{equation}
  \|\bx^* - \hat{\bx}\|_2 \le \frac{\epsilon + \tau}{\mu}.
 \end{equation}
\end{theorem}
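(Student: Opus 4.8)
The plan is to exploit the optimality of $\hat{\bx}$ for the correlation objective~\eqref{eq:corr_max}. Since $\bx^* \in \calK$ and $\hat{\bx}$ maximizes $\tilde{\by}^T(\bA\bx)$ over $\calK$, we have $\big\langle \tfrac{1}{m}\bA^T\tilde{\by},\, \hat{\bx}-\bx^*\big\rangle \ge 0$. First I would split $\tfrac{1}{m}\bA^T\tilde{\by} = \mu\bx^* + \tfrac{1}{m}\bA^T(\tilde{\by}-\by) + \be$, where $\be := \tfrac{1}{m}\bA^T\by - \mu\bx^*$ is the statistical error; the key input \eqref{eq:useful_exp} (i.e.\ $\bbE[y_i\ba_i]=\mu\bx^*$) guarantees $\bbE[\be]=\bzero$. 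Rearranging the optimality inequality then gives
\begin{equation*}
 \mu\,\langle\bx^*,\, \bx^*-\hat{\bx}\rangle \;\le\; \Big\langle \tfrac{1}{m}\bA^T(\tilde{\by}-\by),\, \hat{\bx}-\bx^*\Big\rangle + \langle\be,\, \hat{\bx}-\bx^*\rangle .
\end{equation*}

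The step that genuinely departs from the $\calK$-Lasso proof of Theorem~\ref{thm:generative} is to convert the left-hand inner product into a squared error. Using $\|\bx^*\|_2=1$ together with $\|\hat{\bx}\|_2\le 1$ (which holds precisely because $\calK\subseteq B_2^n$), I would write
\begin{equation*}
 2\langle\bx^*,\, \bx^*-\hat{\bx}\rangle = \|\bx^*-\hat{\bx}\|_2^2 + \big(1-\|\hat{\bx}\|_2^2\big) \ge \|\bx^*-\hat{\bx}\|_2^2,
\end{equation*}
so that $\langle\bx^*,\bx^*-\hat{\bx}\rangle \ge \tfrac{1}{2}\|\bx^*-\hat{\bx}\|_2^2$. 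This geometric bound plays the role that the S-REC lower bound (Lemma~\ref{lem:boraSREC}) plays in the Lasso analysis, and it is what forces the assumption $\calK\subseteq B_2^n$.

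Next I would bound the two right-hand terms. For the adversarial term, rewriting it as $\big\langle \tfrac{1}{\sqrt m}(\tilde{\by}-\by),\, \tfrac{1}{\sqrt m}\bA(\hat{\bx}-\bx^*)\big\rangle$ and applying Cauchy--Schwarz, the constraint $\tfrac{1}{\sqrt m}\|\tilde{\by}-\by\|_2\le\tau$ and the upper bound of Lemma~\ref{lem:boraSREC_gen} (with $\alpha=\tfrac12$, noting $\bx^*,\hat{\bx}\in\calK$) yield a bound of $\tau\cdot O(\|\hat{\bx}-\bx^*\|_2+\delta)$. For the statistical term I would establish an analog of Lemma~\ref{lem:bxStarbxHat}. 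Since $y_i\in\{-1,1\}$ forces $\psi=O(1)$, and since by \eqref{eq:useful_exp} each summand $\langle\ba_i,\bu\rangle y_i - \mu\langle\bx^*,\bu\rangle$ is mean-zero and sub-exponential with norm $O(\|\bu\|_2)$ (Lemma~\ref{lem:prod_subGs}), the per-vector concentration of Lemma~\ref{lem:varU} carries over verbatim with $\bar{\by}-\mu\bA\bar{\bx}$ replaced by $\by$ and the centering taken at $\mu\bx^*$. Running the same chain-of-nets argument over $\calK=G(B_2^k(r))$ then gives, for $Lr=\Omega(\delta n)$ and $m=\Omega\!\big(k\log\tfrac{Lr}{\delta}\big)$,
\begin{equation*}
 \langle\be,\, \hat{\bx}-\bx^*\rangle \le O\!\left(\sqrt{\tfrac{k\log\frac{Lr}{\delta}}{m}}\right)\|\hat{\bx}-\bx^*\|_2 + O\!\left(\delta\sqrt{\tfrac{k\log\frac{Lr}{\delta}}{m}}\right).
\end{equation*}

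Combining the three displays gives $\tfrac{\mu}{2}\|\bx^*-\hat{\bx}\|_2^2 \le P\,\|\bx^*-\hat{\bx}\|_2 + Q$ with $P=O\!\big(\sqrt{k\log(Lr/\delta)/m}+\tau\big)$ and $Q=O\!\big(\delta(\sqrt{k\log(Lr/\delta)/m}+\tau)\big)$. Splitting into the two cases according to whether $P\|\bx^*-\hat{\bx}\|_2$ or $Q$ dominates (exactly as in the proof of Theorem~\ref{thm:generative}) yields $\|\bx^*-\hat{\bx}\|_2 \le \max\{4P/\mu,\, 2\sqrt{Q/\mu}\}$. Taking $\delta=\Theta(\epsilon)$ and $m=\Omega\!\big(\tfrac{k}{\epsilon^2}\log\tfrac{Lr}{\epsilon}\big)$ makes $\sqrt{k\log(Lr/\delta)/m}=O(\epsilon)$, so $P=O(\epsilon+\tau)$ and $Q=O(\epsilon(\epsilon+\tau))$; since $\mu=\bbE[\theta(g)g]\le 1$ by Cauchy--Schwarz, both candidate bounds are $O\!\big(\tfrac{\epsilon+\tau}{\mu}\big)$, and rescaling $\epsilon$ by a constant delivers $\|\bx^*-\hat{\bx}\|_2\le\tfrac{\epsilon+\tau}{\mu}$. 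The probability $1-e^{-\Omega(\epsilon^2 m)}$ and the verification $Lr=\Omega(\delta n)$ then follow as in Theorem~\ref{thm:generative}. I expect the main obstacle to be the geometric conversion in the second paragraph: it is the one genuinely new ingredient relative to the Lasso proof and the only place the constraint $\calK\subseteq B_2^n$ is used, whereas adapting the chaining lemma is routine once one checks via \eqref{eq:useful_exp} that the relevant summands are centered.
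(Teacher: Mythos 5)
Your proposal is correct and follows the paper's own proof essentially step for step: the same optimality inequality $\tilde{\by}^T(\bA\hat{\bx}) \ge \tilde{\by}^T(\bA\bx^*)$ rearranged into \eqref{eq:imp_eq}, the identical geometric conversion $\frac{\mu}{2}\|\hat{\bx}-\bx^*\|_2^2 \le \langle \mu\bx^*, \bx^*-\hat{\bx}\rangle$ exploiting $\|\hat{\bx}\|_2 \le 1$ and $\|\bx^*\|_2 = 1$ (the paper's \eqref{eq:1-bitLB}, and you correctly identify it as the replacement for the S-REC lower bound and the sole use of $\calK \subseteq B_2^n$), the same Cauchy--Schwarz treatment of the adversarial term via Lemma~\ref{lem:boraSREC_gen}, the same chaining analog of Lemma~\ref{lem:bxStarbxHat} (the paper's Lemma~\ref{lem:bxStarbxHat_1bit}, built on \eqref{eq:useful_exp}), and the same case-split conclusion. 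The only cosmetic difference is that you route the concentration through the sub-exponential machinery (Lemmas~\ref{lem:prod_subGs} and~\ref{lem:large_dev}, carrying Lemma~\ref{lem:varU} over verbatim), whereas the paper exploits the boundedness of the binary $y_i$ to use the sub-Gaussian Hoeffding-type bound (Lemma~\ref{lem:large_dev_Gaussian}, yielding Lemmas~\ref{lem:useful_dev_Gaussian} and~\ref{lem:varU_Gaussian} without the side condition $m = \Omega(\xi^2)$); since the chaining sets $\xi = C\sqrt{k\log\frac{Lr}{\delta}}$ with $m = \Omega(k\log\frac{Lr}{\delta})$ anyway, both routes give the same $e^{-\Omega(\xi^2)}$ tails and the conclusion is unaffected.
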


The proof is mostly similar to that of Theorem \ref{thm:generative}, so we only outline the differences in the following.

\vspace*{-1ex}
\subsection{Auxiliary Results}\label{sec:thm_binary}
\vspace*{-1ex}

In the remainder of this appendix, we assume that the binary vector $\by$ is generated according to~\eqref{eq:binary_gen}.
Note that for binary measurements, the relevant random variables are sub-Gaussian, and thus we only need concentration inequalities for sub-Gaussian random variables, instead of those for sub-exponential random variables. According to~\cite[Proposition~5.10]{vershynin2010introduction}, we have the following concentration inequality for sub-Gaussian random variables. 
\begin{lemma} {\em (Hoeffding-type inequality \cite[Proposition~5.10]{vershynin2010introduction})}
\label{lem:large_dev_Gaussian} Let $X_{1}, \ldots , X_{N}$ be independent zero-mean sub-Gaussian random variables, and let $K = \max_i \|X_i\|_{\psi_2}$. Then, for any $\balpha=[\alpha_1,\alpha_2,\ldots,\alpha_N]^T \in \mathbb{R}^N$ and any $t\ge 0$, it holds that
\begin{equation}
\mathbb{P}\left( \Big|\sum_{i=1}^{N} \alpha_i X_{i}\Big| \ge t\right) \le   \exp\left(1-\frac{ct^2}{K^2\|\balpha\|_2^2}\right),
\end{equation}
where $c>0$ is a constant.
\end{lemma}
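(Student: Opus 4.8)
The plan is to prove this Hoeffding-type inequality by the classical Chernoff (exponential-moment) method for sums of independent sub-Gaussian random variables. The argument proceeds in three stages: (i) upgrade the $\psi_2$-moment description of each summand to a bound on its moment generating function (MGF); (ii) tensorize the MGF across the independent summands using the weights $\alpha_i$; and (iii) apply Markov's inequality to the exponentiated sum and optimize the free parameter, finally symmetrizing to obtain a two-sided tail.

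First I would establish that a centered random variable $X$ with $\|X\|_{\psi_2} \le K$ satisfies the exponential-moment bound $\mathbb{E}\big[e^{\lambda X}\big] \le e^{c_0 \lambda^2 K^2}$ for all $\lambda \in \mathbb{R}$ and some absolute constant $c_0 > 0$. To see this, I expand $e^{\lambda X}$ as a power series and take expectations. The constant term is $1$, the $p=1$ term $\lambda\,\mathbb{E}[X]$ vanishes by centering, and for $p \ge 2$ the definition of $\|X\|_{\psi_2}$ gives $\mathbb{E}[|X|^p] \le (K\sqrt{p})^p = K^p p^{p/2}$. Combining this with $p! \ge (p/e)^p$ bounds the $p$-th term in absolute value by $\big(e K |\lambda|/\sqrt{p}\big)^p$, and summing the resulting (convergent) series gives a bound of the form $1 + O(\lambda^2 K^2)$ in the regime $|\lambda| \lesssim 1/K$, which is absorbed into $e^{c_0\lambda^2 K^2}$; the complementary regime $|\lambda| \gtrsim 1/K$ is handled by the standard equivalence of sub-Gaussian characterizations, yielding the MGF bound for all $\lambda$.

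Next, writing $S = \sum_{i=1}^N \alpha_i X_i$ and using independence, the MGF factorizes. Since each $\alpha_i X_i$ is again centered with $\|\alpha_i X_i\|_{\psi_2} = |\alpha_i|\,\|X_i\|_{\psi_2} \le |\alpha_i| K$, the previous step gives
\[
 \mathbb{E}\big[e^{\lambda S}\big] = \prod_{i=1}^N \mathbb{E}\big[e^{\lambda \alpha_i X_i}\big] \le \prod_{i=1}^N e^{c_0 \lambda^2 \alpha_i^2 K^2} = e^{c_0 \lambda^2 K^2 \|\balpha\|_2^2}.
\]
Applying the Chernoff bound, for any $\lambda > 0$ we have $\mathbb{P}(S \ge t) \le e^{-\lambda t}\,\mathbb{E}[e^{\lambda S}] \le \exp\big(-\lambda t + c_0 \lambda^2 K^2 \|\balpha\|_2^2\big)$. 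Optimizing over $\lambda$ by taking $\lambda = t/(2 c_0 K^2 \|\balpha\|_2^2)$ yields $\mathbb{P}(S \ge t) \le \exp\big(-t^2/(4 c_0 K^2 \|\balpha\|_2^2)\big)$. Running the identical argument on $-S$ (whose summands $-\alpha_i X_i$ are centered sub-Gaussian with the same norms) and taking a union bound produces the two-sided estimate with a factor of $2$; since $2 \le e^1$, this factor is absorbed into the stated $\exp(1 - \cdots)$ prefactor, giving the claim with $c = 1/(4 c_0)$.

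The main obstacle is stage (i): carefully justifying the passage from the $\psi_2$-moment bound to an MGF bound valid for \emph{all} $\lambda$ (not merely small $\lambda$), together with the constant bookkeeping in summing the Taylor series and the essential use of centering to kill the first-order term. Once this sub-Gaussian MGF bound is in hand, the tensorization and the Chernoff optimization are entirely routine. Since the paper invokes this only as a cited black box (\cite[Proposition~5.10]{vershynin2010introduction}), in practice I would state the MGF bound as the standard sub-Gaussian equivalence and devote the bulk of the write-up to the clean factorize-and-optimize steps.
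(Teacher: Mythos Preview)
Your proposal is correct, and in fact the paper does not supply its own proof of this lemma at all: it is quoted verbatim as \cite[Proposition~5.10]{vershynin2010introduction} and used as a black box. The argument you outline --- MGF bound for a single centered sub-Gaussian variable via Taylor expansion (plus the large-$|\lambda|$ regime handled through the equivalent sub-Gaussian characterizations), tensorization over the independent weighted summands, and Chernoff optimization with a final union bound absorbed into the $e^1$ prefactor --- is exactly the standard proof found in Vershynin's text, so there is nothing to compare against beyond noting that you have reproduced the cited source's argument.
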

By Lemma~\ref{lem:large_dev_Gaussian} and the equality $\bbE[y_i \ba_i] = \lambda \bx^*$, we arrive at the following lemma, which is similar to Lemma~\ref{lem:useful_dev}.
\begin{lemma}{\em \hspace{1sp}\cite[Lemma~3]{zhang2014efficient}}\label{lem:useful_dev_Gaussian}
 With probability at least $1-e^{1-t}$, we have
 \begin{equation}
  \left\|\frac{1}{m}\bA^T \by  -\lambda \bx^*\right\|_\infty \le c\sqrt{\frac{t + \log n}{m}}
 \end{equation}
for a certain constant $c >0$.
\end{lemma}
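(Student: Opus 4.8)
The plan is to reuse, essentially verbatim, the architecture of the proof of Lemma~\ref{lem:useful_dev}, but with the sub-exponential Bernstein-type bound (Lemma~\ref{lem:large_dev}) replaced by the sub-Gaussian Hoeffding-type bound (Lemma~\ref{lem:large_dev_Gaussian}). The latter is available here precisely because the binary responses $y_i \in \{-1,1\}$ are bounded, so the relevant summands are sub-Gaussian rather than merely sub-exponential. Throughout, recall that $\lambda = \mu = \bbE[\theta(g)g]$, so it suffices to bound $\|\frac{1}{m}\bA^T\by - \mu\bx^*\|_\infty$.

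First I would fix a coordinate $j \in [n]$ and write the $j$-th entry of $\frac{1}{m}\bA^T\by - \mu\bx^*$ as $W_j = \frac{1}{m}\sum_{i=1}^m W_{ij}$, where $W_{ij} := a_{ij}y_i - \mu x_j^*$. The identity \eqref{eq:useful_exp}, namely $\bbE[y_i\ba_i] = \mu\bx^*$, gives $\bbE[a_{ij}y_i] = \mu x_j^*$ and hence $\bbE[W_{ij}] = 0$. For the sub-Gaussian bound, the key observation is that even though $y_i$ and $\ba_i$ are dependent, $|y_i| = 1$ implies $|a_{ij}y_i|^p = |a_{ij}|^p$ for every $p \ge 1$, so $\|a_{ij}y_i\|_{\psi_2} = \|a_{ij}\|_{\psi_2} \le C$; subtracting the bounded deterministic constant $\mu x_j^*$ (bounded since $\mu \le C$ and $|x_j^*| \le 1$) preserves this, yielding $K := \max_{i}\|W_{ij}\|_{\psi_2} \le C'$ for an absolute constant $C'$.

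Next I would apply Lemma~\ref{lem:large_dev_Gaussian} to the independent zero-mean sub-Gaussian variables $\{W_{ij}\}_{i\in[m]}$ with $\balpha = \bone$, so that $\|\balpha\|_2^2 = m$. Taking the deviation level to be $m\epsilon$ gives
\begin{equation}
 \bbP(|W_j| \ge \epsilon) = \bbP\Big(\Big|\sum_{i=1}^m W_{ij}\Big| \ge m\epsilon\Big) \le \exp\Big(1 - \frac{c\, m \epsilon^2}{K^2}\Big).
\end{equation}
Choosing $\epsilon = c\sqrt{\frac{t+\log n}{m}}$ with the constant $c$ large enough makes the exponent at most $1 - (t + \log n)$, and a union bound over $j \in [n]$ then yields
\begin{equation}
 \bbP\Big(\Big\|\tfrac{1}{m}\bA^T\by - \mu\bx^*\Big\|_\infty \ge \epsilon\Big) \le n\exp\big(1 - t - \log n\big) = e^{1-t},
\end{equation}
which is the claimed bound.

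There is no genuine obstacle here; the only point requiring care is the dependence between $y_i$ and $\ba_i$ when establishing that $a_{ij}y_i$ is sub-Gaussian, and this is resolved cleanly by the boundedness $|y_i| = 1$. This is exactly the feature that makes the binary setting simpler than the general sub-Gaussian-$y_i$ setting of Lemma~\ref{lem:useful_dev}, where one is forced to work with sub-exponential products and the Bernstein-type inequality of Lemma~\ref{lem:large_dev}.
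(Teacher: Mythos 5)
Your proposal is correct and matches the paper's intended argument: the paper obtains this lemma (citing \cite[Lemma~3]{zhang2014efficient}) precisely by combining the Hoeffding-type inequality of Lemma~\ref{lem:large_dev_Gaussian} with the identity $\bbE[y_i \ba_i] = \lambda \bx^*$ from \eqref{eq:useful_exp}, mirroring the coordinate-wise decomposition and union bound used for Lemma~\ref{lem:useful_dev}. Your observation that $|y_i|=1$ makes $a_{ij}y_i$ sub-Gaussian despite the dependence between $y_i$ and $\ba_i$ (and removes any requirement like $m = \Omega(t+\log n)$) is exactly the point that distinguishes this binary case from the sub-exponential setting.
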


The following lemma is proved similarly to Lemma~\ref{lem:varU}, so the details are omitted.
\begin{lemma}\label{lem:varU_Gaussian}
 For any $\bu \in \bbR^n$, the random variable $U:=\left\langle \frac{1}{m}\bA^T \by - \lambda \bx^*, \bu \right\rangle$ is sub-Gaussian with zero mean. Moreover, for any $\xi >0$, with probability $1-e^{-\Omega(\xi^2)}$, we have
 \begin{equation}
  |U| \le \frac{\xi \|\bu\|_2}{\sqrt{m}}.
 \end{equation}
\end{lemma}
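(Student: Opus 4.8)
The plan is to mirror the proof of Lemma~\ref{lem:varU}, replacing the sub-exponential machinery by its sub-Gaussian counterpart, since for binary observations the relevant products are (bounded)$\times$(Gaussian) and hence sub-Gaussian rather than merely sub-exponential. The $\bu=\mathbf{0}$ case is trivial, so assume $\bu\ne\mathbf{0}$. I would first expand
$$U = \left\langle \tfrac{1}{m}\bA^T\by - \mu\bx^*, \bu\right\rangle = \frac{1}{m}\sum_{i=1}^m\big(y_i\langle\ba_i,\bu\rangle - \mu\langle\bx^*,\bu\rangle\big) =: \frac{1}{m}\sum_{i=1}^m V_i,$$
where I write $\mu$ for the parameter denoted $\lambda$ in the statement, which by \eqref{eq:useful_exp} satisfies $\bbE[y_i\ba_i]=\mu\bx^*$. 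Because the pairs $(\ba_i,y_i)$ are i.i.d., so are the $V_i$, and the whole argument reduces to showing that the $V_i$ are zero-mean sub-Gaussian with $\psi_2$-norm $O(\|\bu\|_2)$ and then invoking the Hoeffding-type inequality of Lemma~\ref{lem:large_dev_Gaussian}.

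For the mean, \eqref{eq:useful_exp} gives $\bbE[y_i\langle\ba_i,\bu\rangle] = \langle\bbE[y_i\ba_i],\bu\rangle = \mu\langle\bx^*,\bu\rangle$, so $\bbE[V_i]=0$ and hence $\bbE[U]=0$. For sub-Gaussianity, I would use that $y_i\in\{-1,1\}$ forces $|y_i\langle\ba_i,\bu\rangle| = |\langle\ba_i,\bu\rangle|$ pointwise, so $\|y_i\langle\ba_i,\bu\rangle\|_{\psi_2} = \|\langle\ba_i,\bu\rangle\|_{\psi_2}$; since $\langle\ba_i,\bu\rangle\sim\calN(0,\|\bu\|_2^2)$ this is $O(\|\bu\|_2)$. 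Subtracting the constant $\mu\langle\bx^*,\bu\rangle$, whose magnitude is at most $|\mu|\,\|\bu\|_2\le\|\bu\|_2$ (as $|\mu|\le 1$ and $\|\bx^*\|_2=1$), preserves the sub-Gaussian norm up to a constant factor by the triangle inequality, giving $\|V_i\|_{\psi_2}\le C\|\bu\|_2$. Being a normalized sum of independent zero-mean sub-Gaussian variables, $U$ is itself zero-mean sub-Gaussian.

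Finally I would apply Lemma~\ref{lem:large_dev_Gaussian} with coefficients $\alpha_i=1/m$, so that $\|\balpha\|_2^2 = 1/m$ and $K=\max_i\|V_i\|_{\psi_2}\le C\|\bu\|_2$, obtaining $\bbP(|U|\ge t)\le \exp\big(1 - \frac{c m t^2}{C^2\|\bu\|_2^2}\big)$. Setting $t = \frac{\xi\|\bu\|_2}{\sqrt m}$ makes the exponent $1-\Omega(\xi^2)$, which yields $|U|\le \frac{\xi\|\bu\|_2}{\sqrt m}$ with probability $1-e^{-\Omega(\xi^2)}$, as claimed. No step here is genuinely hard; the only points requiring care are the exact centering that makes $\bbE[V_i]=0$ (which hinges on the identity \eqref{eq:useful_exp}, itself a consequence of the tower property and $\bbE[y_i\mid\ba_i]=\theta(\ba_i^T\bx^*)$) and the observation that multiplying a Gaussian by the bounded factor $y_i$ leaves it sub-Gaussian, so that the sharper Hoeffding bound (rather than the Bernstein bound used for general sub-exponential $y_i$ in Lemma~\ref{lem:varU}) applies.
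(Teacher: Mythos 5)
Your proposal is correct and follows exactly the route the paper intends: the paper omits the details of Lemma~\ref{lem:varU_Gaussian} precisely because it is the sub-Gaussian analogue of Lemma~\ref{lem:varU}, with the centering supplied by $\bbE[y_i\ba_i]=\mu\bx^*$ from \eqref{eq:useful_exp} and the Bernstein bound replaced by the Hoeffding-type inequality of Lemma~\ref{lem:large_dev_Gaussian}, which is also why (as your argument implicitly shows) no $m=\Omega(\xi^2)$ condition is needed here. Your two points of care --- the exact centering via the tower property and the observation that multiplying a Gaussian by the bounded $y_i\in\{-1,1\}$ preserves the $\psi_2$-norm --- are precisely the substantive steps.
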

Finally, based on Lemmas~\ref{lem:useful_dev_Gaussian} and~\ref{lem:varU_Gaussian}, and by using a chain of nets similarly to \eqref{eq:net_size}--\eqref{eq:net_decomp}, we derive the following analog of Lemma~\ref{lem:bxStarbxHat}, whose proof is again omitted due to similarity. Note that Lemmas~\ref{lem:useful_dev_Gaussian} and~\ref{lem:varU_Gaussian} are only used to derive Lemma~\ref{lem:bxStarbxHat_1bit}, and they are not directly used in the proof of Theorem~\ref{thm:generative_binary}. 
\begin{lemma}\label{lem:bxStarbxHat_1bit}
 For any $\delta >0$, if $Lr = \Omega(\delta n)$ and $m = \Omega\big(k \log\frac{Lr}{\delta}\big)$, then with probability $1-e^{-\Omega(k \log\frac{Lr}{\delta})}$, it holds that
 \begin{equation}
  \left\langle \frac{1}{m}\bA^T \by - \lambda \bx^*, \hat{\bx}-\bx^* \right\rangle \le O\left(\sqrt{\frac{k \log \frac{Lr}{\delta}}{m}}\right) \|\bx^* -\hat{\bx}\|_2 + O\left(\delta \sqrt{\frac{k \log \frac{Lr}{\delta}}{m}}\right).
 \end{equation}
\end{lemma}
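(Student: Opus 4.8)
The plan is to run the chaining argument that proves Lemma~\ref{lem:bxStarbxHat} essentially verbatim, with two substitutions: the centered vector $\frac{1}{m}\bA^T\by - \lambda\bx^*$ takes the place of $\frac{1}{m}\bA^T(\bar{\by}-\mu\bA\bar{\bx})$, and the sub-exponential concentration inputs are replaced by their sub-Gaussian counterparts. The two structural facts I rely on are: (i) $\bbE[\frac{1}{m}\bA^T\by] = \lambda\bx^*$ (cf.~\eqref{eq:useful_exp} and the zero-mean claim in Lemma~\ref{lem:varU_Gaussian}), so $\frac{1}{m}\bA^T\by - \lambda\bx^*$ is genuinely centered, exactly as $\frac{1}{m}\bA^T(\bar{\by}-\mu\bA\bar{\bx})$ was; and (ii) since $y_i \in \{-1,1\}$ is bounded, the linear functionals of this vector are sub-Gaussian (not merely sub-exponential), so Lemma~\ref{lem:varU_Gaussian} and Lemma~\ref{lem:useful_dev_Gaussian} supply precisely the one-vector deviation bound and the $\ell_\infty$ bound required. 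I would begin by setting $l = \lceil\log_2 n\rceil$ and constructing the same chain of nets $M_0 \subseteq M_1 \subseteq \cdots \subseteq M_l$ of $B_2^k(r)$, where $M_i$ is a $\tfrac{\delta_i}{L}$-net at scale $\delta_i = \delta/2^i$ with $\log|M_i| \le k\log\frac{4Lr}{\delta_i}$, so that $G(M_i)$ is a $\delta_i$-net of $\calK$; writing $\hat{\bx} = \hat{\bx}_0 + \sum_{i=1}^l(\hat{\bx}_i - \hat{\bx}_{i-1}) + (\hat{\bx}-\hat{\bx}_l)$ with $\hat{\bx}_i \in G(M_i)$ then gives $\|\hat{\bx}-\hat{\bx}_0\|_2 < 2\delta$ and $\|\hat{\bx}_i - \hat{\bx}_{i-1}\|_2 \le \delta/2^{i-1}$.

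I would then split the target quantity $\langle \frac{1}{m}\bA^T\by - \lambda\bx^*,\ \hat{\bx}-\bx^*\rangle$ into the three pieces $\langle\cdot,\hat{\bx}_0 - \bx^*\rangle$, $\sum_{i=1}^l\langle\cdot,\hat{\bx}_i-\hat{\bx}_{i-1}\rangle$, and $\langle\cdot,\hat{\bx}-\hat{\bx}_l\rangle$; here $\bx^*$ is a fixed reference point (it lies in $\calK$, but that is irrelevant), so it never needs netting, just as $\mu\bar{\bx}$ did not in Lemma~\ref{lem:bxStarbxHat}. For the first piece I apply Lemma~\ref{lem:varU_Gaussian} with $\bu = \bt - \bx^*$ and level $\xi = \Theta(\sqrt{k\log(Lr/\delta)})$, union-bound over the $|G(M_0)| = e^{O(k\log(Lr/\delta))}$ points $\bt$, and use $\|\hat{\bx}_0 - \bx^*\|_2 \le \|\hat{\bx}-\bx^*\|_2 + 2\delta$ to get a bound $O(\sqrt{k\log(Lr/\delta)/m})\,(\|\hat{\bx}-\bx^*\|_2 + \delta)$.

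For the chain sum I apply Lemma~\ref{lem:varU_Gaussian} to each increment $\bu = \bt_i - \bt_{i-1}$ with $\xi_i = \Theta(\sqrt{ik + k\log(Lr/\delta)})$, union-bounding over all pairs $(\bt_{i-1},\bt_i) \in G(M_{i-1}) \times G(M_i)$; since $\log(|M_{i-1}||M_i|) \le 2ik + 2k\log\frac{4Lr}{\delta}$, the total failure probability telescopes to $e^{-\Omega(k\log(Lr/\delta))}$, and substituting $\|\hat{\bx}_i - \hat{\bx}_{i-1}\|_2 \le \delta/2^{i-1}$ together with $\sqrt{a+b}\le\sqrt a+\sqrt b$ leaves the convergent series $\sum_i(\sqrt i + \sqrt{\log(Lr/\delta)})/2^{i-1}$, yielding $O(\delta\sqrt{k\log(Lr/\delta)/m})$. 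For the tail piece I use H\"older, $\langle\cdot,\hat{\bx}-\hat{\bx}_l\rangle \le \|\frac{1}{m}\bA^T\by - \lambda\bx^*\|_\infty\,\sqrt n\,\|\hat{\bx}-\hat{\bx}_l\|_2$, control the $\ell_\infty$ norm via Lemma~\ref{lem:useful_dev_Gaussian} with $t = \Omega(k\log(Lr/\delta))$, and use $\|\hat{\bx}-\hat{\bx}_l\|_2 \le \delta/2^l$ with $l = \lceil\log_2 n\rceil$ to reach $O(\delta\sqrt{k\log(Lr/\delta)/m})$ again. Summing the three pieces and intersecting the high-probability events gives the claim.

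The calculations are routine once the template is fixed, so the only genuinely delicate point — as in the original lemma — is calibrating the chaining: the pairwise net log-cardinalities $\log(|M_{i-1}||M_i|) = \Theta(ik)$ grow linearly in $i$ and must be dominated by $\xi_i^2$, yet the resulting contribution $\sum_i \xi_i\|\hat{\bx}_i - \hat{\bx}_{i-1}\|_2/\sqrt m$ must stay $O(\delta\sqrt{k\log(Lr/\delta)/m})$; it is the geometric decay $\delta/2^{i-1}$ of the increments that makes both demands compatible. The hypothesis $Lr = \Omega(\delta n)$ is used exactly once, to absorb the $\sqrt n$ produced by H\"older in the tail term (through $l=\lceil\log_2 n\rceil$) into the common order and to ensure $m = \Omega(k(l+\log\frac{Lr}{\delta})) = \Omega(k\log\frac{Lr}{\delta})$, so no additional sample budget is needed to cover the chain.
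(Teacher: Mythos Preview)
Your proposal is correct and matches the paper's approach exactly: the paper states that Lemma~\ref{lem:bxStarbxHat_1bit} follows from Lemmas~\ref{lem:useful_dev_Gaussian} and~\ref{lem:varU_Gaussian} via the same chain-of-nets argument as Lemma~\ref{lem:bxStarbxHat}, and omits the details due to similarity. Your write-up faithfully executes that template, with the only (harmless) difference being that the sub-Gaussian Lemma~\ref{lem:varU_Gaussian} does not actually carry the side condition $m = \Omega(\xi^2)$ that its sub-exponential counterpart does, so the sample-size bookkeeping you mention for the chain levels is even looser than you indicate.
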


\vspace*{-1ex}
\subsection{Proof Outline for Theorem \ref{thm:generative_binary}} \label{app:pf_binary}
\vspace*{-1ex}

Because  $\hat{\bx}$ maximizes $\tilde{\by}^T(\bA \bx)$ within $\calK$ and we assume $\bx^* \in \calK$, we obtain
 \begin{equation}
  \tilde{\by}^T (\bA \hat{\bx}) \ge  \tilde{\by}^T (\bA \bx^*),
 \end{equation}
which gives the following after some simple manipulations:
\begin{equation}\label{eq:imp_eq}
 \langle \mu \bx^*, \bx^* -\hat{\bx}\rangle \le \left\langle \frac{1}{m} \bA^T \tilde{\by} - \mu \bx^*, \hat{\bx} - \bx^* \right\rangle.
\end{equation}
Using $\|\hat{\bx}\|_2 \le 1$ and $\|\bx^*\|_2 =1$, we derive a lower bound for $\langle \mu \bx^*, \bx^* -\hat{\bx}\rangle$, i.e., 
\begin{equation}\label{eq:1-bitLB}
      \frac{\mu}{2}\|\hat{\bx}-\bx^*\|_2^2 \le \langle \mu \bx^*, \bx^* -\hat{\bx}\rangle.
\end{equation}
Once this result is in place, the analysis proceeds similarly to that of Theorem~\ref{thm:generative}: Similar to~\eqref{eq:main_first_term}, we derive an upper bound for the adversarial noise term, and using Lemma~\ref{lem:bxStarbxHat_1bit} (which is similar to Lemma~\ref{lem:bxStarbxHat}) to derive the following analog of \eqref{eq:beUsefulForUnion}:
\begin{equation}\label{eq:1-bitUB}
     \left\langle \frac{1}{m}\bA^T \tilde{\by}-\mu \bx^*, \hat{\bx}-\bx^*\right\rangle \le \left(\tau + \sqrt{\frac{k \log \frac{Lr}{\delta}}{m}}\right) \|\bx^* - \hat{\bx}\|_2 + O\left(\tau \delta + \delta \sqrt{\frac{k \log \frac{Lr}{\delta}}{m}}\right). 
\end{equation}
Combining~\eqref{eq:1-bitLB} and~\eqref{eq:1-bitUB}, and using similar steps to those following~\eqref{eq:non_unif_final} in the proof of Theorem~\ref{thm:generative}, we derive the desired upper bound for $\|\bx^* - \hat{\bx}\|_2$.   The details are omitted to avoid repetition. 

\vspace*{-1ex}
\section{Relation to the Gaussian Mean Width}\label{app:relateGMW}
\vspace*{-1ex}

The (global) Gaussian mean width (GMW) of a set $\calK$ is defined as
\begin{equation}
 \omega (\calK) := \bbE \left[\sup_{\bx \in \calK - \calK} \langle \bg,\bx\rangle\right], 
\end{equation}
where $\calK-\calK := \{\bs - \bt \,:\, \bs \in \calK, \bt \in \calK\}$ and $\bg \sim \calN(\bzero,\bI_n)$. The GMW of $\calK$ is a geometric parameter, and is useful for understanding the effective dimension of $\calK$ in estimation problems. In various related works such as~\cite{plan2017high,plan2016generalized}, the sample complexity derived depends directly on the GMW or its local variants.  For example, if $\calK \subseteq \bbR^n$ is compact and star shaped, then  by \cite[Eq.~(2.1)]{plan2017high}, $m = O\big(\frac{\omega(\calK)^2}{\epsilon^4}\big)$ measurements suffice for $\epsilon$-accurate recovery.

According to~\cite{plan2012robust}, the GMW satisfies the following properties:
\begin{enumerate}[leftmargin=5ex,itemsep=0ex,topsep=0.25ex]
 \item If $\calK = B_2^n$ or $\calK = \calS^{n-1}$, then $\omega(\calK) = \bbE [\|\bg\|_2] \le \left(\bbE \left[\|\bg\|_2^2\right]\right)^{1/2} = \sqrt{n}$;
 \item If $\calK$ is a finite set contained in $B_2^n$, then $\omega(\calK) \le C\sqrt{\log |\calK|}$.
\end{enumerate}
Using these observations, we obtain the following lemma. 
\begin{lemma}\label{lem:ub_gaussianWidth}
 Fix $r > 0$, and let $G$ be an $L$-Lipschitz generative model with $Lr =\Omega(1)$, and let $\calK = G(B_2^k(r)) \subseteq B_2^n$. Then, we have
 \begin{equation}
  \omega(\calK)^2 = \Theta\left(k \log \frac{Lr\sqrt{n}}{\sqrt{k}}\right).
 \end{equation}
\end{lemma}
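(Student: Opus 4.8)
The plan is to prove the two matching bounds $\omega(\calK)^2 = O\big(k\log\tfrac{Lr\sqrt n}{\sqrt k}\big)$ and $\omega(\calK)^2 = \Omega\big(k\log\tfrac{Lr\sqrt n}{\sqrt k}\big)$ separately, using the two stated properties of the GMW together with the covering-number bound for $\calK$ already exploited elsewhere in the paper (namely, for any $\delta>0$ there is a $\delta$-net $\calM$ of $\calK$ with $\log|\calM|\le O(k\log\tfrac{Lr}{\delta})$).

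For the upper bound I would run a one-step net argument. Fix $\delta>0$ and a $\delta$-net $\calM\subseteq\calK$ of $\calK$. For any $\bx_1,\bx_2\in\calK$ with nearest net points $\bs_1,\bs_2\in\calM$, write $\bx_1-\bx_2=(\bs_1-\bs_2)+\br$ with $\|\br\|_2\le 2\delta$, so that $\langle\bg,\bx_1-\bx_2\rangle\le \sup_{\bs,\bs'\in\calM}\langle\bg,\bs-\bs'\rangle+2\delta\|\bg\|_2$. Taking the supremum over $\calK-\calK$ and then expectations gives $\omega(\calK)\le\omega(\calM)+2\delta\,\bbE[\|\bg\|_2]$. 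Property~1 yields $\bbE[\|\bg\|_2]\le\sqrt n$, and Property~2 applied to the finite set $\calM\subseteq B_2^n$ yields $\omega(\calM)\le C\sqrt{\log|\calM|}\le C'\sqrt{k\log\tfrac{Lr}{\delta}}$. Choosing $\delta=\sqrt{k/n}$ balances the two contributions and gives $\omega(\calK)\le C''\sqrt{k\log\tfrac{Lr\sqrt n}{\sqrt k}}$, where the residual term $2\delta\sqrt n=2\sqrt k$ is absorbed into the logarithmic term because $Lr\sqrt n/\sqrt k=\Omega(1)$ forces $\log\tfrac{Lr\sqrt n}{\sqrt k}$ to be at least a positive constant.

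For the lower bound I would bound $\omega(\calK)$ below by the GMW of a well-chosen finite subset and invoke Sudakov minoration. Since $\calK'\subseteq\calK$ implies $\calK'-\calK'\subseteq\calK-\calK$ and hence $\omega(\calK')\le\omega(\calK)$, it suffices to produce a set $\calK'=\{G(\bz_1),\dots,G(\bz_M)\}$ consisting of images of a net of $B_2^k(r)$ that is pairwise $\Delta$-separated in $\bbR^n$. For such a set one has $\omega(\calK')\ge \bbE\big[\max_i\langle\bg,G(\bz_i)\rangle\big]\ge c\,\Delta\sqrt{\log M}$ by the standard Sudakov lower bound for maxima of correlated Gaussians. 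Taking $M=\exp\big(\Theta(k\log\tfrac{Lr\sqrt n}{\sqrt k})\big)$ points at constant separation $\Delta=\Theta(1)$ then gives $\omega(\calK)^2\ge c^2\log M=\Omega\big(k\log\tfrac{Lr\sqrt n}{\sqrt k}\big)$, matching the upper bound.

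The main obstacle is the lower bound, for two linked reasons. First, the claimed $\Theta(\cdot)$ cannot hold for \emph{every} $L$-Lipschitz $G$ (a constant $G$ has $\omega(\calK)=0$), so the statement must be read as the maximal/worst-case behaviour, i.e.\ for a generative model whose range is genuinely $k$-dimensional and spread out across $B_2^n$; I would make this precise by exhibiting a concrete such $G$, e.g.\ a piecewise-linear interpolant defined on a $\delta_0$-net of $B_2^k(r)$. Second, realizing the finite configuration is the delicate step: one must place $M=\exp\big(\Theta(k\log\tfrac{Lr\sqrt n}{\sqrt k})\big)$ points in $B_2^n$ that are $\Delta$-separated while respecting the $L$-Lipschitz constraint across adjacent domain-net points. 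I would discharge this by checking that the packing number of $B_2^n$ at the chosen scale exceeds $M$ in the relevant regime (where $k\log\tfrac{Lr\sqrt n}{\sqrt k}\lesssim n$, which holds for the neural-network models of interest with $L=n^{\Theta(d)}$) and that the per-step Lipschitz budget $L\delta_0$ is large enough to route the images apart; the extension of $G$ off the net is then routine Lipschitz interpolation.
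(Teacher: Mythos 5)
Your upper bound is exactly the paper's argument: the same $\frac{\delta}{L}$-net of $B_2^k(r)$ pushed through $G$ to give a $\delta$-net of $\calK$ with $\log|\calM| \le k\log\frac{4Lr}{\delta}$, the same two GMW properties (comparison to the net plus $2\delta\,\bbE[\|\bg\|_2]$, and the $C\sqrt{\log|\calM|}$ bound for finite subsets of $B_2^n$), and the same balancing choice $\delta = \sqrt{k/n}$, with $Lr = \Omega(1)$ used to absorb the residual $2\sqrt{k}$ into the logarithmic term. No issues there.

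The lower bound is where you depart from the paper, and your instinct is sound, but your bookkeeping has a genuine gap. The paper disposes of this direction in one line (``by a similar argument, $\omega(\calK) \ge C\sqrt{k\log\frac{4Lr}{\delta}} - 2\delta\sqrt{n}$''), which implicitly assumes that the net $G(\calM)$ in the \emph{range} has full cardinality $\exp\big(\Theta\big(k\log\frac{Lr}{\delta}\big)\big)$ and is separated enough for a Sudakov-type minoration; you are right that this fails for, e.g., constant $G$, so the $\Theta(\cdot)$ can only be read as worst-case over $G$, and your Sudakov route is the correct repair in spirit. However, your concrete plan --- $M = \exp\big(\Theta\big(k\log\frac{Lr\sqrt{n}}{\sqrt{k}}\big)\big)$ points at separation $\Delta = \Theta(1)$ --- is blocked by a constraint you did not check: any $\Delta$-separated subset of $G(B_2^k(r))$ pulls back under the $L$-Lipschitz map to a $(\Delta/L)$-separated subset of $B_2^k(r)$, so its cardinality is at most $\big(1 + \frac{4Lr}{\Delta}\big)^k = \exp\big(O(k\log Lr)\big)$ for constant $\Delta$, regardless of how much room $B_2^n$ offers. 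Your own net-based construction with $\delta_0 = \Delta/L$ delivers exactly $\log M = \Theta(k \log Lr)$, not $\Theta\big(k\log\frac{Lr\sqrt{n}}{\sqrt{k}}\big)$; these differ whenever $\log Lr = o\big(\log\sqrt{n/k}\big)$. And in that regime no construction can close the gap: Dudley's entropy integral with $\log \calN^*(\calK,\delta) \le k\log\frac{cLr}{\delta}$ gives $\omega(\calK) = O(Lr\sqrt{k})$ for \emph{every} admissible $G$, which is strictly below the claimed $\sqrt{k\log(n/k)}$ when $Lr = O(1)$ and $n/k \to \infty$ --- i.e., the lemma's two-sided claim itself only holds when $\log Lr = \Theta\big(\log\frac{Lr\sqrt{n}}{\sqrt{k}}\big)$, e.g., $Lr \ge (n/k)^{c}$, which covers the neural-network regime $L = n^{\Theta(d)}$ that the paper emphasizes. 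Within that regime your Sudakov construction works and is a rigorous improvement on the paper's one-line lower bound; outside it, neither your argument nor the paper's goes through.
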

\begin{proof}
 As we stated in \eqref{eq:net_size}, for any $\delta > 0$, there exists a set $M \subseteq B_2^k(r)$ being a $\frac{\delta}{L}$-net of $B_2^k(r)$ with $\log |M| \le k\log\frac{4Lr}{\delta}$, and $G(M)$ is a $\delta$-net of $\calK$. For any $\bx \in \calK - \calK$, there exists $\bs \in G(M)-G(M)$ with $\|\bx-\bs\|_2\le 2\delta$; hence,
 \begin{equation}
  \langle \bg,\bx \rangle \le \langle \bg,\bs \rangle + \|\bg\|_2 \|\bx-\bs\|_2 \le \langle \bg,\bs \rangle + 2\delta\|\bg\|_2.
\end{equation}
As a result, we have
\begin{align}
 \omega(\calK)  & = \bbE \left[\sup_{\bx \in \calK -\calK} \langle \bg,\bx \rangle\right] \\
 & \le \omega(G(M)) +2\delta \bbE [\|\bg\|_2] \\
 & \le C\sqrt{ k \log \frac{4Lr}{\delta}} + 2\delta \sqrt{n}.
\end{align}
By a similar argument, we also have
\begin{equation}
 \omega(\calK) \ge C\sqrt{ k \log \frac{4Lr}{\delta}} - 2\delta \sqrt{n}.
\end{equation}
Setting $\delta = \sqrt{\frac{k}{n}}$ and applying the assumption $Lr = \Omega(1)$, we obtain the desired result.
\end{proof}

We emphasize that the above analysis assumes that $G(B_2^k(r)) \subseteq B_2^n$, and in the absence of such an assumption, the Gaussian mean width $\omega(\calK)$ will generally grow linearly with the radius.

Returning to the sample complexity $m = O\big(\frac{k}{\epsilon^2} \log \frac{Lr}{\psi \epsilon} \big)$ in Theorem \ref{thm:generative}, we find that this reduces to $m = O\big(\frac{\omega(\calK)^2}{\epsilon^2}\big)$ in broad scaling regimes.  For instance, this is the case when $\psi$ is constant, $Lr = n^{\Omega(1)}$ (as is typical for neural networks \cite{bora2017compressed}), and $\epsilon$ decays no faster than polynomially in $n$.

\section{Local Embedding Property (LEP) for the 1-bit Model} \label{app:lep}

For $\bv,\bv' \in \bbR^m$, let $\rmd_{\rm H}(\bv,\bv') := \frac{1}{m}\sum_{i=1}^m \boldsymbol{1} \{ v_i \ne v'_i \}$ denote the (normalized) Hamming distance. Note that when $f(x) = \mathrm{sign}(x)$, we obtain $\mu = \bbE[f(g)g]=\sqrt{\frac{2}{\pi}}$ and $\psi =1$. We have the following lemma, which essentially states that for all $\bx,\bs \in \calS^{n-1}$, if $\bx$ is close to $\bs$ in $\ell_2$-norm, then $\mathrm{sign}(\bA\bx)$ is close to $\mathrm{sign}(\bA\bs)$ in Hamming distance. 
\begin{lemma}{\em \hspace{1sp} (Adapted from~\cite[Corollary~2]{liu2020sample})}\label{lem:main_noiseless}
     For fixed $\epsilon \in (0,1)$, if $m = \Omega\big(\frac{k}{\epsilon}\log \frac{L r}{\mu \epsilon^2}\big)$, with probability $1-e^{-\Omega(\epsilon m)}$, for all $\bx_1,\bx_2 \in \calS^{n-1}$ with $\mu_1\bx_1, \mu_2\bx_2 \in \calK$, where $\mu_1,\mu_2 = \Theta(\mu)$, it holds that 
     \begin{equation}\label{eq:unif_1bit}
         \|\bx_1-\bx_2\|_2 \le \epsilon \Rightarrow \rmd_\rmH(\mathrm{sign}(\bA\bx_1),\mathrm{sign}(\bA\bx_2)) \le O(\epsilon).
     \end{equation}
\end{lemma}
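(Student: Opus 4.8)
The plan is to split the argument into a per-pair estimate and a uniformization step. Fix a pair $\bx_1,\bx_2 \in \calS^{n-1}$ with $\|\bx_1-\bx_2\|_2 \le \epsilon$. Each row $\ba_i \sim \calN(\mathbf{0},\bI_n)$ produces a sign disagreement exactly when the hyperplane normal to $\ba_i$ separates $\bx_1$ from $\bx_2$, so the indicators $\mathbf{1}\{\mathrm{sign}(\langle\ba_i,\bx_1\rangle)\ne \mathrm{sign}(\langle\ba_i,\bx_2\rangle)\}$ are i.i.d.\ $\Ber(p)$ with $p = \frac{1}{\pi}\arccos\langle\bx_1,\bx_2\rangle$; hence $\bbE[\rmd_\rmH(\mathrm{sign}(\bA\bx_1),\mathrm{sign}(\bA\bx_2))] = p$. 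The elementary identity $\|\bx_1-\bx_2\|_2 = 2\sin\!\big(\tfrac12\arccos\langle\bx_1,\bx_2\rangle\big)$ combined with Jordan's inequality $\sin(t/2)\ge t/\pi$ gives $p \le \tfrac12\|\bx_1-\bx_2\|_2 \le \tfrac{\epsilon}{2}$, so the expected Hamming distance is already $O(\epsilon)$.

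First I would establish concentration for this fixed pair. Since the mean $p$ is itself of order $\epsilon$, the correct tool is a \emph{multiplicative} Chernoff bound rather than an additive Hoeffding-type bound: for $m$ i.i.d.\ $\Ber(p)$ variables with $pm \le \tfrac{\epsilon m}{2}$, one gets $\bbP(\rmd_\rmH \ge C\epsilon) \le e^{-\Omega(\epsilon m)}$ for a suitable absolute constant $C$. This is exactly what produces the $\frac{1}{\epsilon}$ (rather than $\frac{1}{\epsilon^2}$) scaling and the $e^{-\Omega(\epsilon m)}$ failure probability in the statement, because balancing this tail against the cardinality of a net will only require $\epsilon m = \Omega\big(k\log\frac{Lr}{\delta}\big)$.

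The hard part is upgrading the fixed-pair bound to one holding simultaneously over all admissible pairs, and here the discontinuity of $\mathrm{sign}(\cdot)$ is the central difficulty: moving $\bx$ by $\delta$ inside $\calK$ can flip the sign of $\langle\ba_i,\bx\rangle$ on a fraction of rows that a crude union bound cannot control. To handle this I would use the net furnished by the generative model — as in \eqref{eq:net_size}, there is a $\frac{\delta}{L}$-net $M$ of $B_2^k(r)$ with $\log|M| \le k\log\frac{4Lr}{\delta}$, so that $G(M)$ is a $\delta$-net of $\calK$ of the same cardinality \cite{vershynin2010introduction} — taken at the fine resolution $\delta = \Theta(\mu\epsilon^2)$ so that discretization flips contribute only $O(\epsilon)$ to the Hamming distance; this choice reproduces the logarithmic factor $\log\frac{Lr}{\mu\epsilon^2}$. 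Controlling these boundary flips uniformly is precisely the binary stable-embedding estimate of \cite{liu2020sample}, and invoking \cite[Corollary~2]{liu2020sample} yields, under $m = \Omega\big(\frac{k}{\epsilon}\log\frac{Lr}{\mu\epsilon^2}\big)$ and with probability $1-e^{-\Omega(\epsilon m)}$, a uniform bound of the form $\rmd_\rmH(\mathrm{sign}(\bA\bx_1),\mathrm{sign}(\bA\bx_2)) \le \frac{1}{\pi}\arccos\langle\bx_1,\bx_2\rangle + O(\epsilon)$ over all pairs with $\mu_1\bx_1,\mu_2\bx_2\in\calK$. Substituting the geometric estimate $\frac{1}{\pi}\arccos\langle\bx_1,\bx_2\rangle \le \tfrac12\|\bx_1-\bx_2\|_2 \le \tfrac{\epsilon}{2}$ then gives $\rmd_\rmH \le O(\epsilon)$ whenever $\|\bx_1-\bx_2\|_2 \le \epsilon$. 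The only remaining bookkeeping is to check that allowing $\mu_1,\mu_2 = \Theta(\mu)$ rather than $\mu_1=\mu_2=\mu$ merely rescales the latent-ball radius by a constant factor, leaving the net size and the embedding constants unaffected.
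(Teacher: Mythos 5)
Your proposal is correct and matches the paper's treatment: the paper gives no self-contained proof of this lemma, instead citing it as an adaptation of \cite[Corollary~2]{liu2020sample}, which is exactly the result you invoke for the uniformization step (your per-pair geodesic/Chernoff analysis and the $\delta = \Theta(\mu\epsilon^2)$-net at cardinality $k\log\frac{4Lr}{\delta}$ are faithful to how that cited corollary is itself proved, so they are consistent motivation rather than a different route). Your closing observation — that the $\mu_1,\mu_2 = \Theta(\mu)$ scalings only perturb the effective net resolution by constant factors, which is precisely what produces the $\mu$ inside the $\log\frac{Lr}{\mu\epsilon^2}$ factor — is the entire content of the paper's ``adapted from'' qualifier.
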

Note that each entry of $|\mathrm{sign}(\bA\bx_1) - \mathrm{sign}(\bA\bx_2)|$ is either $2$ or $0$. Hence, if~\eqref{eq:unif_1bit} is satisfied, we have 
\begin{align}
 & \frac{1}{\sqrt{m}} \|\mathrm{sign}(\bA\bx_1) - \mathrm{sign}(\bA\bx_2)\|_2 = 2 \sqrt{\rmd_{\rmH}(\mathrm{sign}(\bA\bx_1),\mathrm{sign}(\bA\bx_2))} \le O(\sqrt{\epsilon}).
\end{align}
That is, setting $\beta = \frac{1}{2}$, we have that $f(x) = \mathrm{sign}(x)$ satisfies Assumption \ref{as:lep} in Section \ref{sec:uniform} with $\funcM(\delta,\beta) =O \big(\frac{k}{\delta}\log \frac{L r}{\mu\delta^2}\big)$ and $\funcP(\delta,\beta) = 1-e^{-\Omega(\delta m)}$.

\vspace*{-1ex}
\section{Proof of Theorem \ref{thm:generative_uniform} (Uniform Recovery)} \label{app:pf_uniform}
\vspace*{-1ex}

We briefly repeat the argument at the start of the proof of Lemma \ref{lem:bxStarbxHat}:  For fixed $\delta \in (0,1)$ and a positive integer $l$, let $M = M_0 \subseteq M_1 \subseteq \ldots \subseteq M_l$ be a chain of nets of $B_2^k(r)$ such that $M_i$ is a $\frac{\delta_i}{L}$-net with $\delta_i = \frac{\delta}{2^i}$.  There exists such a chain of nets with 
    \begin{equation}
        \log |M_i| \le k \log\frac{4Lr}{\delta_i}. \label{eq:net_size2}
    \end{equation}
    By the $L$-Lipschitz assumption on $G$, we have for any $i \in [l]$ that $G(M_i)$ is a $\delta_i$-net of $G(B_2^k(r))$. 
We write $\mu\bx^*$ and $\hat{\bx}$ as 
\begin{align}
\mu\bx^*  &= (\mu\bx^* - \mu\bx^*_l) + (\mu\bx^*_l - \mu\bx^*_{l-1}) + \ldots + (\mu\bx^*_1 - \mu\bx^*_0) + \mu\bx^*_0, \\
 \hat{\bx} &= (\hat{\bx} - \hat{\bx}_l) + (\hat{\bx}_l - \hat{\bx}_{l-1}) + \ldots + (\hat{\bx}_1 - \hat{\bx}_0) + \hat{\bx}_0, 
\end{align}
where $\hat{\bx}_i, \mu\bx^*_i\in G(M_i)$ for all $i \in [l]$, and $\|\hat{\bx}- \hat{\bx}_l\|_2 \le \frac{\delta}{2^l}$, $\|\mu\bx^*- \mu\bx^*_l\|_2 \le \frac{\delta}{2^l}$, and $\|\hat{\bx}_i - \hat{\bx}_{i-1}\|_2 \le \frac{\delta}{2^{i-1}}$, $\|\mu\bx^*_i - \mu\bx^*_{i-1}\|_2 \le \frac{\delta}{2^{i-1}}$ for all $i \in [l]$. Therefore, the triangle inequality gives
    \begin{equation}\label{eq:hatbxbxstar}
        \|\hat{\bx}-\hat{\bx}_0\|_2 < 2\delta, \quad \|\mu\bx^*-\mu\bx^*_0\|_2 < 2\delta.
    \end{equation}
    In analogy with \eqref{eq:three_terms}, we write
    \begin{align}
     & \left\langle \frac{1}{m} \bA^T (\tilde{\by} - \mu\bA \bx^*), \hat{\bx} - \mu\bx^* \right\rangle \nonumber \\
     & = \left\langle \frac{1}{m} \bA^T (\tilde{\by} - \by), \hat{\bx} - \mu\bx^* \right\rangle + \left\langle \frac{1}{m} \bA^T \left(\by - f\left(\bA\frac{\bx^*_0}{\|\bx^*_0\|_2}\right)\right), \hat{\bx} - \mu\bx^* \right\rangle \nonumber \\
     & + \left\langle \frac{1}{m} \bA^T \left(f\left(\bA\frac{\bx^*_0}{\|\bx^*_0\|_2}\right) - \mu\bA\frac{\bx^*_0}{\|\bx^*_0\|_2}\right), \hat{\bx} - \mu\bx^* \right\rangle + \left\langle \frac{1}{m}\bA^T\mu\bA \left(\frac{\bx^*_0}{\|\bx^*_0\|_2} - \bx^*\right), \hat{\bx} - \mu\bx^* \right\rangle \label{eq:four_terms}
    \end{align}
and proceed by deriving uniform upper bounds for the four terms in~\eqref{eq:four_terms} separately. In the following, we assume that $m = \Omega\left(k \log \frac{Lr}{\delta}\right)$; we will later choose $\delta$ such that this reduces to $m = \Omega\left(k \log \frac{Lr}{\epsilon}\right)$, as in the theorem statement.

\begin{enumerate}[leftmargin=5ex,itemsep=0ex,topsep=0.25ex]
 \item \underline{A uniform upper bound for $\big\langle \frac{1}{m} \bA^T (\tilde{\by} - \by), \hat{\bx} - \mu\bx^* \big\rangle$}: Recall that from~\eqref{eq:main_first_term}, we have
 \begin{align}
     \left\langle \frac{1}{m} \bA^T (\by - \tilde{\by}), \hat{\bx} - \mu\bx^* \right\rangle &\le \left\|\frac{1}{\sqrt{m}}(\by - \tilde{\by})\right\|_2 \times \left\|\frac{1}{\sqrt{m}} \bA (\hat{\bx} - \mu\bx^*)\right\|_2 \\ 
     & \le \tau O(\|\hat{\bx} - \mu\bx^*\|_2 + \delta).\label{eq:uniform_first_term}
    \end{align}
    This inequality holds uniformly for all $\hat{\bx}, \mu \bx^* \in \calK$, since it is based on the uniform result in Lemma \ref{lem:boraSREC_gen}.
    
    \item \underline{A uniform upper bound for $\big\langle \frac{1}{m} \bA^T \left(\by - f\left(\bA\frac{\bx^*_0}{\|\bx^*_0\|_2}\right)\right), \hat{\bx} - \mu\bx^* \big\rangle$}: From~\eqref{eq:hatbxbxstar}, we have $\|\bx^* - \bx^*_0\|_2 \le \frac{2\delta}{\mu}$. Because $\|\bx^*\|_2 =1$ and $\|\bx^* - \bx^*_0\|_2 \ge \left|\|\bx^*_0\|_2-\|\bx^*\|_2\right|$, we obtain 
    \begin{equation}
     \left\|\bx^*_0 - \frac{\bx^*_0}{\|\bx^*_0\|_2}\right\|_2 = \left\|\frac{\bx^*_0 (\|\bx^*_0\|_2 -1)}{\|\bx^*_0\|_2}\right\|_2 \le \big| \|\bx_0^*\|_2 - 1 \big| \le \frac{2\delta}{\mu}, \label{eq:norm_bound0}
    \end{equation}
    and the triangle inequality gives
    \begin{equation}
     \left\|\bx^* - \frac{\bx^*_0}{\|\bx^*_0\|_2} \right\|_2\le \frac{4\delta}{\mu}. \label{eq:x*_bound}
    \end{equation}
    If we choose $\delta \le c' \mu$ for sufficiently small $c'$, then we obtain $\|\bx_0^*\|_2 \in [1-\eta_0,1+\eta_0]$ for arbitrarily small $\eta_0$, which implies that $c\frac{\bx^*_0}{\|\bx^*_0\|_2} \in \calK$ for some $c \in [\mu-\eta,\mu+\eta]$ and arbitrarily small $\eta > 0$ (since $\mu\bx^*_0 \in \calK$ and $\mu = \Theta(1)$). 
    Hence, considering Assumption \ref{as:lep}, we observe that the high-probability LEP condition \eqref{eq:lep} therein (along with $\mu = \Theta(1)$) implies
    \begin{equation}
     \frac{1}{\sqrt{m}} \left\|\by - f\left(\bA\frac{\bx^*_0}{\|\bx^*_0\|_2}\right)\right\|_2 = \frac{1}{\sqrt{m}} \left\|f(\bA \bx^*) - f\left(\bA\frac{\bx^*_0}{\|\bx^*_0\|_2}\right)\right\|_2 \le O\left(\delta^\beta\right),\label{eq:footnote3}
    \end{equation}
    Then, similarly to the derivation of~\eqref{eq:uniform_first_term}, we have that if $m \ge \funcM(\delta,\beta) + \Omega\left( k \log\frac{Lr}{\delta}\right)$, then with probability $1-\funcP(\delta,\beta)-e^{-\Omega(m)}$, 
    \begin{align}
     & \left\langle \frac{1}{m} \bA^T \left(\by - f\left(\bA\frac{\bx^*_0}{\|\bx^*_0\|_2}\right)\right), \hat{\bx} - \mu\bx^* \right\rangle \nn \\
     &\quad \le \left\|\frac{1}{\sqrt{m}}\left(\by-f\left(\bA\frac{\bx^*_0}{\|\bx^*_0\|_2}\right)\right)\right\|_2 \times \left\|\frac{1}{\sqrt{m}} \bA (\hat{\bx} - \mu\bx^*)\right\|_2 \\ 
     &\quad \le O(\delta^\beta) \times O(\|\hat{\bx} - \mu \bx^*\|_2 + \delta) \\
     &\quad = O(\delta^\beta\|\hat{\bx} - \mu\bx^*\|_2 +\delta^{\beta +1}).\label{eq:uniform_second_term}
    \end{align}
    
    \item \underline{A uniform upper bound for $\big\langle \frac{1}{m} \bA^T \left(f\left(\bA\frac{\bx^*_0}{\|\bx^*_0\|_2}\right) - \mu\bA \frac{\bx^*_0}{\|\bx^*_0\|_2}\right), \hat{\bx} - \mu\bx^* \big\rangle$}:
    For brevity, let $\bs_0 = \frac{1}{m} \bA^T \left(f\left(\bA\frac{\bx^*_0}{\|\bx^*_0\|_2}\right) - \mu\bA \frac{\bx^*_0}{\|\bx^*_0\|_2}\right)$. We have
    \begin{align}
      \left\langle \bs_0, \hat{\bx} - \mu\bx^* \right\rangle  = \left\langle \bs_0, \hat{\bx} - \mu\frac{\bx^*_0}{\|\bx^*_0\|_2} \right\rangle +  \left\langle \bs_0, \mu\left(\frac{\bx^*_0}{\|\bx^*_0\|_2} - \bx^*\right) \right\rangle.\label{eq:total_three_terms}
    \end{align}
    By Lemma~\ref{lem:bxStarbxHat} and the union bound over $G(M)$ (for $\bx^*_0$), we obtain with probability $1- |M|e^{-\Omega(k\log\frac{Lr}{\delta})} = 1- e^{-\Omega(k\log\frac{Lr}{\delta})}$ that
    \begin{align}
     \left\langle \bs_0, \hat{\bx} - \mu\frac{\bx^*_0}{\|\bx^*_0\|_2} \right\rangle &\le O\left(\sqrt{\frac{k\log\frac{Lr}{\delta}}{m}} \right) \left\|\hat{\bx} - \mu\frac{\bx^*_0}{\|\bx^*_0\|_2}\right\|_2 + O\left(\delta\sqrt{\frac{k\log\frac{Lr}{\delta}}{m}}\right) \\
     & \le O\left(\sqrt{\frac{k\log\frac{Lr}{\delta}}{m}}\right) (\|\hat{\bx} - \mu\bx^*\|_2 + 4\delta) + O\left(\delta\sqrt{\frac{k\log\frac{Lr}{\delta}}{m}}\right) \label{eq:total_three_terms_one0} \\
     & = O\left(\sqrt{\frac{k\log\frac{Lr}{\delta}}{m}}\right) \|\hat{\bx} - \mu\bx^*\|_2 + O\left(\delta\sqrt{\frac{k\log\frac{Lr}{\delta}}{m}}\right), \label{eq:total_three_terms_one}
    \end{align}
    where \eqref{eq:total_three_terms_one0} follows from the triangle inequality and \eqref{eq:x*_bound}.
    In addition, we have 
    \begin{align}
      & \left\langle  \bs_0, \mu\left(\frac{\bx^*_0}{\|\bx^*_0\|_2} - \bx^*\right) \right\rangle \nn \\
      & = \left\langle \bs_0, \mu\left(\frac{\bx^*_0}{\|\bx^*_0\|_2} - \bx^*_0\right) \right\rangle + \left\langle \bs_0, \mu(\bx_l^* - \bx^*) \right\rangle + \sum_{i=1}^l \left\langle \bs_0, \mu(\bx_{i-1}^* - \bx^*_i) \right\rangle.\label{eq:total_three_terms_two}
    \end{align}
    Then, by Lemma~\ref{lem:varU} and the union bound over $G(M)$ (for $\bx^*_0$), we obtain with probability $1- e^{-\Omega(k\log\frac{Lr}{\delta})}$ that
    \begin{align}
     \left\langle \bs_0, \mu\left(\frac{\bx^*_0}{\|\bx^*_0\|_2} - \bx^*_0\right) \right\rangle \le O\left(\sqrt{\frac{k\log\frac{Lr}{\delta}}{m}}\right) \mu \left\|\frac{\bx^*_0}{\|\bx^*_0\|_2} - \bx^*_0\right\|_2 \le O\left(\delta\sqrt{\frac{k\log\frac{Lr}{\delta}}{m}}\right),\label{eq:another_three_one}
    \end{align}
    where the last inequality uses \eqref{eq:norm_bound0}.
    Similar to that in the proof of Lemma~\ref{lem:bxStarbxHat}, we set $l = \lceil \log_2 n \rceil$. By~\eqref{eq:third_term}, the union bound over $G(M)$ (for $\bx^*_0$), and the assumption $\psi = \Theta(1)$, we obtain with probability $1- e^{-\Omega(k\log\frac{Lr}{\delta})}$ that
    \begin{equation}
     \left\langle \bs_0, \mu(\bx_l^* - \bx^*) \right\rangle \le O\left(\delta\sqrt{\frac{k\log\frac{Lr}{\delta}}{m}}\right).\label{eq:another_three_two}
    \end{equation}
    In addition, by~\eqref{eq:second_term} and a union bound over both $G(M)$ and over $G(M_{i-1}) \times G(M_i)$ for all $i \in [l]$, we obtain with probability $1- e^{-\Omega(k\log\frac{Lr}{\delta})}$ that
    \begin{equation}
     \sum_{i=1}^l \left\langle \bs_0, \mu(\bx_{i-1}^* - \bx^*_i) \right\rangle \le O\left(\delta\sqrt{\frac{k\log\frac{Lr}{\delta}}{m}}\right).\label{eq:another_three_three}
    \end{equation}
    Substituting \eqref{eq:total_three_terms_one}--\eqref{eq:another_three_three} into \eqref{eq:total_three_terms}, we obtain
    \begin{equation}
     \left\langle \bs_0, \hat{\bx} - \mu\bx^* \right\rangle \le  O\left(\delta+\sqrt{\frac{k\log\frac{Lr}{\delta}}{m}}\right) \|\hat{\bx} - \mu\bx^*\|_2 +O\left(\delta\sqrt{\frac{k\log\frac{Lr}{\delta}}{m}} \right). \label{eq:uniform_third_term}
    \end{equation}

    \item \underline{A uniform upper bound for $\big\langle \frac{1}{m}\bA^T\mu\bA\left(\frac{\bx^*_0}{\|\bx^*_0\|_2} - \bx^*\right), \hat{\bx} - \mu\bx^* \big\rangle$}: From Lemma~\ref{lem:boraSREC_gen}, we have that when $m = \Omega\left(k \log 
\frac{Lr}{\delta}\right)$, with probability $1-e^{-\Omega(m)}$,
    \begin{align}
    &\left\langle \frac{1}{m}\bA^T\mu\bA\left(\frac{\bx^*_0}{\|\bx^*_0\|_2} - \bx^*\right), \hat{\bx} - \mu\bx^* \right\rangle \nn \\
    & \quad\le \left\|\frac{1}{\sqrt{m}}\mu\bA \left(\frac{\bx^*_0}{\|\bx^*_0\|_2} - \bx^*\right)\right\|_2
     \left\|\frac{1}{\sqrt{m}}\bA (\hat{\bx} - \mu \bx^*)\right\|_2 \\
     &\quad \le O(\delta) O(\|\hat{\bx} - \mu\bx^*\|_2 + \delta) = O\left(\delta \|\hat{\bx} - \mu\bx^*\|_2 + \delta^2\right).\label{eq:uniform_fourth_term}
    \end{align}
\end{enumerate}
Having bounded the four terms, we now substitute \eqref{eq:uniform_first_term},~\eqref{eq:uniform_second_term},~\eqref{eq:uniform_third_term}, and~\eqref{eq:uniform_fourth_term} into \eqref{eq:four_terms}, and deduce that if $m \ge \funcM(\delta,\beta) + \Omega\big(k \log \frac{Lr}{\delta}\big)$, then with probability at least $1-e^{-\Omega(m)}-\funcP(\delta,\beta)$, it holds uniformly (in both $\mu\bx^*$ and $\hat{\bx}$) that
\begin{align}
 &\left\langle \frac{1}{m} \bA^T (\by - \mu\bA \bx^*), \hat{\bx} - \mu\bx^* \right\rangle  \nonumber \\
 & \quad\le O\left(\tau + \delta^\beta + \sqrt{\frac{k\log\frac{Lr}{\delta}}{m}}\right)\|\hat{\bx} - \mu\bx^*\|_2 + O\left(\delta \tau + \delta\sqrt{\frac{k\log\frac{Lr}{\delta}}{m}}  + \delta^{1+\beta}\right).
\end{align} 
Then, similarly to~\eqref{eq:non_unif_final}, we derive that if $m \ge \funcM(\delta,\beta) + \Omega\big(k \log \frac{Lr}{\delta}\big)$, then with probability at least $1-e^{-\Omega(m)}-\funcP(\delta,\beta)$, it holds uniformly that
\begin{align}
 \|\mu\bx^* - \hat{\bx}\|_2^2 \le O\left(\tau + \delta^\beta + \sqrt{\frac{k\log\frac{Lr}{\delta}}{m}}\right)\|\hat{\bx} - \mu\bx^*\|_2 + O\left(\delta \tau + \delta\sqrt{\frac{k\log\frac{Lr}{\delta}}{m}}  + \delta^{1+\beta}\right), \label{eq:combined}
\end{align}
where we used the fact that $\delta^\beta + \delta = O(\delta^{\beta})$, since $\beta \le 1$.

Considering the parameter $\epsilon$ in the theorem statement, we now set $\delta = \epsilon^{1/\beta}$ (i.e., $\epsilon = \delta^\beta$), meaning that the previous requirement $m = \Omega\left(\frac{k}{\epsilon^2}\log\frac{Lr}{\delta}\right)$ reduces to  $m = \Omega\left(\frac{k}{\epsilon^2}\log\frac{Lr}{\epsilon^{1/\beta}}\right) =\Omega\left(\frac{k}{\epsilon^2}\log\frac{Lr}{\epsilon}\right)$. In addition, $\sqrt{\frac{k\log\frac{Lr}{\delta}}{m}} = O(\epsilon)$. Since $\epsilon \le 1$ and $\beta \le 1$, we have
\begin{align}
 & O\left(\tau + \delta^\beta + \sqrt{\frac{k\log\frac{Lr}{\delta}}{m}}\right)\|\hat{\bx} - \mu\bx^*\|_2 + O\left(\delta \tau + \delta\sqrt{\frac{k\log\frac{Lr}{\delta}}{m}}  + \delta^{1+\beta}\right) \nonumber \\
 & \quad = O(\tau + \epsilon) \|\hat{\bx} - \mu\bx^*\|_2 + O\left(\epsilon^{1/\beta} \tau + \epsilon^{1+ 1/\beta}\right) \\
 & \quad = O(\tau + \epsilon) \|\hat{\bx} - \mu\bx^*\|_2 + O\left((\epsilon+ \tau)^2\right). \label{eq:final_two}
\end{align}
Substituting into \eqref{eq:combined} and considering two cases depending on which term in \eqref{eq:final_two} is larger, we obtain that if $m \ge \funcM(\epsilon^{1/\beta},\beta) + \Omega\big(\frac{k}{\epsilon^2} \log \frac{Lr}{\epsilon}\big)$, then with probability at least $1-e^{-\Omega(m)}-\funcP(\epsilon^{1/\beta},\beta)$, it holds uniformly that
\begin{equation}
 \|\mu\bx^* - \hat{\bx}\|_2 \le O(\tau+\epsilon).
\end{equation}

\end{document}